\documentclass[11pt]{article}

\usepackage{fullpage}

\usepackage[T1]{fontenc}
\usepackage[utf8]{inputenc}
\usepackage{lmodern}
\usepackage{microtype}
\usepackage{amsmath,mathtools}
\usepackage{amsfonts,amssymb}
\usepackage{amsthm,thmtools}
\usepackage{bm}
\usepackage{bbm}
\usepackage{accents}

\usepackage{graphicx}
\usepackage{tikz}
\AtBeginEnvironment{tikzpicture}{%
  \renewcommand{\small}{\fontsize{9}{11}\selectfont}%
  \renewcommand{\scriptsize}{\fontsize{7}{8}\selectfont}%
}
\usepackage{xcolor}
\usepackage{booktabs}
\usepackage{array}
\usepackage{tabularx}
\usepackage{threeparttable}
\usepackage{caption}
\usepackage{subcaption}
\usepackage{makecell}
\usepackage{pifont}

\usepackage{enumitem}

\usepackage{algorithm}
\usepackage{algpseudocode}
\usepackage[round]{natbib}
\usepackage[colorlinks=true,citecolor=blue,linkcolor=red,urlcolor=magenta]{hyperref}
\usepackage[capitalise,noabbrev]{cleveref}

\newtheorem{theorem}{Theorem}
\newtheorem{lemma}{Lemma}
\newtheorem{proposition}{Proposition}
\newtheorem{corollary}{Corollary}

\theoremstyle{definition}

\newtheorem{assumption}{Assumption}

\theoremstyle{remark}

\crefname{assumption}{assumption}{assumptions}
\Crefname{assumption}{Assumption}{Assumptions}

\newcommand{\R}{\mathbb{R}}
\renewcommand{\P}{\mathbb{P}}
\newcommand{\E}{\mathbb{E}}
\newcommand{\ind}[1]{\mathbf{1}\{#1\}}
\newcommand{\norm}[1]{\lVert#1\rVert}
\newcommand{\KL}{D_{\mathrm{KL}}}
\newcommand{\TV}{\operatorname{TV}}
\newcommand{\Law}{\operatorname{Law}}
\newcommand{\Bern}{\operatorname{Bern}}
\newcommand{\Geom}{\operatorname{Geom}}
\DeclareMathOperator*{\argmin}{arg\,min}
\DeclareMathOperator*{\argmax}{arg\,max}
\newcommand{\IA}{\mathrm{IA}}
\newcommand{\WC}{\mathrm{WC}}
\newcommand{\SEPT}{\mathrm{SEPT}}
\def\eps{{\epsilon}}
\newcommand{\wtO}{\widetilde O}
\newcommand{\Unif}{\mathrm{Unif}}
\def\eqref#1{equation~\ref{#1}}

\title{Nonpreemptive Scheduling While Learning Context-Dependent Service Rates}

\author{
Wansoo Choi$^{1}$ \quad Seoungbin Bae$^{2}$ \quad Dabeen Lee$^{1}$\\[0.3em]
$^{1}$Department of Mathematical Sciences, Seoul National University\\
$^{2}$Department of Industrial \& Systems Engineering, KAIST\\
\texttt{choiws0624@snu.ac.kr, sbbae31@kaist.ac.kr, dabeenl@snu.ac.kr}
}

\date{}

\begin{document}
\maketitle

\begin{abstract}
We study nonpreemptive contextual queueing bandits in a single-server system. Each job is represented by a $d$-dimensional context vector; in each round, a job may arrive with its context drawn from an unknown distribution $\mathcal D$, and its departure probability is determined by a logistic model of that context vector with an unknown parameter $\theta^*$. The server learns from service outcomes while deciding which waiting job to serve and whether to idle, aiming to minimize queue-length regret, the gap between its expected terminal queue length and the minimum achievable by an admissible policy. Once selected, a job must be served until completion, and we refer to this as the nonpreemptive setting. A central challenge is that, even with full model knowledge, the optimal policy cannot in general be characterized by a simple myopic rule, since the optimal action can change with the remaining horizon at the same queue state. Nevertheless, when the model and horizon are known, the optimal action can be obtained through a finite-horizon Bellman recursion. Motivated by this, we propose Learn--Clear--Plan (LCP), which estimates the system and uses the resulting Bellman recursion to make horizon-dependent decisions. LCP achieves $\wtO(\sqrt{d/T})$ queue-length regret, while a lower-bound construction gives $\Omega(\min\{1/\sqrt d,\sqrt{d/T}\})$ regret for every learning policy on some instance, establishing optimality up to polylogarithmic factors when $T\ge d^2$. When the horizon is unknown, no horizon-independent policy achieves vanishing regret against the finite-horizon optimum. We therefore use SEPT, the policy that serves a waiting job with the highest probability of departure, as a fixed reference, and suggest an estimated-SEPT algorithm that achieves a tracking error of $\wtO(\sqrt{d/t})$ without knowing the model.

\end{abstract}

\section{Introduction}

Queueing systems are central to cloud computing, online service platforms, communication networks, and other applications in which heterogeneous jobs compete for limited service capacity \citep{neely2010stochastic}. In many such systems, a job's completion probability depends on observable features, while the corresponding service model is initially unknown. A scheduler must therefore learn context-dependent service rates from departure feedback while simultaneously controlling congestion. Queueing bandits formalize this learning-while-scheduling problem, and queue-length regret measures the excess number of unfinished jobs relative to an optimal policy with full knowledge of the service model \citep{krishnasamy2021learning,stahlbuhk2021learning}.

Contextual queueing bandits extend this framework to job-specific features. Under the assumption that the contexts of arriving jobs are drawn independently from a fixed distribution, \citet{bae2026queue} introduce a preemptive contextual queueing bandit model, in which the scheduler may reconsider the entire queue and select a different job--server pair in the next round after an unsuccessful service attempt. They obtain $\wtO(T^{-1/4})$ queue-length regret in this setting. For the same stochastic-context setting, \citet{bae2026algorithm} subsequently propose the CQB-$\eta$-2 algorithm and improve the upper bound to $\wtO(T^{-1/2})$. They further show that every learning algorithm incurs $\Omega(T^{-1/2})$ regret on some instance, showing that this dependence on $T$ is optimal up to logarithmic factors.

This preemptive abstraction facilitates modeling bandit feedback, characterizing optimal policies, and analyzing regret, but it excludes systems in which an ongoing computation, repair, or communication session cannot be interrupted, or in which preemption and migration are prohibitively costly. To capture such service commitments, we study a contextual queueing bandit model with a \emph{nonpreemptive setting}, where once a job starts service, the server must continue processing it until completion. This change substantially complicates optimal scheduling. In the preemptive setting, an optimal policy is \emph{work-conserving}, i.e., not idle, and selects an available job--server pair with the highest departure probability \citep{bae2026queue}. However, under nonpreemption, the optimal action can depend on the current queue state, remaining horizon, and arriving job's distribution even in the single-server case (see \Cref{sec:challenges-techniques} for a detailed discussion). So, the optimal policy need not admit such a simple myopic characterization as in the preemptive setting, making learning and planning fundamentally intertwined. To study these difficulties in their most basic form, we focus first on the single-server setting. Our main results for known and unknown horizons are summarized in \Cref{tab:main-results}.

\begin{table}
\centering
\caption{Summary of main results. Known/unknown indicates whether the server knows the horizon; IA/WC denote the idling-allowed/work-conserving settings, and SEPT denotes the shortest-expected-processing-time policy. Logarithmic factors and fixed model constants are suppressed.}
\label{tab:main-results}
\small

\begingroup
\setlength{\tabcolsep}{3pt}
\renewcommand{\arraystretch}{1.05}
\renewcommand{\tabularxcolumn}[1]{m{#1}}

\begin{tabularx}{\textwidth}{
|>{\centering\arraybackslash\hsize=.75\hsize}X
|>{\centering\arraybackslash\hsize=1.35\hsize}X
|>{\centering\arraybackslash\hsize=1.15\hsize}X
|>{\centering\arraybackslash\hsize=.75\hsize}X|
}
\hline
\textbf{Setting}
& \textbf{Benchmark / Objective}
& \textbf{Result}
& \textbf{Note} \\
\hline

Known $T$, IA
& Regret against optimal IA
& $\wtO(\sqrt{d/T})$
& \Cref{alg:lcp}, \Cref{thm:known-horizon} \\
\hline

Known $T$, IA/WC
& Regret against optimum
& $\Omega(\min\{d^{-1/2},\sqrt{d/T}\})$
& \Cref{thm:lower-wc} \\
\hline

Unknown, IA/WC
& Vanishing regret against finite-horizon optimum
& Impossible:
$\limsup_{t\to\infty}R_t^{\mathsf S}(\pi)>0$
& \Cref{thm:ia-wc-impossible} \\
\hline

Unknown, WC
& Tracking true SEPT
& $\wtO(\sqrt{d/t})$
& \Cref{alg:anytime}, \Cref{thm:anytime} \\
\hline
\end{tabularx}

\endgroup
\end{table}

\paragraph{Known horizon: Learn--Clear--Plan algorithm and its optimality}
For a known horizon $T$, we compare with the optimal finite-horizon policy that may idle. Our Learn--Clear--Plan (LCP) algorithm estimates the departure probability function of jobs and their departure probability distribution, waits until the queue empties, and then follows an empirical finite-horizon Bellman policy (the policy that follows from the Bellman recursion, which is calculated by using the estimated function and distribution). Under some conditions related to job arrivals, \Cref{thm:known-horizon} shows that it achieves $\wtO(\sqrt{d/T})$ as a regret upper bound. In particular, unlike the preemptive results of \citet{bae2026queue,bae2026algorithm}, this guarantee does not require a lower-eigenvalue bound on the context covariance matrix. We complement this result with the lower bounds in \Cref{thm:lower-wc,thm:lower-ia}: for fixed model and traffic parameters, every learning policy incurs $\Omega(\min\{d^{-1/2}, \sqrt{d/T}\})$ regret against the optimum policy on some instance, regardless of whether idling is permitted or not. Hence, when $T\ge d^2$, our upper bound under fixed horizon is optimal in its dependence on $d$ and $T$ up to logarithms.

\paragraph{Unknown horizon: Impossibility and SEPT tracking.}
When the horizon is unknown, \Cref{thm:ia-wc-impossible} shows that there is no horizon-free policy such that the $t$-th round queue-length regret converges to $0$ as $t\rightarrow \infty$, even when the service model is known. We therefore use SEPT, the policy that serves a waiting job with the highest departure probability whenever the server is free, as a fixed horizon-independent reference policy. Our estimated-SEPT algorithm updates its logistic estimator only between busy periods and follows the resulting priority rule during each busy period. By \Cref{thm:anytime}, its tracking error satisfies $\mathcal T_t^{\SEPT}=\wtO(\sqrt{d/t})$ for fixed model and traffic parameters. Thus, our known-horizon result concerns regret relative to a finite-horizon optimum, whereas our unknown-horizon result concerns tracking a fixed reference policy.

\section{Problem Setting}\label{sec:problem-setting}

We study the nonpreemptive contextual queueing bandit problem in a discrete-time system consisting of a single queue and a single server. Each job is represented by a context vector in $\mathcal X\subseteq\mathbb R^d$, which determines its service-completion probability. We study both a \emph{known-horizon} setting, where the server knows the evaluation horizon $T\in\mathbb N$ and may design its policy accordingly, and an \emph{unknown-horizon} setting, where the server does not know the evaluation horizon and must use a horizon-independent policy (we call this an \emph{anytime} policy). Let $Q_t$ denote the number of unfinished jobs present at the beginning of round $t$, and call it the \emph{queue length} at round $t$. We assume that the system is initially empty, so $Q_1=0$. A key feature of our model is the \emph{nonpreemptive} setting. This means that once the server selects a job, it must continue serving that job in every subsequent round until the job departs or the horizon ends. We call a job that is currently in service a \emph{job in service}, and call all other unfinished jobs \emph{waiting jobs}. The server is called \emph{free} when no job is in service and the queue is not empty, and when it is free, it may start one waiting job or remain \emph{idle}, meaning that it performs no service in that round. If the queue is empty, it idles necessarily.

Each round consists of a \emph{service stage} (\emph{departure process}) followed by an \emph{arrival stage} (\emph{arrival process}). If there is a job in service at the beginning of round $t$, the server must serve that job again. Otherwise, it may start one waiting job of its choice or idle. Here, the server can observe whether the served job is completed. Let $D_t\in\{0,1\}$ be the departure indicator in round $t$, with $D_t=0$ if the server idles. If a job with context $x$ is served in round $t$, its departure indicator has distribution $\Bern(p(x))$, where $p(x):=\mu(x^\top\theta^*)$ is called the \emph{success probability function} and $\mu(z):=(1+e^{-z})^{-1}$. Here, $\theta^*\in\mathbb R^d$ is called the \emph{parameter vector}. Conditional on the contexts, departure indicators are independent across jobs and rounds. Equivalently, if $G(x)$ is the number of service attempts required to complete a job with context $x$, then $G(x)\sim\Geom(p(x))$ and $\E[G(x)]=1/p(x)$ holds. After the service stage, a new job arrives with indicator $A_t\sim\Bern(\lambda)$. Conditional on $A_t=1$, draw $X_t\sim\mathcal D$ from a context distribution $\mathcal{D}$ independently. The arriving job becomes available for service in round $t+1$, and we denote $X_t$ as the context of the job that arrived in round $t$. Then the queue length evolves as $Q_{t+1}=(Q_t-D_t)^++A_t$, where $(q)^+:=\max\{0, q\}$.

Admissible policies may randomize, although the algorithms proposed in this paper are deterministic conditional on the observed history. We consider two problem settings. In the \emph{idling-allowed} (IA) setting, policies are allowed to idle whenever the server is free. In the \emph{work-conserving} (WC) setting, policies must start a waiting job whenever the server is free and the queue is nonempty. We refer to policies admissible in these settings as IA and WC policies, respectively. Thus, every WC policy is also an IA policy. For a horizon $T\in\mathbb N$, denote $J_T^\pi:=\E_\pi[Q_{T+1}]$ as the expected final queue length when the server follows policy $\pi$. Let $V_T^{\IA}$ and $V_T^{\WC}$ denote the minimum expected final queue lengths over all IA and WC policies, respectively, with knowledge of $T$, $\theta^*$, and $\mathcal D$. We define the corresponding regrets as $R_T^{\IA}(\pi):=J_T^\pi-V_T^{\IA}, R_T^{\WC}(\pi):=J_T^\pi-V_T^{\WC}$.
\begin{assumption}
\label{ass:model}
The sequence $(A_t,X_t)_{t\ge1}$ is i.i.d., with $A_t$ independent of $X_t$. The server knows $\lambda\in(0,1)$, $S>0$, and $0<p_-<p_+<1$, while $\theta^*$ and $\mathcal D$ are unknown. Moreover, $\norm{X}_2\le1$ almost surely, $\norm{\theta^*}_2\le S$, $p(X)\in[p_-,p_+]$ almost surely, and $\rho:=\lambda\E_{X\sim \mathcal{D}}\left[1/p(X)]\right]<1$ holds.
\end{assumption}

In \Cref{ass:model}, except for the final condition $\rho<1$, these are nothing but the standard boundedness and stochastic-context assumptions for logistic bandit models \citep{bae2026queue,bae2026algorithm}. The condition $\rho<1$ plays a crucial role in our analysis, as it is a key ingredient in establishing the workload properties in \Cref{lem:workload}, which are repeatedly used throughout the paper. See \Cref{subsec: workload def,app:workload} for its interpretation and detailed use. It also serves as a replacement for the traffic-slack condition used in the prior work.

\section{Main Difficulties and new techniques}
\label{sec:challenges-techniques}
In this section, we highlight the main difficulty arising from the nonpreemptive setting and introduce a new tool, which we call workload, to address this difficulty.

\subsection{Difficulties of nonpreemptive setting}
\label{subsec:difficulties}
A central difficulty in nonpreemptive contextual queueing bandits is the loss of the simple optimal scheduling rule available in the preemptive setting. In the preemptive model of \citet{bae2026queue,bae2026algorithm}, an optimal policy is work-conserving and selects a waiting job with the highest departure probability when the server is free. Under nonpreemption, this myopic rule need not be optimal, as the following counterexamples show. Suppose that a free server has one waiting job with departure probability $p_s$. With one round remaining, serving it immediately is trivially optimal. However, with two rounds remaining, the advantage in expected departures of idling first and then serving the best available job, compared with serving immediately, is at least the following value.
\begin{equation}\label{eq:idle-or-work}
    p_s^2-p_s(1+\lambda)
    +\lambda\left(
        \E\!\left[\max\{p_s,p(X)\}\right]
        -p_s\E[p(X)]
    \right).
\end{equation}
A detailed proof is included in \Cref{app:ia-impossibility}.

We can construct a two-context instance satisfying \Cref{ass:model} such that the quantity (\ref{eq:idle-or-work}) is positive, as shown in \Cref{app:ia-impossibility}. This counterexample shows that a work-conserving policy is not always optimal. Furthermore, even in the WC setting, choosing a job with the highest departure probability is not always optimal, as shown in the following proposition.
\begin{proposition}
\label{prop:wc-reversal}
There is a stable three-context instance with $\lambda=\frac{1}{2}$ and departure probabilities $p_A=\frac{7}{20}$, $p_B=\frac{13}{20}$, and $p_F=\frac{19}{20}$ such that when the server is free and exactly one $A$ job and one $B$ job are waiting, serving $B$ first is the unique optimal action with one round remaining. With fourteen rounds remaining, serving $A$ first is the unique optimal action. This instance satisfies \Cref{ass:model}.
\end{proposition}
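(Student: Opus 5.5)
Our plan is to exhibit the instance explicitly and verify the claims by a finite-horizon Bellman computation. We choose a context distribution $\mathcal D$ supported on three contexts with departure probabilities $p_A,p_B,p_F$; a natural choice is to put most mass on the fast type $F$ (say $\P(F)$ large, with small masses on $A$ and $B$), and we realize the three probabilities by one-dimensional contexts $x_i=\mu^{-1}(p_i)/S$ with $\theta^*=S$ (or any scaling keeping $\norm{x_i}_2\le1$). Stability then reduces to checking $\rho=\frac12\bigl(q_A\frac{20}{7}+q_B\frac{20}{13}+q_F\frac{20}{19}\bigr)<1$, which is immediate once $q_F$ is close to $1$; taking $p_-=p_A$ and $p_+=p_F$ gives the remaining parts of \Cref{ass:model}.

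For the one-round claim, only the current round's service counts, so serving $B$ gives expected departures $p_B=\frac{13}{20}>p_A=\frac{7}{20}$, and idling gives $0$; hence $B$ is the unique optimum. For the fourteen-round claim, we write the WC Bellman recursion over states consisting of the multiset of waiting job types together with the type in service (or free), with value $V_k(\text{state})$ the minimal expected final queue length, equivalently the maximal expected number of departures over $k$ rounds. Since arrivals are at most one per round and types are finitely many, the states reachable within $14$ rounds are finite (counts of each type at most $15$), so the recursion is a finite computation. We then compare the two actions from the state $\{A,B\}$, free, with $14$ rounds to go: commit to $A$ versus commit to $B$. The intuition we expect the numbers to confirm is that, with a long horizon, the slow job $A$ is served while $B$ remains as a fallback; committing to $B$ first means that after $B$ departs, newly arriving fast $F$ jobs will be preferred over $A$, and $A$ tends to be stranded at the horizon, whereas serving $A$ early ``clears'' the awkward job while arrivals accumulate. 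Concretely, we would compute $V_{14}$ under each first action exactly in rational arithmetic and show the difference is strictly positive in favour of $A$.

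The main obstacle is the fourteen-round verification: the state space, though finite, is too large for hand computation, and the sign of the gap depends delicately on the chosen mass vector $(q_A,q_B,q_F)$. We would first tune $(q_A,q_B,q_F)$ (e.g. $q_F$ large, small $q_A,q_B$) numerically so that the gap is comfortably positive, then present the verification as an exact computer-assisted evaluation of the recursion, possibly reducing the state by noting that under nonpreemption each future decision is only among types present, and that an $F$ job is always weakly preferable when one is waiting near the end, to prune the computation. Uniqueness in both cases follows from the strict inequalities obtained.
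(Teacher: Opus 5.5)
Your approach is the same as the paper's. You fix a three-type instance, write the work-conserving Bellman recursion on (waiting counts, type in service), and certify the one-round and fourteen-round comparisons by exact rational computation; the paper's certificate is likewise a script run on that recursion. Your one-round argument and your check of \Cref{ass:model} are fine. The one-dimensional realization works provided $S\ge\log 19$.

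The gap is that the proposal never supplies the object the proof consists of. The only free ingredient is the arrival mix, and you leave $(q_A,q_B,q_F)$ to be ``tuned'' without any argument that a mix with a positive fourteen-round gap exists. Because the margin is tiny, this is not a formality. The paper uses $\nu_A=1/4$, $\nu_B=1/100$, $\nu_F=37/50$, so $\rho=6521/8645$. For this instance $Q_1(A)=43/20>Q_1(B)=37/20$, while $Q_{14}(A)\approx1.6375<Q_{14}(B)\approx1.6396$, a margin of only about $0.002$. Note that $\nu_A=1/4$ is not small, so your suggested search region ($q_A,q_B$ small) does not describe the working instance.

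The heuristic you offer should also not guide the search. Under work conservation the workload is policy-independent, and by memorylessness the expected residual of every job still present equals $1/p_j$. Hence the expected terminal queue length equals the expected terminal workload minus $\E\sum_j(1/p_j-1)$, where the sum runs over jobs still present at the end. A slow $A$ job left in the system at the horizon therefore \emph{lowers} the terminal queue length. ``$A$ stranded at the horizon'' is thus an argument for serving $B$ first, not against it. To complete the proof you must fix a specific $\mathcal D$ and exhibit the exact values of both action values at $h=1$ and $h=14$.
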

The proof of this proposition will be given in \Cref{app:wc-certificate}. Hence, in the nonpreemptive setting, there is no universal horizon-independent priority rule, and we need to know the remaining horizon, the current queue, and the future-arrival distribution to decide which action is optimal in each round. Thus, this explains that determining an optimal action requires solving the finite-horizon Bellman recursion at each round.

\subsection{A New Technique: Workload-Based Arrival Analysis}
\label{subsec: workload def}
In the nonpreemptive setting, the optimal policy depends on the queue state and remaining horizon rather than a simple myopic rule, making departure-based regret analysis difficult. From this viewpoint, we shift our attention to the arrival process, whose law is much simpler and independent of the scheduling policy. For each arriving job with context $X_t$, independently draw its required service time $G(X_t)\sim\Geom(p(X_t))$, and define the \emph{workload} $W_t$ as the sum of the remaining required service times of all jobs present at the beginning of round $t$. Under any WC policy, the following recursion holds.
\begin{equation}
    W_{t+1}=(W_t-1)^+ + A_tG(X_t)
    \label{eq: workload recursion}
\end{equation}
Thus, under a common coupling of arrivals, arrived jobs, and their required service times, the workload is independent of the service policy and order, satisfies $Q_t\le W_t$, and vanishes iff $Q_t=0$. Thus, any two WC queues $Q_t^{(1)}, Q_t^{(2)}$ coupled as above share the same workload process $(W_t)_{t\ge1}$, become empty at the same times, and satisfy $|Q_t^{(1)}-Q_t^{(2)}|\le W_t$ for all $t$. Furthermore, with this definition, $\rho=\lambda\E[1/p(X)]=\lambda\E_{X\sim \mathcal{D}, G(X)\sim \Geom(p(X))}[G(X)]$ is precisely the expected amount of workload brought into the system by arrivals in each round.

We now introduce the following basic workload properties. The constants in the following are formally defined in \Cref{app:workload}, and the properties are proved in \Cref{app:workload-lemma-1-2,app:workload-lemma-3}.
\begin{lemma}\label{lem:workload}
Under \Cref{ass:model}, there exist positive constants $r\in(0,1)$, $\psi(r)\in(0,1)$, $M_G(r)$, $K_r$ and $C_W(r)$ such that the WC workload process started from $W_1=0$ satisfies $\E e^{rW_t}\le K_r$ and $\E W_t\le C_W(r)$ for every $t\ge1$. Moreover, for $u\ge1$, let $\tau_0:=\inf\{j\ge0:W_{u+j}=0\}$. Then,
\begin{equation}\label{eq:workload-tail}
\E\left[
W_{u+h}\ind{\tau_0>h}\,\middle|\,W_u
\right]
\le \frac{e^{rW_u}}{r}\psi(r)^h
\end{equation}
for every $h\ge0$. Hence, $\E[W_{u+h}\ind{\tau_0>h}]\le(K_r/r)\psi(r)^h$. We refer to these two inequalities as the workload tail bounds. Lastly, $\E[W_{t+1}-W_t\mid W_t=w]=\rho-1<0$ for every $w\ge1$, so the workload recursion admits a stationary distribution $W$ that stochastically dominates $W_t$ for every $t\ge1$.
\end{lemma}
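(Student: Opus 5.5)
The approach is a standard Lyapunov (exponential drift) argument for the reflected random walk $W_{t+1}=(W_t-1)^++A_tG(X_t)$, where the increments $\xi_t:=A_tG(X_t)$ are i.i.d. and independent of $W_t$.

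\textbf{Step 1: a moment generating function for the increments.} Since $p(X)\ge p_-$, the variable $G(X)$ is stochastically dominated by $\Geom(p_-)$. Hence the moment generating function
\[
M_G(r):=\E[e^{r\xi_t}]=1-\lambda+\lambda\E\!\left[\frac{p(X)e^r}{1-(1-p(X))e^r}\right]
\]
is finite for all $r<-\log(1-p_-)$. Write $\phi(r):=e^{-r}M_G(r)$. Then $\phi(0)=1$ and $\phi'(0)=\E\xi-1=\rho-1<0$ by \Cref{ass:model}. So there is an $r\in(0,1)$ with $\psi(r):=e^{-r}M_G(r)<1$; we fix that $r$.

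\textbf{Step 2: drift of $e^{rW_t}$.} On $\{W_t\ge1\}$ we have
\[
\E[e^{rW_{t+1}}\mid W_t]=e^{r(W_t-1)}M_G(r)=\psi(r)e^{rW_t}.
\]
On $\{W_t=0\}$ the conditional expectation equals $M_G(r)$. Combining the two cases gives
\[
\E e^{rW_{t+1}}\le \psi(r)\,\E e^{rW_t}+M_G(r).
\]
Iterating from $W_1=0$ yields $\E e^{rW_t}\le 1+M_G(r)/(1-\psi(r))=:K_r$. Since $w\le e^{rw}/r$, we also get $\E W_t\le K_r/r=:C_W(r)$.

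\textbf{Step 3: the tail bound \eqref{eq:workload-tail}.} Consider $Z_h:=e^{rW_{u+h}}\psi(r)^{-h}\ind{\tau_0>h}$. On $\{\tau_0>h\}$ we have $W_{u+h}\ge1$, so the drift identity from Step 2 applies. Also $\ind{\tau_0>h+1}\le\ind{\tau_0>h}$. Together these give
\[
\E[Z_{h+1}\mid\mathcal F_{u+h}]\le Z_h,
\]
so $Z_h$ is a supermartingale. This is the key step; the only care needed is that the indicator is $\mathcal F_{u+h}$-measurable and that conditioning on $W_u$ suffices by the Markov property. It follows that
\[
\E[e^{rW_{u+h}}\ind{\tau_0>h}\mid W_u]\le e^{rW_u}\psi(r)^h.
\]
Applying $w\le e^{rw}/r$ once more gives \eqref{eq:workload-tail}. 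Taking expectations and using Step 2 gives the unconditional bound $(K_r/r)\psi(r)^h$.

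\textbf{Step 4: drift and the stationary distribution.} For $w\ge1$, $\E[W_{t+1}-W_t\mid W_t=w]=-1+\E\xi=\rho-1$. For the stationary law, I would use Lindley's representation. The chain is monotone in its initial state, and $W_t$ started from $0$ has the law of $\max_{0\le k<t}$ of the partial sums of the time-reversed increments $\xi-1$, taken together with $0$ (the reflection at zero via $(\cdot)^+$ matches the Lindley form). These maxima are nondecreasing in $t$ and converge a.s. to a finite limit $W$, because the walk has negative drift and Step 2 gives uniform exponential moments. Hence $W$ is stationary and $W_t\preceq_{\mathrm{st}}W$.

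The main obstacle is mild. It is choosing $r$ so that $M_G(r)<\infty$ and $\psi(r)<1$ at the same time, which is handled by the derivative argument at $0$. The explicit formulas for the constants then simply record that choice.
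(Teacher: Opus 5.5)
Your argument is correct. Steps 1--3 follow the paper's proof: you use the same exponential drift $\E[e^{rW_{t+1}}\mid W_t]=\psi(r)e^{rW_t}$ on $\{W_t\ge1\}$, iterated from $W_1=0$. Your supermartingale $Z_h$ is the paper's one-step recursion for $\E[e^{rW_{u+h}}\ind{\tau_0>h}\mid W_u]$, restated in martingale form.

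Your constants differ from the paper's, but they are legitimate because the lemma only asserts that such constants exist.
\begin{itemize}
\item Your $M_G(r)$ is the moment generating function of $A_tG(X_t)$, which the paper calls $B_r$. The paper reserves $M_G(r)=\E e^{rG(X)}$ for the later busy-period bound, so the names would clash downstream.
\item Your $K_r=1+M_G(r)/(1-\psi(r))$ is a bit larger than the paper's $B_r/(1-\psi(r))$.
\item Your $C_W(r)=K_r/r$ is weaker than the paper's Jensen bound $\frac1r\log K_r$.
\end{itemize}

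The genuine difference is Item 3. The paper gets positive recurrence from the Foster--Lyapunov criterion. It then gets domination by running a zero-started copy and a stationary copy on the same increments, using that $w\mapsto(w-1)^++Z_t$ is nondecreasing. You instead use a Loynes-type construction: the stationary law is the monotone limit of the zero-started marginals. This gives existence, domination, and convergence in law all at once, with no recurrence theory. The cost is that you must check stationarity of the limit, through $\sup_{k\ge0}S_k\stackrel{d}{=}(\xi-1+\sup_{k\ge0}S'_k)^+$; you left this implicit.

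There is one slip to fix. The Lindley form applies to the post-service workload $V_t:=(W_t-1)^+$, which satisfies $V_{t+1}=(V_t+\xi_t-1)^+$ with $V_1=0$. It does not apply to $W_t$ itself: already $W_2=\xi_1\neq\max(0,\xi_1-1)$ whenever $\xi_1\ge1$. Since $W_{t+1}=V_t+\xi_t$ with $\xi_t$ independent of $V_t$, monotonicity and domination carry over to $W_t$ immediately. The stationary workload is then $V_\infty+\xi$.
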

Before concluding this subsection, we introduce the notion of a \emph{busy period}, which will be used later in our SEPT analysis. Under a WC policy, a \emph{busy period} is defined as the sequence of consecutive service rounds that begins when the server first serves a job after the system has become empty and ends in the first round whose service stage leaves the system empty, before the arrival at the end of that round. Let $\tau_{\rm bp}$ denote its length.

\section{Known Horizon: Upper and Lower bounds of Regret}
\label{sec:known}

We now consider the known-horizon IA setting and present the \emph{Learn-Clear-Plan} (LCP) algorithm, which achieves $R_T^{\IA}=\widetilde{O}(\sqrt{d/(\lambda T)})$. Before presenting LCP, we define the estimators used by the algorithm and provide high-probability upper bounds, and introduce the concept of \emph{Bellman policy}.

\subsection{Estimating Departure Probabilities and Their Distribution}

Index jobs by their arrival order. Let $X_i$ be the context of job $i$ and $Y_i$ the departure indicator from its first service attempt. Then $(X_i,Y_i)_{i\ge1}$ is an i.i.d.\ sequence with $Y_i\mid X_i\sim\Bern(p(X_i))$. Using the first $n$ pairs, we estimate $\theta^*$ by $\widehat\theta_n$ through empirical risk minimization, and use this estimate to construct a predictor $\widehat p_n$ of $p$. For $y\in\{0,1\}$, define $\ell_\theta(x,y):=\log(1+e^{x^\top\theta})-yx^\top\theta$, and let
\begin{equation}\label{eq:logistic-erm}
    \widehat\theta_n
    \in
    \argmin_{\norm{\theta}_2\le S}
    \frac1n\sum_{i=1}^n\ell_\theta(X_i,Y_i),
    \qquad
    \widehat p_n(x)
    :=
    \operatorname{proj}_{[p_-,p_+]}
    \!\left(\mu(x^\top\widehat\theta_n)\right).
\end{equation}
We next estimate the distribution of future jobs' departure probabilities. Using the contexts from pairs $(X_i,Y_i)_{i=n+1}^{2n}$ and $\widehat p_n$, we define the empirical distribution $\widehat F_n:=\frac1n\sum_{i=n+1}^{2n}\delta_{\widehat p_n(X_i)}$. This distribution estimates $F_{\widehat p_n}:=\Law(\widehat p_n(X))$, where $X\sim\mathcal D$ is independent of the training block.

To quantify the estimation errors, for any predictor $q:\R^d\to[p_-,p_+]$, define $\mathcal E(q):=\E_{X\sim\mathcal D}|q(X)-p(X)|$ and denote $W_1$ as the 1-Wasserstein distance. Let $\bar S:=\max\{1,S\}$. For $n\ge5$, define $\eps_n(\delta):=\min(1,\,\sqrt{({8e^{\bar S}\left(d\log(32\bar S n)+\log(1/\delta)\right)+1})/{2n}})$ and set $\eps_n(\delta)=1$ for $n<5$. We further define $\eta_n(\delta):=(p_+-p_-)\sqrt{{\log(2/\delta)}/{2n}}$.
\begin{lemma}
\label{lem:estimation-bounds}
For every $n\ge1$ and $\delta\in(0,1)$, each of the following bounds holds with probability at least $1-\delta$.
\begin{equation}\label{eq:estimation-bounds}
\begin{aligned}
    \mathcal E(\widehat p_n)
    \le \eps_n(\delta), \qquad
    W_1(\widehat F_n,F_{\widehat p_n})
    \le \eta_n(\delta).
\end{aligned}
\end{equation}
\end{lemma}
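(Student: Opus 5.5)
For the first bound I would combine a fast-rate excess-risk bound for the constrained logistic ERM with a Pinsker-type step. For the second I would use the Dvoretzky--Kiefer--Wolfowitz (DKW) inequality conditionally on the training block.

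\emph{Reduction to excess risk.} Write $L(\theta):=\E[\ell_\theta(X,Y)]$ with $(X,Y)$ drawn as in the data. For each $x$, $\E[\ell_\theta(x,Y)-\ell_{\theta^*}(x,Y)\mid X=x]=\KL\bigl(\Bern(p(x))\,\|\,\Bern(\mu(x^\top\theta))\bigr)$. Pinsker's inequality bounds this below by $2(p(x)-\mu(x^\top\theta))^2$. Since $p(x)\in[p_-,p_+]$, projecting onto $[p_-,p_+]$ can only decrease $|\mu(x^\top\widehat\theta_n)-p(x)|$. Jensen's inequality then gives
\[
\mathcal E(\widehat p_n)\le\sqrt{\E\bigl(\mu(X^\top\widehat\theta_n)-p(X)\bigr)^2}\le\sqrt{\bigl(L(\widehat\theta_n)-L(\theta^*)\bigr)/2}.
\]
So it suffices to show that, with probability $1-\delta$,
\[
L(\widehat\theta_n)-L(\theta^*)\le\frac{8e^{\bar S}\bigl(d\log(32\bar Sn)+\log(1/\delta)\bigr)+1}{n}.
\]
The $\min(1,\cdot)$ truncation and the convention $\eps_n=1$ for $n<5$ are harmless, because $\mathcal E(q)\le p_+-p_-<1$ always.

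\emph{Fast-rate excess risk (main obstacle).} A plain uniform-convergence argument gives excess risk of order $n^{-1/2}$, which would only yield $\mathcal E=O(n^{-1/4})$. I would therefore use a Bernstein/localization argument.

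First, on $\norm{\theta}_2\le S$ and $\norm{x}_2\le1$, the loss difference $f_\theta:=\ell_\theta-\ell_{\theta^*}$ satisfies $|f_\theta|\le 2\norm{\theta-\theta^*}_2\le 4S$. Since $\mu'(z)\ge \mu'(S)\gtrsim e^{-S}$ for $|z|\le S$, the logistic loss is strongly convex along $x^\top\theta$. This yields the Bernstein condition $\operatorname{Var}(f_\theta)\le\E[(x^\top(\theta-\theta^*))^2]\lesssim e^{\bar S}\,(L(\theta)-L(\theta^*))$, and this is the source of the $e^{\bar S}$ factor.

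Second, I would take an $\epsilon$-net of the ball of radius $S$ with $\epsilon\asymp 1/n$, which has cardinality at most $(3S/\epsilon)^d\le(32\bar Sn)^d$. On each net point I would apply Bernstein's inequality and take a union bound, giving $L(\theta)-L(\theta^*)\le 2(\widehat L(\theta)-\widehat L(\theta^*))+c\,e^{\bar S}\bigl(d\log(32\bar Sn)+\log(1/\delta)\bigr)/n$. The loss is $2$-Lipschitz in $\theta$, so rounding $\widehat\theta_n$ to the net costs $O(1/n)$; this produces the additive $+1$. Finally, $\widehat L(\widehat\theta_n)\le\widehat L(\theta^*)$ because $\theta^*$ is feasible. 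Tracking constants through these steps should give the stated $8e^{\bar S}$; if the constant in the Bernstein condition is off, I would adjust by using the exact self-concordance of the logistic loss.

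\emph{Wasserstein bound.} The pairs with indices $n+1,\dots,2n$ are independent of the first block. Hence, conditional on $\widehat p_n$, the values $\widehat p_n(X_{n+1}),\dots,\widehat p_n(X_{2n})$ are i.i.d.\ with law $F_{\widehat p_n}$, supported in $[p_-,p_+]$. In one dimension,
\[
W_1(\widehat F_n,F_{\widehat p_n})=\int_{p_-}^{p_+}\bigl|\widehat F_n(t)-F_{\widehat p_n}(t)\bigr|\,dt\le(p_+-p_-)\sup_t\bigl|\widehat F_n(t)-F_{\widehat p_n}(t)\bigr|.
\]
The DKW inequality with Massart's constant gives $\sup_t|\widehat F_n-F_{\widehat p_n}|\le\sqrt{\log(2/\delta)/(2n)}$ with conditional probability at least $1-\delta$. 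Integrating over the first block then gives the bound $\eta_n(\delta)$ unconditionally.
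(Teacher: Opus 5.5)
Your reduction (the conditional KL identity, Pinsker, Jensen, the projection remark, the trivial case of small $n$) and your Wasserstein argument (conditional DKW with Massart's constant plus the one-dimensional formula for $W_1$) match the paper. The genuine difference is the fast-rate excess-risk step, which the paper does not prove from scratch. It checks three properties of the logistic loss on the parameter ball: it is $1$-Lipschitz in $\theta$, loss differences are bounded by $2\bar S$, and it is $e^{-\bar S}$-exp-concave, via $\mu'(z)\ge e^{-\bar S}(\mu(z)-y)^2$ for $|z|\le \bar S$. It then applies Theorem~1 of \cite{mehta2017fast} for ERM as a black box, valid for $n\ge5$. You instead derive the fast rate yourself. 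Your Bernstein condition $\E f_\theta^2\le\E[(X^\top(\theta-\theta^*))^2]\le(2/\mu'(S))\,(L(\theta)-L(\theta^*))$ comes from the Bregman/KL form of the excess risk and the curvature bound $\mu'\ge\mu'(S)$ on $[-S,S]$. You follow it with a net, Bernstein's inequality, a union bound, and rounding $\widehat\theta_n$, using feasibility of $\theta^*$. This argument is sound, self-contained, and shows exactly where the $e^{\bar S}$ factor comes from. The paper's citation is shorter, and it is the source of the displayed constants in $\eps_n(\delta)$: the $8e^{\bar S}$, the $\log(32\bar S n)$, the additive $+1$, and the threshold $n\ge5$.

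The one soft spot is your expectation that tracking constants will reproduce $8e^{\bar S}$; your chain does not deliver this automatically. Take the relative deviation $L(\theta)-L(\theta^*)\le(\widehat L(\theta)-\widehat L(\theta^*))+\sqrt{2B\,(L(\theta)-L(\theta^*))\log(1/\delta')/n}+\dots$ with your $B=2/\mu'(S)$. The best coefficient you can extract on $\log(1/\delta')/n$ is $2B=4/\mu'(S)$. At $S=1$ this is about $20.3$, against $8e\approx21.7$, which leaves almost no room for the range term in Bernstein's inequality. As written, your route therefore proves the lemma with $8e^{\bar S}$ replaced by some other absolute multiple of $e^{\bar S}$. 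That is harmless for every downstream $\wtO$ statement. To obtain the displayed $\eps_n(\delta)$ verbatim, you would need either a sharper variance estimate or simply to invoke Mehta's theorem after the exp-concavity check, which is the same one-line $\mu'$ computation you already make.
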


The proofs of the two inequalities are provided in \Cref{app:function-estimation-bounds,app:distribution-estimation-bounds}. We refer to them as concentration inequalities.

\subsection{Bellman Policy for the Known-Horizon IA Setting}
\label{subsec:bellman-policy}

We now explain how to characterize an optimal policy in the known-horizon IA setting through a Bellman recursion when $p$ and $\mathcal D$ are known. We first represent the current state mathematically and then present the corresponding finite-horizon Bellman recursion. We first consider a future-job success-probability distribution $F$ with finite support. Let $r_1,\ldots,r_m$ be the distinct values consisting of the support of $F$ together with the success probabilities of all currently present jobs, and let $\nu_j=F(\{r_j\})$; thus $\nu_j=0$ for a value that appears only among the currently present jobs. Represent the queue at the beginning of a round by a state $(c,a)$, where $c\in\mathbb Z_{\ge0}^m$, $c_j$ is the number of waiting jobs with success probability $r_j$, and $a\in\{0,1,\ldots,m\}$ identifies the job type in service; $a=0$ means that the server is free. The job in service is excluded from $c$ and is represented by $a$. Let $V_h(c,a)$ be the minimum expected terminal queue length with $h$ rounds remaining and starting state $(c, a)$, so that $V_0(c,a)=\sum_j c_j+\ind{a\ne0}$, the number of unfinished jobs. Writing $e_j$ for the $j$-th standard basis vector, define the arrival operator, which averages over the arrival at the end of the current round, by $(\mathcal A f)(c,a)=(1-\lambda)f(c,a)+\lambda\sum_{j=1}^m\nu_j f(c+e_j,a)$.

If $a\ne0$, nonpreemption forces the continuation of service, giving
\begin{equation}\label{eq:active-bellman}
V_h(c,a)
=r_a(\mathcal A V_{h-1})(c,0)
+(1-r_a)(\mathcal A V_{h-1})(c,a).
\end{equation}
If the server is free, it may idle or start a waiting job:
\begin{equation}\label{eq:free-bellman}
\begin{aligned}
&V_h(c,0)\\
&=\min\Bigl\{
(\mathcal A V_{h-1})(c,0),
\min_{j:c_j>0}\bigl[
r_j(\mathcal A V_{h-1})(c-e_j,0)
+(1-r_j)(\mathcal A V_{h-1})(c-e_j,j)
\bigr]\Bigr\}.
\end{aligned}
\end{equation}
Here, the first term in (\ref{eq:free-bellman}) corresponds to idling, and each term in the inner minimum corresponds to starting a type-$j$ job. The inner minimum is omitted when $c=0$. Each term is the expected optimal continuation value resulting from the corresponding action. Hence, if the first term attains the minimum, idling is optimal, while if the term indexed by $j$ reaches the minimum, starting a type-$j$ job is optimal. Repeating such minimizing choices whenever the server is free yields an optimal finite-horizon policy. For a general $F$, an analogous recursion represents the waiting jobs by a multiset of success probabilities and replaces the arrival sum by an integral with respect to $F$.

We call a policy that makes these minimizing choices at every free-server state a \emph{Bellman policy} for the corresponding model and horizon. The \emph{true Bellman policy} based on $(p, F_p)$ uses $p(x)$ as a success probability function and $F_p=\Law(p(X))$, $X\sim\mathcal D$, for future-arrivals success-probability distribution. Thus, a true Bellman policy is an optimal IA policy. In particular, starting from an empty queue with $T$ rounds remaining, it attains $V_T^{\IA}$. Suppose now that the true model is $(p,F_p)$, but the server has only an estimated success-probability function $q$ and an estimated future-job distribution $\widehat F$. The \emph{estimated Bellman policy} based on $(q,\widehat F)$ treats $(q,\widehat F)$ as the true model and follows the minimizing actions of the corresponding finite-horizon Bellman recursion.

\subsection{The Learn-Clear-Plan Algorithm}

We now introduce the \emph{Learn--Clear--Plan} (LCP) algorithm, which first estimates the unknown model, clears the queue, and then follows the estimated Bellman policy based on that estimated model.

\begin{algorithm}
\caption{Learn--Clear--Plan}
\label{alg:lcp}
\begin{algorithmic}[1]
\Require horizon $T$, planning-window length $H<T$
\State $L\gets T-H$, $n\gets\lfloor\lambda L/8\rfloor$
\State $Z\gets()$, $\textsc{Plan}\gets0$
\For{$t=1,\ldots,T$}
    \If{$t>L$, $Q_t=0$, and $\textsc{Plan}=0$}
        \If{$|Z|\ge2n$ and $n\ge1$}
            \State compute $\widehat p_n$ from the first $n$ pairs in $Z$
            \State $ \widehat F_n \gets \frac1n\sum_{i=n+1}^{2n} \delta_{\widehat p_n(X_i)} $
        \Else
            \State $\widehat p_n(x)\gets(p_-+p_+)/2$, $\widehat F_n\gets\delta_{(p_-+p_+)/2}$
        \EndIf
        \State $\textsc{Plan}\gets1$
    \EndIf
    \If{$\textsc{Plan}=1$}
        \State take an optimal action from the Bellman recursion with $T-t+1$ rounds under $(\widehat p_n,\widehat F_n)$
    \Else
        \State continue the job in service; otherwise serve FCFS, or idle if $Q_t=0$
    \EndIf
    \State observe $D_t$
    \If{job $i$ begins service in round $t$}
        \State set $Y_i\gets D_t$ and append $(X_i,Y_i)$ to $Z$
    \EndIf
    \State observe $A_t$ and, if $A_t=1$, the arriving context $X_t$
\EndFor
\end{algorithmic}
\end{algorithm}
\noindent\textbf{Learn.} Choose a planning-window length $H<T$, let $L:=T-H$, and set $n:=\lfloor\lambda L/8\rfloor$. During the first $L$ rounds, whenever the server is free, it starts the earliest-arriving waiting job; we refer to this rule as \emph{FCFS policy}, meaning that first-come, first-served. When job $i$ with context $X_i$ first begins service, the server retains only the departure indicator $Y_i$ from its first service attempt. If at least $2n$ samples are collected, the first $n$ pairs $(X_i,Y_i)$ are used to construct $\widehat p_n$, while the contexts of the next $n$ jobs are used to construct $\widehat F_n$. Separating the samples into two blocks ensures that, conditional on $\widehat p_n$, the contexts in the second block remain i.i.d.
\par\noindent\textbf{Clear.} If the queue remains nonempty after round $L$, the server continues serving jobs according to FCFS without idling until it reaches a round that begins with an empty queue. This provides an empty initial state for the subsequent planning phase.
\par\noindent\textbf{Plan.} Once the queue has been cleared, the server follows the \emph{estimated-model Bellman policy} based on $(\widehat p_n,\widehat F_n)$ for the current queue state and the remaining number of rounds. Importantly, the current queue need not consist only of jobs whose estimated success probabilities lie in the support of $\widehat F_n$. The distribution $\widehat F_n$ describes only the success probabilities of future arrivals assumed in the Bellman calculation; the jobs already present in the queue are part of the given current state. Thus, if a job with context $X$ arrives in the actual system with $\widehat p_n(X)$ outside the support of $\widehat F_n$ during the Plan phase, it is simply included in the current state as an additional type with zero future-arrival mass. This leaves $\widehat F_n$ unchanged while allowing the job to be considered among the available actions in subsequent Bellman calculations.

\subsection{Upper bound of regret}

\begin{theorem}\label{thm:known-horizon}
Under \Cref{ass:model}, let $1\le H<T$, $L=T-H$, $n=\lfloor\lambda L/8\rfloor\ge1$, and $\delta_T=T^{-4}$. Let $\pi^{\mathrm{LCP}}$ be the policy in \Cref{alg:lcp}. Then
\begin{align}
    R_T^{\IA}(\pi^{\mathrm{LCP}})
    \le{}&
    \frac{K_r}{r}\psi(r)^H
    +H\left(
        e^{-\lambda L/8}
        +K_re^{-r\lambda L/4}
        +2\delta_T
    \right)
    \notag\\
    &+
    \frac{2H^2}{p_-}\eps_n(\delta_T)
    +
    \frac{(H-1)H(H+1)(H+2)}{12}
    \eta_n(\delta_T).
    \label{eq:known-horizon-bound}
\end{align}
Here, $r, \psi(r), K_r$ are constants as given in \Cref{lem:workload}. For $H=\lceil\log^2(eT)\rceil$ and sufficiently large $T$, $R_T^{\IA}(\pi^{\mathrm{LCP}})=\widetilde{O}(\sqrt{d/(\lambda T)})$ for fixed model and traffic parameters. The total number of Bellman recursion terms evaluated over all decision epochs in the Plan phase is at most $\exp(O(\log^3 T))$.
\end{theorem}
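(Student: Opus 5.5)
The plan is to compare LCP with the true Bellman policy through a chain of couplings, and to isolate each term of \eqref{eq:known-horizon-bound} as the cost of one step in that chain.

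\textbf{Step 1: reduction to an $H$-round problem.} Since arrivals do not depend on the policy, the optimal IA value satisfies
\[
V_T^{\IA}\ \ge\ \E\bigl[\text{optimal $H$-round value from the state at round } L+1\bigr].
\]
I will compare with a simpler surrogate, namely the optimal $H$-round value $V_H^{\IA}(\emptyset)$ from an empty queue. Starting from an empty queue with $H$ rounds left is essentially the best starting state, but this needs care. A nonempty initial queue can only increase the minimum expected terminal queue length. To see this, couple the two systems on arrivals: the system with the extra jobs can imitate any policy of the empty-start system, treating the extra jobs as phantom jobs it never serves. Hence $V_T^{\IA}\ge V_H^{\IA}(\emptyset)$ under stationarity-free arguments, and the regret reduces to bounding $J_T^{\mathrm{LCP}}-V_H^{\IA}(\emptyset)$.

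\textbf{Step 2: the Learn and Clear phases.} Let $\tau$ be the first round $t>L$ at which $Q_t=0$. During Learn and Clear the policy is work-conserving (FCFS), so \Cref{lem:workload} applies with $u=L+1$. The chance that the queue is not cleared within $H$ rounds, together with the terminal queue length on that event, is at most $\E[W_{T+1}\ind{\tau_0>H}]\le (K_r/r)\psi(r)^H$. On the cleared event, LCP starts planning with $T-\tau+1\le H$ rounds from an empty state. The optimal value with fewer remaining rounds from empty is at most $V_H$; more precisely, the value from empty is nondecreasing in the horizon, because one may idle initially. So the delay costs nothing, and it suffices to bound the planning loss for each horizon $h\le H$.

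The sample-count failure events then contribute $H$ times their probability, since the queue at the end is at most the initial $0$ plus the arrivals, i.e. at most $H$ in the planning window. These events are:
\begin{itemize}
\item too few arrivals in $L$ rounds, handled by a Chernoff bound and giving $e^{-\lambda L/8}$;
\item too few jobs having begun service, handled by requiring that the workload at $L/2$ be small, via $\E e^{rW}\le K_r$ and Markov, giving $K_r e^{-r\lambda L/4}$;
\item failure of \Cref{lem:estimation-bounds}, which happens with probability $2\delta_T$.
\end{itemize}

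\textbf{Step 3: the planning error (the main obstacle).} With $h\le H$ rounds remaining, the estimated Bellman policy must be compared with the true optimum. I will use the standard simulation-lemma decomposition: the loss is at most twice the sup over policies of $|J^{\text{true}}-J^{\text{est}}|$. Here I would evaluate any fixed Markov policy that acts on the estimated labels $\widehat p_n(X)$ in both models.

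There are two sources of discrepancy.

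\emph{(a) Departure function.} Each service round uses $p(X)$ instead of $\widehat p_n(X)$. Coupling the Bernoulli outcomes by a common uniform, each round differs with probability $|p-\widehat p_n|$, and one mismatch changes the terminal count by at most the number of affected jobs. The expected number of service rounds per job is at most $1/p_-$, and summing over at most $H$ jobs with an $H$-round impact bound gives $2H^2\eps_n/p_-$.

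\emph{(b) Arrival law.} Here $F_{\widehat p_n}$ is replaced by $\widehat F_n$. I will show by induction that $V_h$ is Lipschitz in each job's success probability with a constant polynomial in $h$, say of order $h^2$. The one-step arrival perturbation then costs at most $\lambda\,\mathrm{Lip}(V_{h-1})\,W_1$ by Kantorovich duality, and summing over $h\le H$ yields the quartic factor $\frac{(H-1)H(H+1)(H+2)}{12}$. Establishing this Lipschitz bound through the min in \eqref{eq:free-bellman} is the delicate induction. It works because a min of Lipschitz functions is Lipschitz, and because the derivative with respect to $r_a$ in \eqref{eq:active-bellman} is bounded by the difference of two $\mathcal A V_{h-1}$ values, which is at most $h$.

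\textbf{Step 4: rates and complexity.} With $H=\lceil\log^2(eT)\rceil$, the first term $\psi^H$ is $o(T^{-1})$. The exponential terms are negligible, and $\eps_n=\wtO(\sqrt{d/(\lambda T)})$ dominates $\eta_n H^4=\wtO(1/\sqrt{\lambda T})$. For complexity, the state has at most $H$ jobs whose types come from at most $n+H$ values, but only states reachable within $H$ steps matter. This gives $(n+H)^{O(H)}=\exp(O(\log^3T))$ states, times $H$ epochs.
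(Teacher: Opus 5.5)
Your outer decomposition is the paper's: workload tail on the not-cleared event, $H\,\P(\mathcal G^c)$ on the cleared-but-bad event, planning loss from an empty state on the good event, and state counting for the complexity. But Step 3 fails as written. The bound $J^{\rm true}(\widehat\pi)-V^{\rm true}\le 2\sup_\pi|J^{\rm true}(\pi)-J^{\rm est}(\pi)|$ cannot control the arrival-law part (b). A fixed label-based policy can be discontinuous in the arriving labels, so its cost change under $F_q\to\widehat F_n$ is not controlled by $W_1(F_q,\widehat F_n)$. For example, take a Markov policy that starts a job only when its label lies in the finite support of $\widehat F_n$. It serves normally under $\widehat F_n$ but essentially never under an atomless $F_q$, a discrepancy of order $h$ however small $W_1$ is. Your Lipschitz induction is correct, but only for the \emph{optimal} values $V_k$ (this is \Cref{lem:paired-bellman}, with constant $a_k=k(k+1)/2$). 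So Kantorovich duality can only be applied to those functions, i.e., along the trajectory of a policy that is greedy for them. There is a second problem: the true optimum acts on $p(X)$, not on $\widehat p_n(X)$. It is therefore not in the class you take the sup over, and $(p,\mathcal D)$ cannot be compared directly with the label-only model $(\widehat p_n,\widehat F_n)$.

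The repair, as in the paper, is to pass through the intermediate model $(q,F_q)$ on the true contexts:
$J_h^p(\widehat\pi)-V_h^p=(J_h^p(\widehat\pi)-J_h^q(\widehat\pi))+(J_h^q(\widehat\pi)-V_h^q)+(V_h^q-V_h^p)$.
Your coupling (a) holds for every fixed context-based policy, so it handles the outer terms when applied to $\widehat\pi$ and to optimal policies for $p$ and $q$. It does need $\widehat p_n$ to be independent of the plan-phase contexts, which holds because all estimation jobs have departed by $\tau$; you should say so. For the middle term, write $W_1=W_1(F_q,\widehat F_n)$ and telescope $V_k^{\widehat F_n}(S_k)$ along $\widehat\pi$'s trajectory under $F_q$. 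Since $\widehat\pi$ is greedy for $V^{\widehat F_n}$, only the arrival step differs, costing at most $\lambda a_{k-1}W_1$ per round. Then add $V_h^{\widehat F_n}-V_h^{F_q}\le g_hW_1$, where $g_h=(h-1)h(h+1)/6$.

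This gives a \emph{cubic} factor, since $\sum_{k\le h}a_{k-1}=g_h$, so your claim that this single sum yields the quartic is an arithmetic slip. The paper's quartic comes from bounding each step's gap by $2g_kW_1$, via $|Q_k^{F_q}-Q_k^{\widehat F_n}|\le g_kW_1$ at common actions, and then summing over $k$. The cubic bound is smaller, so the theorem still follows.

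Two smaller corrections:
\begin{itemize}
\item In Step 1 your imitation argument runs the wrong way. Letting the larger system copy the empty-start policy while ignoring the extra jobs only gives $V(s^+)\le V(\varnothing)+\#\{\text{extra jobs}\}$. To get $V(s)\le V(s^+)$, the \emph{smaller} system must simulate a policy for the larger one and idle whenever that policy serves an extra job; this is where idling-allowed is used.
\item The Markov step in sample availability concerns $W_{L+1}>\lambda L/4$ on $\{M_L\ge\lambda L/2,\,N_L<2n\}$, not the workload at time $L/2$.
\end{itemize}
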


\paragraph{Sketch of Proof.}
Consider the event that Clear occurs before the end of the horizon, at least $2n$ jobs have begun service by round $L$, and both bounds in \Cref{lem:estimation-bounds} hold. The complementary event contributes the first two terms of \Cref{eq:known-horizon-bound} by concentration inequalities and the workload tail bound in \Cref{lem:workload}. In this event, Plan starts from an empty queue with $h\le H$ rounds remaining. Let $q=\widehat p_n$ and let $\widehat\pi$ be the estimated Bellman policy based on $(q,\widehat F_n)$. For an empty-start $h$-round system, let $J_h^r(\pi)$ denote the expected terminal queue length under the success probability function $r$, and let $V_h^r$ be its optimal value among all IA policies. We decompose regret for this event as follows.
\[
J_h^p(\widehat\pi)-V_h^p
=
\bigl(J_h^p(\widehat\pi)-J_h^q(\widehat\pi)\bigr)
+\bigl(J_h^q(\widehat\pi)-V_h^q\bigr)
+\bigl(V_h^q-V_h^p\bigr).
\]
The first and third terms compare models that differ in their success-probability functions, whereas the middle term captures the suboptimality under $(q,F_q)$ of the Bellman policy computed using $(q,\widehat F_n)$. We now invoke \Cref{eq:same-context-transfer,eq:distribution-transfer} to bound these terms. The first inequality below bounds the difference in the expected terminal queue length when the same policy is used under $p$ and $q$ in terms of $\mathcal E(q)$, while the second bounds the loss from using the Bellman policy computed with $\widehat F_n$ instead of $F_q$, with $q$ fixed, in terms of $W_1(F_q,\widehat F_n)$:
\begin{equation}
\left|J_h^p(\pi)-J_h^q(\pi)\right|
\le \frac{h^2}{p_-}\mathcal E(q),
\qquad
J_h^q(\widehat\pi)-V_h^q
\le
\frac{(h-1)h(h+1)(h+2)}{12}
W_1(F_q,\widehat F_n).
\label{eq:planning-transfer-bounds}
\end{equation}
Applying the first inequality to $\widehat\pi$ and to each of the optimal policies under $p$ and $q$ bounds the first and third terms by $2H^2\eps_n(\delta_T)/p_-$, which yields the third term in \Cref{thm:known-horizon}. The second inequality, together with \Cref{lem:estimation-bounds}, bounds the middle term by the final term in \Cref{eq:known-horizon-bound}. The complete proof appears in \Cref{app:known-proof}.

\subsection{Lower Bounds of Regret}
\label{sec:lower}

We next establish a worst-case regret lower bound for every learning policy, including randomized policies. We construct a family of hard instances with dummy, baseline, and candidate jobs that share the same context distribution $\mathcal D$ and model constants $\lambda,S,p_-,p_+$, and differ only in $\theta^*$. These instances are statistically hard to distinguish and are chosen so that, if the final round begins with the server free and exactly one baseline and one candidate waiting, either job selection is suboptimal on some instance, with a departure-probability (regret) gap of $\Omega(\min\{1/\sqrt d,\sqrt{d/T}\})$.

For WC policies, we define an event $H$ that, followed by the arrival of a candidate, produces this final decision state. Stationary workload domination bounds the probability of this event away from zero, and standard testing arguments imply that, on some instance, the learner takes a suboptimal final action with sufficiently high probability, yielding the desired regret lower bound. The IA proof follows the same strategy, but requires an additional contradiction argument because the event $H$ requires the server to start a waiting job when it becomes free, while an IA policy may instead choose to idle. The complete proofs appear in \Cref{app:lower-proof}.

\begin{theorem}\label{thm:lower-wc}\label{thm:lower-ia}
For every $d\ge3$, there exist a context distribution $\mathcal D$ on $\mathbb R^d$ and constants $\lambda\in(0,1)$, $S>0$, and $0<p_-<p_+<1$ with the following property. For every $T\ge4$, every setting $\mathsf S\in\{\IA,\WC\}$, and every learning policy $\pi$ in that setting, including randomized policies, there exists a parameter vector $\theta^*$ such that $(\mathcal D,\theta^*)$ satisfies \Cref{ass:model} and
\begin{equation}
R_T^{\mathsf S}(\pi)\ge
c_{\mathsf S}\min\left(1/\sqrt d,\sqrt{d/T}\right),
\end{equation}
even if $\mathcal D$ is known to the server. The constants $c_{\IA},c_{\WC}>0$ and the model constants can be chosen independently of $d$ and $T$.
\end{theorem}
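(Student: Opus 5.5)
We construct the hard family in the way the paragraph before \Cref{thm:lower-wc} indicates, and then reduce regret to a hypothesis-testing error at the last round.

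\textbf{Construction.} We fix $\lambda$ small and split the context distribution $\mathcal D$ into three parts.
\begin{itemize}
\item With constant probability we draw a \emph{dummy} context $x_0=e_1$, whose success probability $p_0$ is close to $1$.
\item With constant probability we draw a \emph{baseline} context $x_b=e_2$, with success probability $\mu(\theta_2^*)=1/2$.
\item With the remaining constant probability we draw a \emph{candidate} context $x=e_2/2+v/(2\sqrt{d-2})$, where $v$ is uniform over $\{\pm1\}^{d-2}$ placed on coordinates $3,\dots,d$.
\end{itemize}
All three contexts have norm at most $1$. The parameter is $\theta^*=(\theta_1,\theta_2,\epsilon\sigma)$ with $\sigma\in\{\pm1\}^{d-2}$ and $\epsilon=c\min\{1,d/\sqrt T\}$, and $\theta_2$ is tuned so that the candidate's mean success probability equals the baseline's. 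Then a candidate with pattern $v$ has success probability $\tfrac12+\Theta(\epsilon\langle v,\sigma\rangle/\sqrt{d-2})$. Because $\norm{\theta^*}\le S$ uniformly, $p_\pm$ can be chosen independently of $d$ and $T$. Taking $\lambda$ small keeps $\rho<1$ with margin independent of $d$.

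The per-coordinate information in one candidate observation is of order $\epsilon^2/d$. By round $T$ only $O(T)$ service attempts of candidates are observed, so the KL divergence between $\sigma$ and $\sigma$ with coordinate $i$ flipped is $O(T\epsilon^2/d^2)$. This is $O(1)$ by the choice of $\epsilon$.

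\textbf{Final-round gap and Assouad.} Fix the state at round $T$ in which the server is free and exactly one baseline job and one candidate with pattern $V$ are waiting. The last action only affects $D_T$, so the optimal choice is to serve the candidate iff $\langle V,\sigma\rangle>0$. Wrongly choosing costs $|p(x)-\tfrac12|\asymp \epsilon|\langle V,\sigma\rangle|/\sqrt d$.

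Averaging over the fresh $V$, with $|\langle V,\sigma\rangle|\gtrsim\sqrt d$ having constant probability, the gap is of order $\epsilon$. Because $V$ is independent of everything observed before round $T$, the learner's decision is effectively a guess of $\mathrm{sign}\langle V,\sigma\rangle$ from a posterior on $\sigma$. I would apply Assouad's lemma with a uniform prior on $\sigma$ together with the KL bound above to show that the posterior leaves $\Omega(d)$ coordinates unresolved. Hence the sign is wrong with constant probability, giving a per-state expected loss of $\Omega(\epsilon/\sqrt d)\cdot\sqrt d$-type scaling. I expect the precise scaling to need a clean anticoncentration lemma for $\langle V,\sigma-\widehat\sigma\rangle$, and it should be recalibrated so that the result is $\min\{1/\sqrt d,\sqrt{d/T}\}$. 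If needed, I would rescale the candidate context by $1/\sqrt d$ on the random coordinates and set $\epsilon$ accordingly.

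\textbf{Reaching the state and separating regret.} I still need this state at round $T$ to occur with probability bounded below, and the regret to be at least this final-step loss. Let the event $H$ be that at the start of round $T-1$ the server (WC) has one baseline in service or waiting, and the workload is small. Then a candidate arrives at the end of round $T-1$ while the baseline is waiting and no other job is present.

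By \Cref{lem:workload}, stationary domination gives $\P(W_{T-k}\le w_0)\ge c$ uniformly in $T$. From a low-workload state, a bounded sequence of arrival and departure outcomes with constant probability produces the target state. The dummy jobs are used here, since they depart almost surely and make these paths controllable.

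Regret decomposes as an expectation of per-state Bellman gaps. Every Bellman gap is nonnegative, so $R_T\ge\P(\text{state})\cdot\E[\text{final gap}]$.

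\textbf{IA case, the main obstacle.} An IA learner may idle to avoid the target state, or may idle at the final round. Idling at the final round is strictly worse than serving either job by about $1/2$, which is a large gap. Avoiding the state earlier by idling incurs its own positive Bellman gap. I would argue by contradiction: if the probability of reaching the state is $o(1)$ under the learner, then the learner idled with constant probability at a free nonempty state near the end. Each such idle has a constant Bellman gap when few rounds remain, because the small $\lambda$ makes the idle-advantage quantity \eqref{eq:idle-or-work} negative. This yields regret $\Omega(1)$, which exceeds the target bound. Controlling these early idle gaps uniformly is the step I expect to be hardest.
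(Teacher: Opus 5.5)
\textbf{Calibration and the final-round test.} Your parameters are inconsistent as stated. Since $\norm{\theta^*}_2\ge\epsilon\sqrt{d-2}$ and candidate logits reach $\pm\epsilon\sqrt{d-2}/2$, the choice $\epsilon=c\min\{1,d/\sqrt T\}$ forces $S\to\infty$ and $p_-\to0$ as $d$ grows unless $T\gtrsim d^3$. This violates the requirement that the model constants be independent of $d$ and $T$.

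Your KL count is also off by a factor of $d$. Flipping $\sigma_i$ shifts every candidate logit by $\epsilon/\sqrt{d-2}$, so each candidate observation carries KL of order $\epsilon^2/d$, and the total is $\Theta(T\epsilon^2/d)$ rather than $O(T\epsilon^2/d^2)$. For your $\epsilon$ and $T\ge d^2$ this equals $c^2d$, so the coordinates are in fact resolved. The consistent scale is $\epsilon\asymp\min\{d^{-1/2},\sqrt{d/T}\}$.

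With that scale, the step you postpone is the whole difficulty. The loss depends on $\operatorname{sign}\langle V,\sigma\rangle$, which is not a sum over coordinates, so Assouad's lemma does not apply directly. You would have to prove that ``$\Omega(d)$ unresolved coordinates imply a wrong sign with constant probability'' for an arbitrary rule $s(V,\text{history})$. Pairing coordinatewise on the linearized loss $\tfrac12(|\langle V,\sigma\rangle|-s\langle V,\sigma\rangle)$ needs per-coordinate total variation of order $d^{-1/2}$, and so only gives $1/\sqrt T$.

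A Bayes-risk argument would repair this. With $\bar\sigma=\E[\sigma\mid\text{history}]$ one has $\E[\ind{\mathcal H}\,s\langle V,\sigma\rangle]\le\sqrt{\P(\mathcal H)\,\E\norm{\bar\sigma}_2^2}$ and $\E\bar\sigma_i^2\le\max_\sigma\TV(P_\sigma,P_{\sigma^{(i)}})$. These are compared with $\E_V|\langle V,\sigma\rangle|\ge\sqrt{(d-2)/2}$ (Khintchine), and you also need to control the curvature of $\mu$.

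The paper avoids all of this. Candidate $C_j$ has context $(e_2+e_{j+2})/\sqrt2$ and success probability $\mu(z_0+\omega_ja_T)$, so the final choice between $C_0$ and $C_j$ tests the single bit $\omega_j$. Each type arrives at rate $\lambda\nu_C/m$, so $a_T\asymp\sqrt{m/T}$ keeps the KL constant. A two-point total-variation pairing plus double counting over hypercube edges then gives regret of order $\ell_0a_T$.

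\textbf{Reaching the decision state, especially under IA.} Your event $H$ is not a forced path. A baseline waiting alone with a free server must be started by a WC policy, and a baseline in service leaves no two-job decision at round $T$. So a third job must occupy the server through round $T-1$, and its start must itself be forced.

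The paper's path does this. The system is empty at $T-3$ (probability at least $1-\rho$ by stationary domination). A dummy arrives and is started as the only job, and it fails. A baseline arrives, nonpreemption forces the dummy to continue, and it departs. Then a candidate arrives. The dummy's role is therefore to block the server for exactly two rounds, which needs $1-p_D$ bounded away from zero, not to depart almost surely.

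For IA your plan breaks at two points.
\begin{itemize}
\item Workload domination (\Cref{lem:workload}) holds only for WC policies, so it says nothing about an IA learner's queue near $T$.
\item The claim that each idle near the end has a constant Bellman gap is false in general: in the paper's IA impossibility instance, idling is strictly optimal. Moreover, a nonempty queue at $T-3$ can come from earlier idling or from chance, so failing to reach the state does not single out any idle near the end.
\end{itemize}

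The missing mechanism is a terminal-cost argument.
\begin{itemize}
\item Choose $\lambda$ so small that $\sup_{T}V_T^{\IA}\le\lambda(1-\lambda)/(p_--\lambda)\le(1-p_D)^4/16$ (\Cref{lem:uniform-ia-value}).
\item Any job present at $T-3$ survives the last four rounds with probability at least $(1-p_D)^4$, whatever the learner does. Hence regret below a small multiple of $a_T$ forces $\P(Q_{T-3}>0)<1/8$.
\item On the path, the only unforced choice is at $T-2$, when only the dummy is waiting. Because the dummy has the largest success probability, idling there costs at least $g_D$ (\Cref{lem:dummy-start}), so low regret also forces starting it.
\end{itemize}
This bounds $\P(\mathcal H)$ below by a constant. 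The WC testing argument then contradicts the assumed regret bound.
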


For $T\ge d^2$, the IA lower bound is $\Omega(\sqrt{d/T})$, matching the upper LCP bound up to polylogarithmic factors for the fixed model and the traffic parameters. Thus, LCP achieves the optimal worst-case regret rate up to these factors in the known-horizon IA setting.

\section{Unknown Horizon: Impossibility and SEPT Tracking}
\label{sec:anytime}

\subsection{Nonexistence of an Anytime Policy with Vanishing Regret}

A natural question in the unknown-horizon setting is whether an anytime IA or WC policy $\pi$ can satisfy $R_t^{\IA}(\pi)\to0$ (resp.\ $R_t^{\WC}(\pi)\to0$) as $t\to\infty$. However, the two counterexamples in \Cref{eq:idle-or-work,prop:wc-reversal} show that this is impossible for both settings, even when $\theta^*$ and $\mathcal D$ are known.

\begin{theorem}\label{thm:ia-wc-impossible}
For each setting $\mathsf S\in\{\IA,\WC\}$, there exists a fixed instance satisfying \Cref{ass:model}, with two contexts for IA and three for WC, such that every anytime $\mathsf S$ policy $\pi$, even with full knowledge of $\theta^*$ and $\mathcal D$, satisfies
\begin{equation}
    \limsup_{t\to\infty}R_t^{\mathsf S}(\pi)>0.
\end{equation}
\end{theorem}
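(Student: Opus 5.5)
We plan to exploit the two horizon-dependent counterexamples: a fixed queue state whose optimal action depends on the remaining horizon, and which recurs with probability bounded away from zero at arbitrarily late times. The core idea is a ``two-horizon'' argument. An anytime policy must choose its action at round $t$ without knowing whether the evaluation horizon is $t$ or $t+13$ (WC), or $t$ or $t+1$ (IA). For one of these two horizons its choice in the critical state is strictly suboptimal. That suboptimality has to be converted into a regret gap bounded below by a positive constant, for infinitely many horizons.

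\textbf{IA case.} We will use the two-context instance from \Cref{app:ia-impossibility}, for which \eqref{eq:idle-or-work} is positive. Let $E_t$ be the event that, at round $t$, the server is free and exactly one job is waiting, and that job has the context with probability $p_s$. Consider first horizon $T=t$, so one round remains. Idling on $E_t$ costs exactly $p_s$ in expected departures. Moreover, any IA policy can be modified only at round $t$ to serve instead, and this modification does not affect earlier rounds. The regret at horizon $t$ is therefore at least $p_s\,\P_\pi(E_t,\text{idle at }t)$, provided we can argue that the regret decomposes so that a loss at a reachable state is not compensated elsewhere. To secure this, we write
\[
R_t=\E_\pi\bigl[V\text{-gaps along the trajectory}\bigr],
\]
using the standard performance-difference identity for finite-horizon MDPs. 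Every term in this sum is nonnegative, so we may drop all terms except the one at round $t$.

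Now consider horizon $T=t+1$, so two rounds remain. By the positivity of \eqref{eq:idle-or-work}, serving on $E_t$ incurs a Bellman gap of at least $c>0$, and hence $R_{t+1}\ge c\,\P_\pi(E_t,\text{serve at }t)$. Adding the two bounds gives
\[
R_t+R_{t+1}\ge \min(p_s,c)\,\P_\pi(E_t).
\]

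It remains to show $\liminf_t \P_\pi(E_t)>0$ uniformly over policies. This is the main obstacle, since an IA policy may idle and so could in principle avoid $E_t$. We will handle it by a case split. If the policy lets the queue grow, for example by idling often, then $\E Q_t$ becomes large compared with the optimum, which is bounded by \Cref{lem:workload} applied to any WC reference policy. In that case regret is already large at some horizon. Otherwise $Q_t$ is tight, and after a service completion that leaves exactly one job waiting (an event with probability bounded below, by a renewal/minorization argument on the bounded-queue region) the state $E_t$ occurs. To make this work we may need to choose the instance so that both contexts are hit and the state is reachable in a single step from the empty queue with positive probability. Concretely: the state ``empty, then one arrival of type $s$'' yields $E_t$. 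Furthermore, $\P(Q_{t-1}=0)$ is bounded below whenever $\E Q_{t-1}$ is bounded, by Markov's inequality combined with the fact that, from any queue of length $\le K$ with no arrivals and the best actions, the queue empties within $K$ steps with probability bounded below.

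\textbf{WC case.} We follow the same scheme with \Cref{prop:wc-reversal}. Here the critical state is ``server free, exactly one $A$ and one $B$ waiting''. Under WC, \Cref{lem:workload} gives stationary domination of the workload. Because the workload and emptiness times are policy-independent under the coupling, the probability that the system is empty at round $t-2$ and then receives an $A$ arrival and a $B$ arrival (with the first served job departing appropriately) is bounded below uniformly in $\pi$ and $t$. This is easier than the IA case, since there is no idling to dodge the state. Comparing horizons $t$ and $t+13$ then gives
\[
R_t+R_{t+13}\ge c'>0
\]
for all large $t$, and hence $\limsup_t R_t>0$.
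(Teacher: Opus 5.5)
The step you flag as the main obstacle in the IA case, lower-bounding $\P_\pi(E_t)$, is not resolved by your argument, and this is a genuine gap. Your claim that $\P(Q_{t-1}=0)$ is bounded below whenever $\E Q_{t-1}$ is bounded is false for IA policies. Take the policy that idles whenever at most two jobs are present and otherwise serves FCFS. It keeps $\E Q_t$ bounded, because the jobs other than the two newest form an ordinary work-conserving queue. Yet after the second arrival the queue never drops below two jobs, so $\P(Q_t=0)\to0$ and $\P_\pi(E_t)\to0$.

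The ``empties within $K$ steps under the best actions'' minorization does not apply, because the actions are chosen by the arbitrary history-dependent policy $\pi$, not by you. Your dichotomy between a growing queue and a tight one also misses this policy, which is tight but never empty. Its regret is large only because $\E Q_t$ is eventually near $2$ while the optimum is below $1$. Your argument never establishes the latter: the bound $C_W(r)$ from \Cref{lem:workload} is merely finite and need not be below $1$.

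The missing ingredient is a quantitative light-traffic bound used inside a contradiction argument, which is how the paper proceeds. For the two-context instance, the exact stationary mean in \Cref{lem:stationary-workload} gives $V_t^{\IA}\le V_t^{\WC}\le\E W<0.2$ for every $t$. Suppose $R_t^{\IA}(\pi)<C_0\le0.3$ for all $t\ge t_0$. Then $\E Q^\pi_{t+1}<1/2$, and Markov's inequality for the integer $Q^\pi_{t+1}$ gives $\P(Q^\pi_{t+1}=0)>1/2$. Hence the state with a free server and a single slow job occurs at round $t+2$ with probability greater than $\lambda\nu/2$, where $\nu$ is the slow-job probability. Your two-horizon bound at horizons $t+2$ and $t+3$ then contradicts $R_{t+2}+R_{t+3}<2C_0$ once $C_0\le\frac{\lambda\nu}{4}\min\{p_s,c\}$. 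No renewal or minorization argument is needed, and your performance-difference step is a fine replacement for the paper's change-one-action comparison.

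There is also a smaller slip in the WC case. From an empty system at round $t-2$, work conservation forces the first arrival to start in round $t-1$, so a free server with both $A$ and $B$ waiting at round $t$ is unreachable. You need emptiness at an earlier round plus a blocker, e.g.\ an $F$ job that stays in service while $A$ and $B$ arrive and then departs; the paper starts from $W_{s-4}=0$. Then every intermediate decision is forced, and the event has probability at least $1-\rho$ times a fixed constant, uniformly in $\pi$.
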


We present only the proof idea for the WC setting here. The IA case is proved similarly using the queue state in \Cref{eq:idle-or-work}, for which the optimal action depends on the remaining horizon. For WC, we use the state in \Cref{prop:wc-reversal}, where the server is free with exactly one $A$ job and one $B$ job waiting. Using workload domination, we show that this state occurs at the beginning of the round $s$, for every $s\ge5$, with probability bounded away from zero. In this event, starting $B$ is uniquely optimal if the evaluation horizon is $s$, whereas starting $A$ is uniquely optimal if the evaluation horizon is $s+13$. Thus, neither choosing $A$ nor choosing $B$ is optimal for both evaluation horizons, and each suboptimal choice incurs a fixed positive regret gap. Therefore, for any randomized action distribution over $A$ and $B$, at least one of the regrets at horizons $s$ and $s+13$ is bounded below by a positive constant. This contradicts the convergence of regret to zero. The complete WC proof appears in \Cref{app:wc-impossibility}, while the complete IA proof appears in \Cref{app:ia-impossibility} within the same appendix section.

Thus, in both IA and WC settings, no anytime policy, including randomized policies, can achieve vanishing regret against the finite-horizon optimum, i.e., $R_t^{\mathsf S}(\pi)\to0$ as $t\to\infty$. Moreover, in the free-server states of these counterexamples, no action is optimal for all horizons: each choice is suboptimal for some horizon.

\subsection{SEPT policy and estimated SEPT Algorithm}

Because the optimal action at a given state can vary with the evaluation horizon, and no anytime policy achieves vanishing regret against the finite-horizon optimum, we adopt the \emph{shortest-expected-processing-time} (SEPT) policy as an anytime reference policy. SEPT is the WC policy that, whenever the server is free, starts a waiting job with the highest success probability, breaking ties by arrival order. Since we cannot follow the true SEPT policy because $p$ is unknown, our goal is to construct an anytime policy whose expected terminal queue length approaches that of true SEPT as the evaluation horizon grows, without assuming that SEPT is optimal in the present nonpreemptive setting. To this end, we introduce the \emph{estimated-SEPT} algorithm, a WC policy that updates an estimate $\widehat p$ only at the end of each busy period, keeps it fixed throughout the next busy period, and uses it in place of $p$ to apply the same priority rule as SEPT.
\begin{algorithm}
\caption{Busy-period estimated SEPT}
\label{alg:anytime}
\begin{algorithmic}[1]
\State $Z\gets()$, $\widehat p(x)\gets(p_-+p_+)/2$
\For{$t=1,2,\ldots$}
  \If{a job is in service}
    \State continue serving that job
  \ElsIf{$Q_t>0$}
    \State serve the earliest arrival in $\argmax_{j\text{ waiting}}\widehat p(X_j)$
  \Else
    \State idle
  \EndIf
  \State observe $D_t$; if job $i$ starts service, set $Y_i\gets D_t$ and append $(X_i,Y_i)$ to $Z$
  \If{the queue is empty after service and before the arrival, and $|Z|\ge1$}
    \State recompute $\widehat p$ using \Cref{eq:logistic-erm} from all pairs in $Z$
  \EndIf
  \State observe $A_t$ and, if $A_t=1$, the arriving context $X_t$
\EndFor
\end{algorithmic}
\end{algorithm}
Updating $\widehat p$ only between busy periods ensures that the predictor used within a busy period depends only on observations preceding that busy period and is therefore independent of that busy period's arriving job contexts. Let $J_t^{\rm alg}$ and $J_t^{\SEPT}$ denote the expected final queue lengths after $t$ rounds under estimated-SEPT and true SEPT, respectively, and define $\mathcal T_t^{\SEPT}:=|J_t^{\rm alg}-J_t^{\SEPT}|$. For $t\ge6$, put $u=\lfloor t/2\rfloor$, $m_t=\lfloor\lambda u/2\rfloor$, and $\delta_t=t^{-4}$. If $m_t\ge5$, set $ \bar\eps_t=\min\left(1,\sqrt{({8e^{\bar S}\left(d\log(32\bar S t)+\log(\pi^2t^2/6\delta_t)\right)+1})/{2m_t}}\right), b_t=e^{-\lambda u/8}+\delta_t$.
\begin{theorem}\label{thm:anytime}
Under \Cref{ass:model}, if $m_t\ge5$, then \Cref{alg:anytime} satisfies
\begin{equation}
 \mathcal T_t^{\SEPT}
 \le \Phi_r(\bar\eps_t)
 +\frac{M_G(r)}{r(1-\psi(r))}\psi(r)^u
 +\frac{\sqrt{K_rb_t}}r.
 \label{eq:anytime-bound}
\end{equation}
Here, $\Phi_r(\eps)$ is the function defined on $\eps\in(0,1]$ and satisfies $\Phi_r(\eps)=O(\eps(1+\log^3(1/\eps)))$, and $r, \psi(r), M_G(r), K_r$ are constants as given in \Cref{lem:workload}. Consequently, for fixed model and traffic parameters, $\mathcal T_t^{\SEPT} =\wtO(\sqrt{d/(\lambda t)})$. If $m_t<5$, the uniform bound $\mathcal T_t^{\SEPT}\le C_W(r)$ holds.
\end{theorem}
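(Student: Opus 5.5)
We compare estimated-SEPT and true SEPT on a common probability space. Both policies are fed the same arrival indicators $A_t$, the same contexts $X_t$, and the same required service times $G(X_t)$. Departures are generated job-by-job: each job has a fixed geometric service time, so its completion does not depend on when it is served. Since both policies are work-conserving, the workload identity (\ref{eq: workload recursion}) makes $(W_t)$ common to both. Hence the two systems empty at the same rounds, have identical busy-period boundaries, and satisfy $|Q_t^{\rm alg}-Q_t^{\SEPT}|\le W_t$.

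We split at $u=\lfloor t/2\rfloor$. Let $\tau_0$ be the first time after $u$ at which the workload vanishes.

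\textbf{Event $\tau_0>t-u$.} The two queue lengths can differ by at most $W_{t+1}$. Summing the workload tail bound over the possible positions of the last empty time gives a contribution of at most $\frac{M_G(r)}{r(1-\psi(r))}\psi(r)^u$. Concretely, we decompose by the last busy period that started before $u$ and sum a geometric series using $\E e^{rG}\le M_G(r)$ per arrival.

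\textbf{Event $\tau_0\le t-u$.} Both systems restart empty at a common time $s\in[u,t]$. From then on they run in the same busy periods, and only the priority rule differs.

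On this event we also require a good event $E$:
\begin{itemize}
\item at least $m_t$ jobs have started service by round $u$;
\item every estimator computed from $k\ge m_t$ samples satisfies $\mathcal E(\widehat p)\le\bar\eps_t$.
\end{itemize}
The first part follows from a Chernoff bound on arrivals together with workload control. The second follows from \Cref{lem:estimation-bounds} with a union bound over $k\le t$, which produces the $\pi^2t^2/6\delta_t$ factor in $\bar\eps_t$. The first part fails with probability $e^{-\lambda u/8}$ and the second with probability $\delta_t$, so $\P(E^c)\le b_t$.

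On $E^c$ we apply Cauchy--Schwarz: $\E[W_{t+1}\ind{E^c}]\le\sqrt{\E W_{t+1}^2\,\P(E^c)}$. The second moment is bounded via $\E e^{rW}\le K_r$, up to the factor $1/r$, which gives the $\sqrt{K_rb_t}/r$ term.

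The core step is to bound the discrepancy within a single busy period. In a busy period started after $u$, estimated-SEPT uses a fixed $\widehat p$ that is independent of the contexts arriving in that period, with $\mathcal E(\widehat p)\le\bar\eps_t$. The two policies make the same choice at a decision epoch unless the priority order between two waiting jobs is flipped by $\widehat p$. Such a flip requires $|p(X_i)-p(X_j)|\le|\widehat p(X_i)-p(X_i)|+|\widehat p(X_j)-p(X_j)|$.

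We bound the effect of a swap by the workload at that moment: once the period ends, the two systems coincide again. The expected number of disagreeing jobs in a period is at most $\E[\#\text{jobs in period}\times\text{per-job misranking probability}]$. Using independence, the per-job contribution is controlled by $\mathcal E(\widehat p)$ through Markov's inequality. The factor $\log^3(1/\eps)$ comes from truncating the busy-period length and the queue sizes at level $O(\log(1/\eps)/r)$, paying $\eps$ times a cubic polynomial in that truncation level, with exponential tails from $K_r$ beyond it. This yields $\Phi_r(\bar\eps_t)$.

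The main obstacle is this misranking step. The difference in final queue length is not monotone in the swaps, so a clean argument is needed that any disagreement is bounded by the remaining workload of the current busy period, and only on the event that a disagreement occurs. Making this per-period error sum to $O(\eps\log^3(1/\eps))$ independently of $t$ is the hard part. The plan is to charge only the busy period straddling round $t+1$, since earlier periods end identically (both systems are empty at their ends). This reduces the problem to bounding $\E[W_{t+1}\ind{\text{disagreement in the current period}}]$, which we handle by truncation and Hölder's inequality with the exponential moments from \Cref{lem:workload}.

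Finally, if $m_t<5$, we use $|J^{\rm alg}_t-J^{\SEPT}_t|\le\E W_{t+1}\le C_W(r)$. The $\wtO$ statement follows from $\bar\eps_t=\wtO(\sqrt{d/(\lambda t)})$ together with the fact that the remaining two terms are exponentially small or $O(t^{-2})$.
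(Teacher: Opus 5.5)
Your coupling, the split at $u$, the busy-period tail term for old busy periods, the Cauchy--Schwarz bound $\sqrt{K_rb_t}/r$ on the bad event, and the $m_t<5$ case all match the paper. The gap is in the core per-busy-period step, which you flag yourself as the main obstacle. The route you sketch there cannot work.

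You bound the discrepancy as (probability that $\widehat p$ misorders two waiting jobs) times (a pathwise cost charged to the workload). You then claim the first factor is $O(\mathcal E(\widehat p))$ by Markov's inequality. That would need a margin condition on the law of $p(X)$, and \Cref{ass:model} gives none. Take $\mathcal D$ on two contexts with $p$-values $p_0$ and $p_0+\eta$, and an estimator that reverses them. Then $\mathcal E(\widehat p)\le\eta$, yet it misorders whenever both types are waiting. That happens with probability bounded away from zero in the busy period straddling round $t+1$. Under your fixed-service-time coupling the two trajectories then genuinely diverge. So $\E[W_{t+1}\ind{\text{disagreement}}]$, and even $\E|Q^{\rm alg}_{t+1}-Q^{\SEPT}_{t+1}|$, is of constant order, not $O(\eta)$. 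The effect of such a swap is small only for the \emph{signed expectation}: exchanging two jobs with nearly equal departure probabilities nearly preserves the law of the queue. No pathwise charge to the workload can see this cancellation, and truncation plus H\"older does not rescue it.

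The missing idea is a continuity argument in the vector of departure probabilities, in place of counting misrankings. The paper conditions on the $N\le h+1$ jobs of the busy period and splits
\[
J_p(q\text{-SEPT})-J_p(p\text{-SEPT})=\bigl(J_p(q\text{-SEPT})-J_q(q\text{-SEPT})\bigr)+\bigl(\mathcal V(q)-\mathcal V(p)\bigr).
\]
\begin{itemize}
\item The first term keeps the priority order fixed and changes only the service laws. \Cref{lem:geometric-coupling} bounds it by $\frac{N}{p_-}\sum_i|p_i-q_i|$.
\item The second term is \Cref{lem:matching-sept-lipschitz}. At a tie $v_i=v_j$, exchanging the two jobs together with their i.i.d.\ service times shows their relative order is irrelevant, so $\mathcal V$ is continuous across order changes. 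Subdividing the segment from $p$ to $q$ at the crossing points then gives the same Lipschitz bound.
\end{itemize}
Averaging over contexts, with $q$ fixed before the busy period begins, gives $|\E(Z_{s,t}\mid\mathcal H_s)|\le 2(h+1)^2\mathcal E(q)/p_-$. Taking the minimum with the tail bound $\frac{M_G(r)}{r}\psi(r)^h$ and summing over the starting round $s$ yields $\Phi_r(\bar\eps_t)$ directly.

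A smaller point concerns your good event. Do not require $m_t$ jobs to have \emph{started service by round $u$}; that needs an extra workload term not contained in $b_t$. For a busy period starting at $s\ge u+2$, the queue was empty after round $s-1$, so all $M_u$ arrivals through round $u$ are already in the sample. That sample is a complete prefix of the i.i.d.\ job sequence. This is what justifies your union bound over sample sizes, even though SEPT serves jobs out of arrival order.
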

The proof uses ideas similar to those in the LCP analysis. Under common arrivals, contexts, and service times, estimated-SEPT and true SEPT have identical workload processes and busy period boundaries, so only the last busy period can affect the terminal queue-length difference. We decompose the tracking error according to its starting round. For recent busy periods with small prediction error, \Cref{lem:busy-period-perturbation} together with the workload tail bound gives the first term in \Cref{eq:anytime-bound}; old busy periods are controlled directly by the workload tail bound, giving the second. Finally, \Cref{lem:busy-period-confidence} controls the contribution of recent busy periods with large prediction error by the third term. The complete proof appears in \Cref{app:anytime-proof}.

\section{Conclusion and Future Work}

We studied contextual queueing bandits under nonpreemptive service. For a known horizon, our LCP algorithm achieves $\wtO(\sqrt{d/(\lambda T)})$ regret against the IA optimum, matching our lower bound up to logarithmic factors for $T\ge d^2$. For an unknown horizon, we show that no anytime policy achieves vanishing regret against the finite-horizon IA or WC optimum, and that our estimated-SEPT algorithm tracks true SEPT at rate $\wtO(\sqrt{d/(\lambda t)})$.

Extending these results to multiple servers is challenging because the workload contributed by an arriving job depends on which server will eventually process it in the future. Thus, unlike in the single-server setting, the workload is policy-dependent, and our common-workload and common-empty-time coupling arguments no longer apply directly. Nevertheless, the techniques developed in this work, together with the multi-server analysis of \citet{bae2026algorithm}, suggest that under appropriate assumptions it may be possible to obtain upper and lower regret bounds of a similar form in the multi-server nonpreemptive setting. Establishing such results is an interesting direction for future work.

\bibliographystyle{plainnat}
\bibliography{ref}

\clearpage
\appendix

\crefalias{section}{appendix}
\crefalias{subsection}{appendix}
\crefalias{subsubsection}{appendix}

\section{Appendix Overview}

The appendix is organized as follows.
\begin{enumerate}
\item \Cref{app:related-work} provides additional discussion of related work.
\item \Cref{app:workload} proves the workload bounds in \Cref{lem:workload} and establishes additional workload and busy-period properties used throughout the subsequent analyses.
\item \Cref{app:statistics} proves the departure-probability and distribution estimation bounds in \Cref{lem:estimation-bounds}, together with a simultaneous prediction bound.
\item \Cref{app:known-proof} gives the complete proof of the known-horizon upper bound in \Cref{thm:known-horizon}, together with the supporting lemmas used in the analysis.
\item \Cref{app:lower-proof} constructs the hard family of instances and proves the regret lower bounds in \Cref{thm:lower-wc} for both the WC and IA settings.
\item \Cref{app:benchmark-proofs} proves \Cref{thm:ia-wc-impossible}, namely, that for both $\mathsf S\in\{\IA,\WC\}$, there is no anytime $\mathsf S$ policy $\pi$ such that $R_t^{\mathsf S}(\pi)\to0$ as $t\to\infty$, even when the model is fully known.
\item Finally, \Cref{app:anytime-proof} gives the complete proof of the tracking guarantee for the estimated-SEPT algorithm in \Cref{thm:anytime}, together with the supporting lemmas used in the analysis.
\end{enumerate}

\section{Additional Related Work}
\label{app:related-work}

\paragraph{Queueing bandits and learning while scheduling.}
Queueing bandits study learning-while-scheduling problems in which unknown service rates must be learned while controlling queue lengths \citep{krishnasamy2016regret,krishnasamy2021learning}. Queue-length regret has subsequently been studied under several queueing architectures and information structures \citep{stahlbuhk2021learning,liang2018minimizing}. Related work considers job dispatching with unknown service rates \citep{choudhury2021job}, decentralized learning in queueing systems \citep{sentenac2021decentralized,freund2022efficient}, integrated online learning and queue control \citep{hsu2022integrated}, and learning-enhanced MaxWeight-type scheduling in multi-server systems \citep{yang2023learning}. Other work combines Lyapunov-drift scheduling with adversarial bandit feedback \citep{huang2024lyapunov}. Most of these models do not simultaneously incorporate job-specific contextual service probabilities and nonpreemptive service commitments, which are the two central features of our setting.

\paragraph{Contextual queueing and logistic bandits.}
Context-aware queueing models allow service outcomes to depend on job or queue features. \citet{kim2024queueing} study a multi-class multi-server queueing-matching problem with feature-dependent multinomial-logit service preferences. The closest models to ours are the contextual queueing bandits of \citet{bae2026queue,bae2026algorithm}. In the stochastic-context setting most directly comparable to ours, jobs arrive with random contexts, and in each round the scheduler selects a job--server pair whose departure probability is determined by a logistic model. The objective is to minimize the expected queue length at a terminal horizon, or equivalently to control queue-length regret relative to a known-model oracle. Their setting is preemptive: after an unsuccessful service attempt, the scheduler may select a different job--server pair in the next round. Consequently, when the service model is known, an optimal policy is work-conserving and selects, in each nonempty round, an available job--server pair with the largest departure probability.

Our nonpreemptive model differs in that, once a job--server pair is selected, the same pair must be served until the job departs. Hence the current decision constrains future actions, and the finite-horizon optimal policy need not admit the same myopic characterization. In addition, the stochastic-context guarantees of \citet{bae2026queue,bae2026algorithm} use a pointwise traffic-slackness condition and a positive lower bound on a feature second-moment matrix, whereas our main single-server analysis uses the workload condition $\lambda\E[1/p(X)]<1$ and does not require such a covariance lower bound.

Our logistic service model is also related to generalized linear and logistic bandits \citep{filippi2010parametric,li2017provably,faury2020improved, abeille2021instance}.

\paragraph{Nonpreemptive stochastic scheduling and learning.}
Nonpreemptive scheduling has been extensively studied when service-time distributions are known. For fixed collections of jobs, SEPT and related priority rules are optimal under suitable stochastic-order assumptions for classical objectives such as expected flow time \citep{weber1986scheduling}, while other priority rules can maximize the number of successful completions before a due date under additional structural conditions \citep{righter1989scheduling}. More recently, learning has been incorporated into stochastic scheduling. \citet{merlis2023preemption} study single-machine scheduling when job-type duration distributions are initially unknown, considering both nonpreemptive and preemptive settings and comparing their learning costs. \citet{zhong2025learning} study scheduling with unknown model primitives in multiclass many-server queues with abandonment.

Our setting differs from these works in several respects. Jobs arrive stochastically over time and carry individual contextual features, their service-completion probabilities are governed by an unknown logistic model, and the objective is the expected queue length at a finite terminal horizon. Moreover, under nonpreemption, each selected job occupies the server until completion, so the effect of a scheduling decision persists across multiple rounds and interacts with future arrivals. Consequently, neither the classical SEPT optimality results nor existing learning-to-schedule guarantees directly characterize the finite-horizon benchmark studied here.

\section{Workload properties and stationary workload}
\label{app:workload}

This section proves the properties stated in \Cref{lem:workload} and establishes several additional properties of the stationary workload associated with the workload recursion. These results will be used later as key ingredients in the regret lower-bound analyses.

We begin by defining some constants introduced in \Cref{lem:workload}. For $r>0$ such that $(1-p_-)e^r<1$, define
\[
    M_G(r)
    :=
    \E e^{rG(X)}
    =
    \E\frac{p(X)e^r}{1-(1-p(X))e^r},
    \qquad
    \psi(r)
    :=
    e^{-r}\bigl(1-\lambda+\lambda M_G(r)\bigr).
\]
Since $\psi(0)=1$ and $\psi'(0)=\rho-1<0$, there exists $r>0$ satisfying both $\psi(r)<1$ and $(1-p_-)e^r<1$. Fix any such $r$ and define
\[
K_r:=\frac{1-\lambda+\lambda M_G(r)}{1-\psi(r)}, \qquad
C_W(r):=\frac{1}{r}\log K_r.
\]
These five constants: $r\in (0, 1), \psi(r)\in (0, 1), M_G(r), K_r$ and $C_W(r)$ are what we introduced in \Cref{lem:workload}.

For convenience, we divide the conclusions of \Cref{lem:workload} into three items for the purposes of this appendix. We refer to the bounds $\E e^{rW_t}\le K_r$ and $\E W_t\le C_W(r)$ for every $t\ge1$ as \emph{Item 1: uniformly bounded expected workload}. We refer to the conditional workload bound in \Cref{eq:workload-tail}, together with its unconditional consequence $\E[W_{u+h}\ind{\tau_0>h}]\le(K_r/r)\psi(r)^h$, as \emph{Item 2: workload tail bound}. Finally, we refer to the existence of a stationary workload distribution $W$ that stochastically dominates $W_t$ for every $t\ge1$ as \emph{Item 3: stationary workload domination}.

\subsection{\texorpdfstring{Proofs of Items 1 and 2 of \Cref{lem:workload}}{Proofs of Items 1 and 2 of the workload lemma}}
\label{app:workload-lemma-1-2}

\begin{proof}
\noindent\emph{Item 1: Uniformly bounded expected workload.}\\
Let \(Z_t=A_tG_t\) be the workload brought by the arrival at the end of round \(t\), with \(Z_t=0\) when \(A_t=0\). Conditional on \(A_t=1\) and \(X_t\), we have \(Z_t=G_t\sim\operatorname{Geom}(p(X_t))\). Since the arrival indicator is independent of the context and the latent service requirement,
\[
 B_r:=\E e^{rZ_t}=1-\lambda+\lambda\E_{X\sim D} e^{rG(X)}
 =1-\lambda+\lambda M_G(r).
\]
Here, $G(X)$ is a random variable that satisfies $G(X)\sim\operatorname{Geom}(p(X))$.\\
If $W_t=w\ge1$, the workload is reduced by one during round $t$ and increased by $Z_t$ at the end of the round. Hence
\begin{align*}
 \E(e^{rW_{t+1}}\mid W_t=w)
 &=\E e^{r(w-1+Z_t)}
 =e^{r(w-1)}B_r
 =e^{rw}\psi(r).
\end{align*}
If $W_t=0$, no work is served and $W_{t+1}=Z_t$. Splitting according to whether $W_t$ is zero therefore gives
\begin{align*}
 \E e^{rW_{t+1}}
 &=\E\left[e^{rW_t}e^{-r}\E (e^{rZ_t}\mid W_t)\ind{W_t>0}\right]+\E\left[\E (e^{rZ_t}\mid W_t)\ind{W_t=0}\right]\\
 &=\psi(r)\E\left(e^{rW_t}\ind{W_t>0}\right)+B_r\P(W_t=0)\\
 &\le \psi(r)\E e^{rW_t}+B_r.
\end{align*}
Starting from $W_1=0$ and applying this inequality successively,
\begin{align*}
 \E e^{rW_t}
 &\le \psi(r)^{t-1}\E e^{rW_1}
 +B_r\sum_{j=0}^{t-2}\psi(r)^j\\
 &=\psi(r)^{t-1}+B_r\frac{1-\psi(r)^{t-1}}{1-\psi(r)}\\
 &=K_r+\psi(r)^{t-1}(1-K_r)\le K_r.
\end{align*}
Here we used $K_r=B_r/(1-\psi(r))$, and the last inequality holds because $B_r\ge1$ and $0<\psi(r)<1$, so $K_r=B_r/(1-\psi(r))\ge1$. Using Jensen's inequality for the function $x\mapsto e^{rx}$ then yields
\[
 \E W_t\le\frac1r\log\E e^{rW_t}
 \le\frac1r\log K_r=C_W(r).
\]
\noindent\emph{Item 2: Workload tail bound.}\\
Fix a positive integer $u$ and let $\tau_0=\inf\{h\ge0:W_{u+h}=0\}$. On $\{\tau_0>h\}$ the workload is positive through the first $h$ updates. Since the set of events $\{\tau_0>h\}$ is equal to $\{\tau_0>h-1\}\cap \{W_{u+h}>0\}$, we can get the following recurrence inequality.
\begin{align*}
 &\E\left(e^{rW_{u+h}}\ind{\tau_0>h}\mid W_u\right)\\
 &\quad=\E\left[
  \ind{\tau_0>h-1}
  \E\left(e^{rW_{u+h}}\ind{W_{u+h}>0}\mid W_{u+h-1}\right)
  \middle|W_u\right]\\
 &\quad\le\E\left[\E\left(e^{rW_{u+h}}\mid W_{u+h-1}\right)\ind{\tau_0>h-1}\middle| W_u\right]\\
 &\quad\le\psi(r)\E\left(e^{rW_{u+h-1}}\ind{\tau_0>h-1}\mid W_u\right)\\
\end{align*}
Applying this inequality successively, we get
\begin{align*}
 &\E\left(e^{rW_{u+h}}\ind{\tau_0>h}\mid W_u\right)
 \le \E\left(e^{rW_{u}}\ind{\tau_0>0}\mid W_u\right)\psi(r)^h\leq e^{rW_u}\psi(r)^h. 
\end{align*}

Because $w\le e^{rw}/r$ for $w\ge0$, we obtain
\[
 \E\left(W_{u+h}\ind{\tau_0>h}\mid W_u\right)
 \le\frac{e^{rW_u}}r\psi(r)^h,
\]
which is \Cref{eq:workload-tail}. Finally, by taking the expectation of $W_u$, we obtain a useful inequality like below.
\begin{equation}
 \E \left(W_{u+h}\ind{\tau_0>h}\right)=\E \left[\E\left(W_{u+h}\ind{\tau_0>h}\mid W_u\right)\right]
 \le\frac{\E (e^{rW_u})}r\psi(r)^h \le \frac{K_r}r\psi(r)^h
 \label{eq:W(u+h) upper bound}
\end{equation}
\end{proof}

\subsection{Upper bound of long busy period probability}
\label{app: busy-period-upper-bound}

We now show that the probability that a busy period lasts for more than $h$ rounds is upper bounded by a function that decays exponentially in $h$. This requires a slightly different stopped workload because an arrival in the round that empties the system right before that arrival belongs to the next busy period. Suppose a busy period starts in round $s$ (so in round $s-1$, the queue becomes empty after arrival and a new job arrives immediately later), and let $G$ be the service time of its first job (which arrived at the end of round $s-1$). Set $\widetilde W_0=G$ and, for $h\ge0$, define
\[
 \widetilde W_{h+1}=
 \begin{cases}
  \widetilde W_h-1+Z_{s+h},&\widetilde W_h\ge2,\\
  0,&\widetilde W_h\le1.
 \end{cases}
\]
Thus $\widetilde W_h$ is the amount of work belonging to this busy period after $h$ service rounds (in other words, at the beginning of round $s+h$), and it is easy to check that $\widetilde W_h$ remains zero after it hits zero for the first time. So if we define $\tau_{\rm bp}$ as the smallest $h$ such that the queue $\widetilde W_h$ hits zero, then $\{\tau_{\rm bp}>h\}=\{\widetilde W_h>0\}$ holds. In particular, when $\widetilde W_h=1$, the last unit is served in round $s+h$ and the process is stopped just before the arrival at the end of that round is added.

If $\widetilde W_h=w\ge2$, the same calculation as above gives
\[
 \E\left(e^{r\widetilde W_{h+1}}\ind{\widetilde W_{h+1}>0}\mid\widetilde W_h=w\right)
 =e^{rw}\psi(r).
\]
If $w\le1$, the left-hand side is zero. Using the same inductive argument that we used to prove \Cref{eq:workload-tail}, we get
\begin{equation}
 \E\left(e^{r\widetilde W_h}\ind{\tau_{\rm bp}>h}\mid G\right)
 \le e^{rG}\psi(r)^h.
 \label{eq:busy-period-workload-tail}
\end{equation}
Here, we used $W_0=G$. Since $e^{r\widetilde W_h}\ge1$ on $\{\tau_{\rm bp}>h\}$,
\[
 \P(\tau_{\rm bp}>h\mid G)
 \le e^{rG}\psi(r)^h.
\]
Averaging over $G$ and using $\E e^{rG}=M_G(r)$ proves the desired result.

\subsection{\texorpdfstring{Proof of Item 3 of \Cref{lem:workload} and Stationary Workload Identities} {Proof of Item 3 and stationary workload identities}}
\label{app:workload-lemma-3}

We now prove the existence of a stationary workload, which we refer to above as Item 3 of \Cref{lem:workload}, using the assumption $\rho<1$. We then establish several properties of the stationary workload. All of these results are summarized in the following lemma.

\begin{lemma}\label{lem:stationary-workload}
The recursion in \Cref{eq: workload recursion} admits a stationary distribution. Let $W$ denote a random variable with this stationary distribution, and call it the stationary workload. Then the process started from zero and satisfying \Cref{eq: workload recursion} is stochastically dominated by it. Let $A\sim\Bern(\lambda)$ and, independently of $A$, let $X\sim\mathcal D$ and $G\mid X\sim\Geom(p(X))$. With $Z=AG$,
\begin{equation}
 \P(W>0)=\rho,
 \qquad
 \E W=\frac{\E Z^2+\rho-2\rho^2}{2(1-\rho)}.
 \label{eq:stationary-workload}
\end{equation}
\end{lemma}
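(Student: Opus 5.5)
The plan has three steps: construct the stationary law, prove the domination by monotone coupling, and read off the identities from the recursion itself. The recursion $W_{t+1}=(W_t-1)^++Z_t$, with $Z_t$ i.i.d.\ copies of $Z=AG$, is a Lindley-type recursion on $\mathbb Z_{\ge0}$.

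\textbf{Existence.} For existence, I will use the exponential moment bound already proved in Item 1. Since $\E e^{rW_t}\le K_r$ uniformly in $t$, Markov's inequality makes the laws of $(W_t)$ tight. The map $w\mapsto (w-1)^++z$ is nondecreasing in $w$, so the chain is stochastically monotone. Starting from $W_1=0$, the minimal state, we have $W_2\succeq W_1$ trivially. By monotonicity and induction, $W_{t+1}\succeq W_t$ for all $t$. The laws are therefore stochastically nondecreasing and tight, so they converge weakly to a probability law $\pi$ on $\mathbb Z_{\ge0}$. Because the state space is discrete and the transition is Feller, passing to the limit in $\P(W_{t+1}=k)=\sum_j \P(W_t=j)P(j,k)$ shows that $\pi$ is stationary; dominated convergence applies since each $P(\cdot,k)$ is bounded. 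Since the sequence increases to its limit, $W_t\preceq W\sim\pi$ for every $t$, which is the claimed domination. Fatou's lemma also gives $\E e^{rW}\le K_r$, so every moment used below is finite. I expect this existence-and-domination step to be the main (though routine) obstacle, mainly in stating the monotone-coupling argument cleanly.

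\textbf{$\P(W>0)=\rho$.} Take expectations in stationarity, using that $Z$ is independent of the current state. This gives $\E W=\E(W-1)^++\E Z=\E W-\P(W>0)+\rho$, and since $\E W<\infty$, it follows that $\P(W>0)=\E Z=\lambda\E[1/p(X)]=\rho$.

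\textbf{The formula for $\E W$.} I will square the recursion. Set $U=(W-1)^+$, so that $U^2=(W-1)^2\ind{W>0}=W^2-2W+\ind{W>0}$. Stationarity gives
\[
\E W^2=\E U^2+2\E U\,\E Z+\E Z^2 .
\]
The substitutions are $\E U^2=\E W^2-2\E W+\rho$ and $\E U=\E W-\rho$, together with $\E Z=\rho$. Plugging these in yields $0=-2\E W+\rho+2\rho(\E W-\rho)+\E Z^2$, hence
\[
\E W=\frac{\E Z^2+\rho-2\rho^2}{2(1-\rho)},
\]
using $\rho<1$. Finiteness of $\E W^2$ follows from the exponential moment, so cancelling $\E W^2$ is legitimate.
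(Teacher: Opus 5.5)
Your proof is correct. The two identities are derived exactly as in the paper: take expectations of $W'=(W-1)^++Z$, then square it and cancel $\E W^2$.

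Where you differ is existence and domination. The paper invokes the Foster--Lyapunov criterion (drift $\rho-1<0$ off zero, plus reachability of $0$) to get positive recurrence. It then asserts a finite stationary exponential moment from the exponential drift. Domination is obtained separately, by running the zero-started and stationary chains with common increments and using monotonicity of $w\mapsto(w-1)^++z$.

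You instead build the stationary law as the monotone limit of the zero-started chain (a Loynes-type construction). Stochastic monotonicity gives $W_1\preceq W_2\preceq\cdots$, and the already-proved bound $\E e^{rW_t}\le K_r$ gives tightness. The limit is then automatically stationary and dominates every $W_t$.

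What your route buys:
\begin{itemize}
\item existence and domination in one stroke;
\item no appeal to an external recurrence theorem;
\item via Fatou, the explicit bound $\E e^{rW}\le K_r$, which is exactly what legitimizes cancelling $\E W$ and $\E W^2$ (the paper's sketch only asserts this moment bound).
\end{itemize}
What the paper's route buys is positive recurrence, and hence uniqueness of the stationary law, which the statement does not need.

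One small wording fix: boundedness of $P(\cdot,k)$ is not by itself a dominating function for the sum over $j$. Justify the limit either by noting that on a discrete space weak convergence tests against every bounded function (equivalently, Scheff\'e's lemma), or by the summable bound $\P(W_t=j)\le K_re^{-rj}$.
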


\begin{proof}
For every integer $w\ge1$, workload satisfies the following negative-drift inequality.
\begin{align*}
    \E[W_{t+1}-W_t\mid W_t=w]
    &= \E[A_tG(X_t)]-1 \\
    &=\lambda\E_{X\sim \mathcal{D}, G(X)\sim \Geom((p(X)))}[G(X)]-1 \\
    &=\lambda \E_{X\sim \mathcal{D}}[1/p(X)]-1=\rho-1<0.
\end{align*}
From every finite state, a sufficiently long run without arrivals reaches zero. The countable-state Foster--Lyapunov criterion therefore gives positive recurrence. The exponential drift in the proof of \Cref{lem:workload} gives a finite stationary exponential moment. Because $w\mapsto(w-1)^++Z_t$ is nondecreasing, coupling a zero-started process and a stationary process with the same increments gives stochastic domination.

For the stationary workload $W$, let $W'=(W-1)^++Z$, where $Z$ is independent of $W$. Since $W'\stackrel{d}=W$ by the definition of the stationary workload, taking expectations of $W, W'$ gives
\[
 0=\E(W'-W)=-\P(W>0)+\E Z,
\]
and hence $\P(W>0)=\E Z=\rho$. Moreover, since $(W-1)^+=W-\ind{W\ge1}$ and $((W-1)^+)^2=W^2-2W+\ind{W\ge1}$,
\[
 \E(W-1)^+=\E W-\rho,
 \qquad
 \E((W-1)^+)^2=\E W^2-2\E W+\rho.
\]
Squaring $W'=(W-1)^++Z$, taking expectations, and using the independence of $W$ and $Z$ gives
\begin{align*}
 \E W^2
 &=\E((W-1)^+)^2+2\E((W-1)^+)\E Z+\E Z^2\\
 &=\E W^2-2\E W+\rho+2\rho(\E W-\rho)+\E Z^2.
\end{align*}
Canceling $\E W^2$ from both sides and collecting the terms in $\E W$ gives
\[
 2(1-\rho)\E W=\E Z^2+\rho-2\rho^2.
\]
Division by $2(1-\rho)>0$ proves \Cref{eq:stationary-workload}.
\end{proof}

\section{Estimating Departure Probabilities}
\label{app:statistics}

This section proves the two bounds in \Cref{lem:estimation-bounds}. These inequalities will serve as key ingredients in the regret analysis.

\subsection{\texorpdfstring{Proof of first item in \Cref{lem:estimation-bounds}}{Proof of the population-prediction lemma}}
\label{app:function-estimation-bounds}

\begin{proof}
Let $z=x^\top\theta$. The gradient and Hessian of the logistic loss are
\[
 \nabla\ell_\theta(x,y)=(\mu(z)-y)x,
 \qquad
 \nabla^2\ell_\theta(x,y)=\mu'(z)xx^\top.
\]

Because $|\mu(z)-y|\le1$ and $\|x\|_2\le1$, $\|\nabla\ell_\theta(x,y)\|_2\le1$. Thus the loss is $1$-Lipschitz in $\theta$. The parameter ball has diameter at most $2S\le2\bar S$, so the mean-value theorem gives
\[
 |\ell_\theta(x,y)-\ell_{\theta'}(x,y)|\le2\bar S
 \qquad
 \text{for all }\|\theta\|_2,\|\theta'\|_2\le S.
\]

We next verify exp-concavity. Since $|z|\le S\le\bar S$, if $y=0$, then
\[
 \frac{\mu'(z)}{\mu(z)^2}=\frac{1-\mu(z)}{\mu(z)}=e^{-z}\ge e^{-\bar S};
\]
if $y=1$, then
\[
 \frac{\mu'(z)}{(1-\mu(z))^2}=\frac{\mu(z)}{1-\mu(z)}=e^z\ge e^{-\bar S}.
\]
In either case,
\[
 \nabla^2\ell_\theta(x,y)
 \succeq e^{-\bar S}\nabla\ell_\theta(x,y)\nabla\ell_\theta(x,y)^\top,
\]
which is the second-order characterization of $e^{-\bar S}$-exp-concavity on the convex parameter ball.

Before applying the excess-risk theorem, observe that $\theta^*$ minimizes the population logistic risk over the parameter ball. Indeed, for every $\theta$ in this ball, since $Y\sim \Bern(p(X))$ conditional on the context $X$, the following equality about KL divergence holds.
\begin{equation}
\begin{aligned}
 \E\ell_\theta(X,Y)-\E\ell_{\theta^*}(X,Y)
 &= \left(\E_{X} \left[\log\left(1+e^{X^\top \theta}\right)\right]-\E\left[p(X)X^\top \theta\right]\right) \\
 &\;\;\quad-\left(\E_{X} \left[\log\left(1+e^{X^\top \theta^*}\right)\right]-\E\left[p(X)X^\top \theta^*\right]\right) \\
 &=\E_{X}\left[\log\left(\frac{1+e^{X^\top \theta}}{1+e^{X^\top \theta^
 *}}\right)-p(X)\log\left(\frac{e^{X^\top \theta}}{e^{X^\top \theta^*}}\right)\right] \\
 &=\E_{X}\left[p(X)\log\left(\frac{1+e^{-X^\top \theta}}{1+e^{-X^\top \theta^
 *}}\right)+(1-p(X))\log\left(\frac{1+e^{X^\top \theta}}{1+e^{X^\top \theta^
 *}}\right)\right] \\
 &=\E_{X}\left[p(X)\log\left(\frac{p(X)}{\mu(X^\top \theta)}\right)+(1-p(X))\log\left(\frac{1-p(X)}{1-\mu(X^\top \theta)}\right)\right] \\
&=\E_X\KL\left(\Bern(p(X))\,\middle\|\,\Bern(\mu(X^\top\theta))\right)\ge0
 \label{eq:logistic-risk-kl}
\end{aligned}
\end{equation}

Since we estimate $\theta^*$ by $\widehat \theta_n$ by using empirical risk minimization, we can apply Theorem~1 of \cite{mehta2017fast}. In the notation of that theorem, the feasible set is a $d$-dimensional closed ball of radius $S$ to which the unknown vector $\theta^*$ belongs, so its diameter is $2S\leq 2\bar S$. For every $(x, y)$, the loss function $\ell_\theta(x,y)$ viewed as a function of the variable $\theta$ has Lipschitz constant $1$ as shown above. Its exp-concavity parameter is $e^{-\bar S}$, and for every $(x, y), \theta, \theta^*$, $|\ell_\theta(x,y)-\ell_{\theta^*}(x,y)|\leq |\theta-\theta^*|\leq 2\bar S$ holds because the loss function has Lipschitz constant $1$. The theorem contains the factor $2\bar S\vee e^{\bar S}$, which equals $e^{\bar S}$ because $e^{\bar S}\ge2\bar S$ for $\bar S\ge1$. Its logarithmic term becomes $\log(16\cdot1\cdot2\bar S\cdot n)=\log(32\bar S n)$. Consequently, for every $n\ge5$, with probability at least $1-\delta$,
\begin{equation}
\begin{aligned}
 \E\ell_{\widehat\theta_n}(X,Y)-\E\ell_{\theta^*}(X,Y)\le\frac{8e^{\bar S}\left(d\log(32\bar S n)+\log(1/\delta)\right)+1}{n}.
\end{aligned}
 \label{eq:exp-concave-excess}
\end{equation}
By \Cref{eq:logistic-risk-kl}, the left-hand side of \Cref{eq:exp-concave-excess} equals the expected Bernoulli KL divergence over all $X\sim \mathcal{D}$. Pinsker's inequality gives $|p-q|\le\sqrt{\KL(\Bern(p)\|\Bern(q))/2}$. Applying Pinsker's inequality pointwise for each $X$ and then using the Cauchy-Schwarz inequality, we get the following inequality.
\begin{align*}
 \E_X|p(X)-\mu(X^\top\widehat\theta_n)|
 &\le \E_X\sqrt{\frac12\KL\left(\Bern(p(X))\,\middle\|\,\Bern(\mu(X^\top\widehat\theta_n))\right)}\\
 &\le \sqrt{\E_X\left[\frac12\KL\left(\Bern(p(X))\,\middle\|\,\Bern(\mu(X^\top\widehat\theta_n))\right)\right]\cdot \E_X[1]} \\
&=\sqrt{\frac{1}{2}\E_X\KL\left(\Bern(p(X))\,\middle\|\,\Bern(\mu(X^\top\widehat\theta_n))\right)} \\
 &= \sqrt{\frac12\left(\E\ell_{\widehat\theta_n}(X,Y)-\E\ell_{\theta^*}(X,Y)\right)}
\end{align*}
Projection onto $[p_-,p_+]$ cannot increase the absolute error because $p(X)$ already lies in this interval. So $|p(X)-\widehat{p}(X)|\leq |p(X)-\mu(X^\top \widehat\theta_n)|$ always holds by the definition of $\widehat{p}(X)$, and this gives $\E_X|p(X)-\widehat{p}(X)|\leq  \sqrt{\frac12\left(\E\ell_{\widehat\theta_n}(X,Y)-\E\ell_{\theta^*}(X,Y)\right)}$ by the above inequality. Substituting \Cref{eq:exp-concave-excess} proves desired result. For $n<5$, the definition gives $\eps_n(\delta)=1$, while $|\widehat p_n(X)-p(X)|\le1$. So this shows first item of \Cref{lem:estimation-bounds} holds for every $n\geq 1$.
\end{proof}

\subsection{A simultaneous prefix bound}

\begin{corollary}\label{cor:simultaneous-prediction}
For $\delta\in(0,1)$, define $\delta_n=6\delta/(\pi^2n^2)$. With probability at least $1-\delta$, simultaneously for every $n\ge1$,
\[
 \mathcal E(\widehat p_n)\le\eps_n(\delta_n).
\]
\end{corollary}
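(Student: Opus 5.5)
This follows from the first item of \Cref{lem:estimation-bounds} combined with a union bound over $n$. For each fixed $n\ge1$, apply that item with confidence level $\delta_n=6\delta/(\pi^2n^2)$. Since $\delta\in(0,1)$ and $6/\pi^2<1$, we have $\delta_n\in(0,1)$, so the lemma applies. Define the failure event
\[
B_n:=\{\mathcal E(\widehat p_n)>\eps_n(\delta_n)\}.
\]
The lemma gives $\P(B_n)\le\delta_n$ for every $n$. Here $\widehat p_n$ is the predictor built from the first $n$ pairs of the i.i.d.\ sequence $(X_i,Y_i)_{i\ge1}$, via the ERM in \Cref{eq:logistic-erm}. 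This matters because a single sample path defines all the prefix estimators simultaneously, so the events $B_n$ all live on one probability space. For $n<5$ we have $\eps_n(\delta_n)=1$ and $\mathcal E(\widehat p_n)\le 1$ trivially, so $B_n=\emptyset$; this causes no difficulty.

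Next, take a countable union bound:
\[
\P\Bigl(\bigcup_{n\ge1}B_n\Bigr)\le\sum_{n\ge1}\frac{6\delta}{\pi^2 n^2}=\frac{6\delta}{\pi^2}\cdot\frac{\pi^2}{6}=\delta.
\]
On the complement of this union, $\mathcal E(\widehat p_n)\le\eps_n(\delta_n)$ holds for every $n\ge1$ simultaneously, which is the claim.

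The one point needing care is measurability and well-definedness of $\widehat p_n$ when the argmin in \Cref{eq:logistic-erm} is not unique. I would fix a measurable selection of the minimizer, for example a fixed deterministic tie-breaking rule. The per-$n$ guarantee from Theorem~1 of \citet{mehta2017fast} holds for any ERM selection, so the union bound goes through unchanged. Beyond this, I do not expect any real obstacle: no dependence between the estimators across different $n$ needs to be controlled, because a union bound requires no independence.
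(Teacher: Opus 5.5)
Your proposal is correct and matches the paper's proof. Apply the first item of \Cref{lem:estimation-bounds} at level $\delta_n$ for each fixed $n$, then take a countable union bound using $\sum_{n\ge1}\delta_n=\delta$; because all prefix estimators are defined on the same job-indexed i.i.d.\ sequence $(X_i,Y_i)_{i\ge1}$, no stopping-time conditioning is needed. Your extra checks (that $\delta_n\in(0,1)$, the trivial case $n<5$, and a measurable ERM selection) are sound additions the paper leaves implicit.
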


\begin{proof}
For each fixed $n$, $\mathcal E(\widehat p_n)\le\eps_n(\delta_n)$ fails with probability at most $\delta_n$ by first item of \Cref{lem:estimation-bounds}. So the probability that there exists a natural number such that $\mathcal E(\widehat p_n)\le\eps_n(\delta_n)$ d $\sum_{n\ge1}\delta_n=\delta$, a union bound proves the result. No stopping-time conditioning is used: all estimators are defined on the same infinite i.i.d. sequence of job-indexed observations $(X_i,Y_i)_{i\ge1}$.
\end{proof}

\subsection{\texorpdfstring{Proof of second item in \Cref{lem:estimation-bounds}}{Proof of the Wasserstein distance lemma}}
\label{app:distribution-estimation-bounds}

Conditional on $\widehat p_n$, the variables $\widehat p_n(X'_1),\ldots,\widehat p_n(X'_n)$ are i.i.d. with law $F_{\widehat p_n}$ and support in $[p_-,p_+]$. The Dvoretzky--Kiefer--Wolfowitz inequality \citep{dvoretzky1956asymptotic} gives, with probability at least $1-\delta$,
\[
 \sup_z|\widehat F_n((\! -\infty,z])-F_{\widehat p_n}((\! -\infty,z])|
 \le\sqrt{\frac{\log(2/\delta)}{2n}}.
\]
For probability distributions supported on $[p_-,p_+]$, the one-dimensional identity for $W_1$ proves second item of \Cref{lem:estimation-bounds} as follows.
\begin{align*}
 W_1(\widehat F_n,F_{\widehat p_n})
 &=\int_{p_-}^{p_+}|\widehat F_n((\! -\infty,z])-F_{\widehat p_n}((\! -\infty,z])|\,dz\\
 &\le(p_+-p_-)\sqrt{\frac{\log(2/\delta)}{2n}},
\end{align*}
Furthermore, couple $\widehat p_n(X)$ and $p(X)$ using the same independent $X\sim\mathcal D$ and applying the definition of Wasserstein distance, we get following inequality.
\[
 W_1(F_{\widehat p_n},F_p)=
\inf_{\substack{(P,\widehat P)\ \text{coupling}\\
P\sim F,\ \widehat P\sim\widehat F}}
\mathbb E\bigl[|P-\widehat P|\bigr]\le\E|p(X)-\widehat p_n(X)|=\mathcal E(\widehat p_n).
\]

\section{Known-Horizon Analysis}
\label{app:known-proof}

We now prove \Cref{thm:known-horizon}. The proof uses three ingredients; \Cref{lem:planning-transfer}, \Cref{lem:sample-availability}, and \Cref{lem:empty-horizon-monotonicity}. The first is used to compare the empirical Bellman model with the true system. The second ensures that the learning phase can collect a sufficient amount of data to make an empirical distribution and an estimated probability function, with high probability. The third one compares the minimum expected queue length of a queue that starts with fewer remaining rounds to the minimum expected queue length of the $T$-round benchmark, and this allows us to calculate the regret upper bound correctly.

The first ingredient is the planning-transfer lemma. If the system assumes its unknown probability function and distribution to be $q:\mathbb{R}^d\to[p_-,p_+]$ and $\widehat{F}$ and uses the Bellman recursion that follows $(q, \widehat{F})$ (for convenience of exposition, we write this Bellman recursion as the empirical Bellman recursion) to find out the optimal decision in each round, this empirical Bellman recursion differs from the true optimal system (the system that makes its decision according to the true Bellman recursion) in two ways. First, for the probability function, it replaces $p(X)$ by $q(X)$ for every job. Second, for the distribution, it uses the distribution $\widehat{F}$ which is an estimated distribution of $F_q=\operatorname{Law}(q(X))$, although the real distribution of departure probabilities of jobs is $F_p=\operatorname{Law}(p(X))$. By focusing on these probability/distribution differences, we can find an upper bound of regret between a queue that follows the empirical Bellman recursion and a queue that follows the true Bellman recursion (i.e. a queue with the smallest expected queue length). Here, we upper bound the prediction error of $p, q$ by the $L^1(D)$ distance of $p, q$, and the distributional error by the Wasserstein distance between $F_q, \widehat{F}$.

For an $h$-round system that follows the probability function $q$ and starts empty, let $J_h^p(\pi)$ be the expected terminal queue under policy $\pi$, and let $V_h^p$ be its minimum over the relevant policy class in the given setting. Then the following lemma holds for the estimated probability function $q$ and empirical distribution $\widehat{F}$.

\medskip
\begin{lemma}[Planning transfer]\label{lem:planning-transfer}
Fix \(q:\mathbb{R}^d\to[p_-,p_+]\) before an \(h\)-round process starts with an empty queue. Let \(F_q=\operatorname{Law}(q(X))\), let \(\widehat F\) be a distribution on \([p_-,p_+]\), and let \(\widehat\pi\) be an optimal policy following the Bellman recursion defined by \((q,\widehat F)\). When \(\widehat\pi\) is run in the true contextual model,
\begin{equation}
    J_h^p(\widehat\pi)-V_h^p
    \le
    \frac{2h^2}{p_-}\mathbb{E}|q(X)-p(X)|
    +
    \frac{(h-1)h(h+1)(h+2)}{12}
    W_1(F_q,\widehat F).
    \label{eq:planning-transfer}
\end{equation}
The result holds both in the idling-allowed setting and the work-conserving setting.
\end{lemma}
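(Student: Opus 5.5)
We decompose exactly as in the proof sketch of \Cref{thm:known-horizon}:
\[
J_h^p(\widehat\pi)-V_h^p
=\bigl(J_h^p(\widehat\pi)-J_h^q(\widehat\pi)\bigr)+\bigl(J_h^q(\widehat\pi)-V_h^q\bigr)+\bigl(V_h^q-V_h^p\bigr).
\]
The plan has two parts. First, we prove a same-policy transfer bound $|J_h^p(\pi)-J_h^q(\pi)|\le (h^2/p_-)\E|q(X)-p(X)|$, valid for every admissible policy $\pi$ whose decisions depend on contexts only through the observed history. This controls the first and third terms. For the third term, apply it with $\pi=\pi_p^*$ to get $V_h^q\le J_h^q(\pi_p^*)\le V_h^p+(h^2/p_-)\mathcal E(q)$. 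Second, we prove a distribution-transfer bound for the middle term. Both arguments work within the IA class or the WC class, since we never leave the class.

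\textbf{Same-policy transfer.} Couple the two systems with common arrivals and contexts. For the job served in each round, use a common uniform $U_t$ and set $D_t^p=\ind{U_t\le p(x)}$ and $D_t^q=\ind{U_t\le q(x)}$. Run $\pi$ in both systems on the same randomness until the first round in which $D^p_t\ne D^q_t$; before that round the histories coincide. Conditionally on serving a job with context $x$ in round $t$, the probability of disagreement is $|p(x)-q(x)|$. After disagreement, the terminal queue lengths can differ by at most the number of remaining rounds, at most $h$, because each round changes $Q$ by at most one departure and the arrivals are common. So the difference is at most $h\sum_t\E|p(x_t)-q(x_t)|$, where $x_t$ is the job served in round $t$.

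The obstacle is that the served job is policy-selected, not a fresh draw from $\mathcal D$, so $\E|p(x_t)-q(x_t)|\ne\mathcal E(q)$ in general. To fix this, charge errors per job instead of per round. A job with context $x$ is served for a number of rounds whose expectation is at most $1/p_-$. Any job's context is independent of $\pi$'s choice to admit it, because contexts are drawn i.i.d. at arrival regardless of policy. At most $h$ jobs arrive, and each contributes expected disagreement at most $|p-q|(x)/p_-$. This gives a total of $h\cdot h\,\mathcal E(q)/p_-$. Here one must argue carefully by Wald's identity that the number of service attempts on job $i$ is stopped independently of $X_i$'s error magnitude beyond the $1/p_-$ bound.

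\textbf{Distribution transfer.} Here the success function is fixed to $q$, and we compare the Bellman recursions under $F_q$ and $\widehat F$. Let $V_k^F$ denote the value functions. By induction on $k$, show that $\sup_{\text{states}}|V_k^{F_q}-V_k^{\widehat F}|\le c_k W_1(F_q,\widehat F)$. The arrival operator differs by $\lambda\bigl|\int f(c+e_r,a)\,d(F_q-\widehat F)(r)\bigr|$. By Kantorovich--Rubinstein duality, this is bounded by the Lipschitz constant in $r$ of $r\mapsto V_{k-1}(c+e_r,a)$. That Lipschitz constant is itself bounded inductively by roughly $k^2$, since the recursion in $r$ appears linearly through the $r_j$ weights, each bounded by $k$. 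Summing yields a cubic $c_k$. A standard argument, using the Bellman policy of $\widehat F$ evaluated under $F_q$, gives $J_h^q(\widehat\pi)-V_h^q\le 2\sup|V^{F_q}-V^{\widehat F}|$ accumulated over the horizon, which produces the quartic $(h-1)h(h+1)(h+2)/12$ after summation.

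The main obstacle is establishing the Lipschitz-in-$r$ estimate for values of states containing a job of probability $r$, uniformly over queue states. I would first try a coupling argument: compare two systems differing in one job's probability, with the same policy imitated, to get the Lipschitz bound $k$ per round.
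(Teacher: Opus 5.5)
Your proposal is correct. It follows the paper's three-term decomposition. Your per-job charging in the same-policy step is the paper's argument in different clothing. The paper pre-samples each job's geometric service time and couples via \Cref{lem:geometric-coupling}. Under nonpreemption, your per-round common uniforms yield exactly that coupling, with disagreement probability at most $|p-q|(X_i)/\max\{p,q\}(X_i)\le |p-q|(X_i)/p_-$ per arriving job. A union bound over the $\lambda h$ expected arrivals then gives $h^2\mathcal E(q)/p_-$.

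The genuine difference is in the distribution-transfer step. The paper proves one joint paired-Bellman bound, $|V_k^F(s)-V_k^{\widetilde F}(\widetilde s)|\le a_k\Delta(s,\widetilde s)+g_k\eta$ with $a_k=k(k+1)/2$ and $g_k=(k-1)k(k+1)/6$. It does this by a single induction that couples departures through a common uniform and arrivals through a $W_1$-optimal coupling, and it gets the per-action comparison as a by-product. The label-Lipschitz estimate you flag as your main obstacle is exactly the $\eta=0$ case of that lemma. It is proved by the recursion $a_k=a_{k-1}+k$ you sketch, since a disagreement with $j$ rounds left costs at most $j$. You need this sharp $a_k$; a crude $k^2$ would lose up to a factor $2$ in the final constant. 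You instead separate the two effects and handle the arrival mismatch by Kantorovich--Rubinstein duality, which is the dual form of the paper's optimal coupling. This gives $c_k=c_{k-1}+\lambda a_{k-1}$, i.e.\ $c_k=\lambda g_k$, so your final constant is even a factor $\lambda$ smaller than the stated one.

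Two small corrections. First, the telescoping step needs the per-action bound $|Q_k^{F_q}(s,a)-Q_k^{\widehat F}(s,a)|\le c_k\eta$, not just a sup over $V$; it falls out of the same induction. Second, your claim that a job's context is independent of the policy's choice to serve it is false, since policies observe contexts. It is also unnecessary: bound the indicator that job $i$ is ever started by $1$ and sum over all arriving jobs, whose contexts are i.i.d.\ $\mathcal D$ and whose number does not depend on the policy. With that, no Wald-type argument is needed. By nonpreemption, the number of rounds job $i$ is served before its departure or a disagreement is dominated by $\Geom(\max\{p,q\}(X_i))$, conditionally on $X_i$ and the history at its start.
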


The second ingredient ensures that the FCFS learning phase has observed a complete prefix of jobs in arrival order. This matters because the statistical events are defined on the job-indexed sequence \((X_i,Y_i)_{i\ge1}\), independently of the random time at which the algorithm happens to have served enough jobs.

\medskip
\begin{lemma}[Sample availability]\label{lem:sample-availability}
Let \(N_L\) be the number of jobs that have started service by the end of round \(L\), and let \(n=\lfloor\lambda L/8\rfloor\). Under FCFS,
\begin{equation}
    \mathbb{P}(N_L<2n)
    \le
    e^{-\lambda L/8}+K_re^{-r\lambda L/4}.
    \label{eq:chernoff bound on N_L}
\end{equation}
On \(\{N_L\ge2n\}\), the first \(2n\) observations \((X_i,Y_i)\) recorded by the algorithm correspond to the first \(2n\) jobs in arrival order.
\end{lemma}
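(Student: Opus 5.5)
The proof splits $\{N_L<2n\}$ according to how many jobs have arrived by round $L$. Let $N^{\rm arr}_L:=\sum_{t=1}^{L}A_t$. Under FCFS (a WC policy) every arrived job either has started service by the end of round $L$ or is still waiting at the beginning of round $L+1$. Job $i$ arriving at the end of round $L$ is available only from round $L+1$, so it counts as waiting. Hence
\[
N_L\ \ge\ N^{\rm arr}_L-Q_{L+1}\ \ge\ N^{\rm arr}_L-W_{L+1},
\]
using $Q_t\le W_t$ from the workload coupling. Therefore
\[
\{N_L<2n\}\subseteq\{N^{\rm arr}_L<4n\}\cup\{W_{L+1}>2n\}.
\]
The aim is to bound these two events by the two terms of \eqref{eq:chernoff bound on N_L} respectively.

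\textbf{Arrival count.} $N^{\rm arr}_L\sim\mathrm{Binomial}(L,\lambda)$ and $4n\le\lambda L/2$. The multiplicative Chernoff bound gives $\P(N^{\rm arr}_L<\lambda L/2)\le\exp(-\lambda L\cdot(1/2)^2/2)=e^{-\lambda L/8}$. This is exactly the first term, so I would use the standard lower-tail form $\P(N<(1-\delta)\mu)\le e^{-\delta^2\mu/2}$ with $\delta=1/2$.

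\textbf{Workload.} The workload process under FCFS is the WC workload started from $W_1=0$, so Markov's inequality on the exponential moment together with Item 1 of \Cref{lem:workload} gives
\[
\P(W_{L+1}>2n)\le K_re^{-2rn}.
\]
Since $n=\lfloor\lambda L/8\rfloor$, we have $2n\ge\lambda L/4-2$, so this only yields $K_re^{2r}e^{-r\lambda L/4}$, which has an extra constant factor. This mismatch is the main obstacle. To remove the slack I would sharpen the decomposition: use threshold $4n+1$ or larger for arrivals where possible, or note that $N_L$ is an integer, so $N_L<2n$ means $N_L\le 2n-1$. More cleanly, I would split at $\lambda L/4$ directly: $\{N^{\rm arr}_L<\lambda L/2\}\cup\{W_{L+1}>N^{\rm arr}_L-2n\}$, and on the complement of the first event we have $N^{\rm arr}_L-2n\ge \lambda L/2-\lambda L/4=\lambda L/4$. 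That gives $\P(W_{L+1}>\lambda L/4)\le K_re^{-r\lambda L/4}$ exactly, and the Chernoff event is unchanged. This is the version I would write.

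\textbf{Ordering.} FCFS starts jobs in arrival order, and each job's first-attempt indicator is recorded at its start. So the first $N_L$ recorded pairs are the first $N_L$ jobs in arrival order, and on $\{N_L\ge 2n\}$ the first $2n$ recorded pairs are $(X_i,Y_i)_{i\le 2n}$ of the job-indexed i.i.d. sequence.
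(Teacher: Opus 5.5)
Your final version is correct and is essentially the paper's own argument. You bound the arrival shortfall $\{N^{\rm arr}_L<\lambda L/2\}$ by Chernoff, lower-bound $W_{L+1}$ by the number of arrived-but-unstarted jobs (which exceeds $\lambda L/2-2n\ge\lambda L/4$ on the complement), apply Markov's inequality to $e^{rW_{L+1}}$ with the constant $K_r$, and use FCFS ordering for the prefix claim, all exactly as the paper does.
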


The final ingredient is a monotonicity property of the finite-horizon empty-state value.

\medskip
\begin{lemma}[Empty-state horizon monotonicity]\label{lem:empty-horizon-monotonicity}
The minimum terminal cost over policies that may idle is nondecreasing when jobs are added to the initial state. In particular, for systems that start empty,
\begin{equation}
    V_h^{\mathrm{IA}}\le V_{h+1}^{\mathrm{IA}}.
    \label{eq: monotonicity of V_h}
\end{equation}
\end{lemma}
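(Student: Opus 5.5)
We prove monotonicity in the initial state by a coupling/simulation argument, and then deduce the horizon statement from the first round of an empty-start system. Fix an initial state $s$ (waiting jobs plus possibly a job in service) and a state $s'$ obtained from $s$ by adding one extra waiting job $j$ of success probability $r_j$. It suffices to show that, for every horizon $h$, the optimal value from $s'$ is at least that from $s$. Take any IA policy $\pi'$ started from $s'$. We build a policy $\pi$ started from $s$ that mimics $\pi'$ on a common probability space: the same arrivals and contexts, and, for each job, the same independent Bernoulli service outcomes.

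The simulating policy $\pi$ runs $\pi'$ internally, treating job $j$ as a phantom. Whenever $\pi'$ starts or continues a real job, $\pi$ does the same. Whenever $\pi'$ starts or continues the phantom $j$, $\pi$ idles and draws the phantom's departure indicator from $\Bern(r_j)$ with independent randomness. This is admissible because randomized IA policies are allowed and idling is always permitted. By construction, the two systems coincide except for the phantom, so pathwise
\[
Q^{\pi'}_{h+1}=Q^{\pi}_{h+1}+\ind{\text{phantom unfinished}}\ge Q^{\pi}_{h+1}.
\]
Taking expectations gives $J(\pi';s')\ge J(\pi;s)\ge V_h(s)$, and minimizing over $\pi'$ yields $V_h(s')\ge V_h(s)$. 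If the extra job is the one in service (so $s$ has a free server), the same mimicry works: $\pi$ idles while $\pi'$ serves it. By induction on the number of added jobs, the value is nondecreasing under adding jobs. One could instead prove this by induction on $h$ through \eqref{eq:free-bellman}, but the coupling argument avoids handling the nonpreemptive constraint case by case.

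For \eqref{eq: monotonicity of V_h}, consider an $(h+1)$-round empty-start system. In round $1$ the queue is empty, so the server idles, and then an arrival occurs with probability $\lambda$. Hence
\[
V_{h+1}^{\IA}=(1-\lambda)V_h(\emptyset)+\lambda\,\E_{X}\bigl[V_h(\{X\})\bigr],
\]
where $V_h(\{X\})$ is the $h$-round optimal value starting from one waiting job with context $X$. This is the arrival operator $\mathcal A$ applied to $V_h$ at the empty state. Since $V_h(\emptyset)=V_h^{\IA}$ and, by the first part, $V_h(\{X\})\ge V_h(\emptyset)$, we obtain $V_{h+1}^{\IA}\ge V_h^{\IA}$.

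The main obstacle is ensuring the simulated policy is admissible. It must be nonanticipating, and idling in place of phantom service must not violate the nonpreemption rules. It does not, since $\pi$ has no real job in service during those rounds. This is also why the statement is for IA and not WC.
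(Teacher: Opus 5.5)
Your proposal is correct and follows essentially the same route as the paper. The smaller system runs the larger system's policy as a virtual copy, with the same arrivals and service times for common jobs, and idles whenever an extra job would be served; the horizon inequality then follows by conditioning on the forced idle first round of the empty-start system. Your version is slightly more explicit than the paper's: you add one job at a time, and you generate the phantom's departure indicators from the policy's own independent randomization, a detail the paper leaves implicit.
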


The proofs of the above three lemmas will appear after the proof of \Cref{thm:known-horizon}.

\subsection{\texorpdfstring{Proof of \Cref{thm:known-horizon}}{Proof of the known-horizon theorem}}

Now, we start to prove \Cref{thm:known-horizon}.

\begin{proof}
Index the jobs in arrival order and let $(X_i,Y_i)_{i\ge1}$ denote their contexts and first-round departure indicators. This sequence is i.i.d. and does not depend on when the jobs begin service. Define
\begin{align}
 \mathcal G_1&=\{N_L\ge2n\},\\
 \mathcal G_2&=\{\mathcal E(\widehat p_n)\le\eps_n(\delta_T)\},\\
 \mathcal G_3&=\{W_1(\widehat F_n,F_{\widehat p_n})\le\eta_n(\delta_T)\}.
\end{align}
The estimator in $\mathcal G_2$ uses the first $n$ observations, whereas $\mathcal G_3$ uses the estimated probability function and contexts of jobs $n+1,\ldots,2n$. Neither event is conditioned on $\mathcal G_1$. Hence, \Cref{lem:sample-availability,lem:estimation-bounds} and a union bound give
\begin{equation}
 \P((\mathcal G_1\cap\mathcal G_2\cap\mathcal G_3)^c)
 \le \P(\mathcal G_1^c)+\P(\mathcal G_2^c)+\P(\mathcal G_3^c)
 \le e^{-\lambda L/8}+K_re^{-r\lambda L/4}+2\delta_T.
 \label{eq:known-good-event}
\end{equation}

Let $\mathcal G=\mathcal G_1\cap\mathcal G_2\cap\mathcal G_3$ and define the first empty round in the planning window by
\[
 \tau=\min\{s\in\{L+1,\ldots,T\}:Q_s=0\},
\]
with $\tau=\infty$ when the set is empty (i.e. after starting round $L$, the first round in which the queue hits zero at the end of that round is round $T$, or the queue never becomes empty at the end of the remaining rounds). On $\{\tau\le T\}$, define $h=T-\tau+1\le H$. Conditional on the history at the beginning of round $\tau$, the queue is empty. Thus, on $\mathcal{G}\cap \{\tau\le T\}$, the arrivals, contexts, and departure indicators from round $\tau$ onward are independent of the first $2n$ job-indexed observations used for estimation, since those $2n$ jobs have already completed before the start of round $\tau$.

On $\mathcal G\cap\{\tau\le T\}$, set $q=\widehat p_n$. After the queue hits zero at the beginning of round $\tau$, the final queue length $Q_{T+1}$ is only related to the arrival, departure and job selections in rounds $\tau, ..., T$, and all the previous ones are completely independent of it. Since the Bellman policy used after the start of $\tau$ is optimal under $(q,\widehat F_n)$, applying \Cref{lem:planning-transfer} and then \Cref{lem:empty-horizon-monotonicity} shows that the conditional terminal cost $J_h^p(\widehat\pi)$ is at most
\begin{equation}
\begin{aligned}
 V_h^{\IA}+\frac{2h^2}{p_-}\eps_n(\delta_T)
 +\frac{(h-1)h(h+1)(h+2)}{12}\eta_n(\delta_T)
 \le{}& V_T^{\IA}+\frac{2H^2}{p_-}\eps_n(\delta_T)\\
 &+\frac{(H-1)H(H+1)(H+2)}{12}\eta_n(\delta_T).
 \label{eq:gamma bound of J}
\end{aligned}
\end{equation}

On $\mathcal G^c\cap\{\tau\le T\}$, the queue is empty at the start of round $\tau$, and hence there are only $h\le H$ rounds remaining. During these remaining rounds, at most $H$ arrivals can occur, so $Q_{T+1}\le H$ regardless of the planning model.

It remains to control the event $\{\tau=\infty\}$. Before planning, \Cref{alg:lcp} is work conserving and, in the event $\{\tau=\infty\}$, the queue remains non-empty at the beginning of rounds $L+1, ..., T$; our queue must follow a work-conserving policy starting from round $L+1$ to round $T$. Recall $Q_s=0$ if and only if $W_s=0$ in a work-conserving policy. If we define $\tau_0=\inf\{j\ge0:W_{L+1+j}=0\}$, on $\{\tau=\infty,W_{T+1}>0\}$, the workload has not reached zero in any of the first $H$ updates, and hence $\tau_0>H$. On $\{\tau=\infty,W_{T+1}=0\}$, the terminal queue is zero. Consequently, \Cref{eq:workload-tail} gives
\begin{equation}
\begin{aligned}
 \E\left(Q_{T+1}\ind{\tau=\infty}\right)
 &\le\E\left(W_{T+1}\ind{\tau=\infty}\right)\\
 &=\E\left(W_{T+1}\ind{\tau=\infty, W_{T+1}>0}\right)\\
 &\le\E\left(W_{T+1}\ind{\tau_0>H}\right)\\
 &\le\frac{\psi(r)^H}{r}\E e^{rW_{L+1}}
 \le\frac{K_r}{r}\psi(r)^H.
 \label{eq:exponential decay bound}
\end{aligned}
\end{equation}
For clarity, let
\[
 \Gamma_H=\frac{2H^2}{p_-}\eps_n(\delta_T)
 +\frac{(H-1)H(H+1)(H+2)}{12}\eta_n(\delta_T).
\]
For an event $E$, define the event-restricted terminal cost
\[
J_T^\pi(E)
:=
\mathbb E\!\left[Q_{T+1}^\pi \mathbf 1_E\right].
\]
Then we have
\[
J_T^{\pi^{\mathrm{LCP}}}
=
J_T^{\pi^{\mathrm{LCP}}}(\mathcal G\cap\{\tau\le T\})
+
J_T^{\pi^{\mathrm{LCP}}}(\mathcal G^c\cap\{\tau\le T\})
+
J_T^{\pi^{\mathrm{LCP}}}(\{\tau=\infty\}).
\]
We bound these three terms separately by the properties we've shown above.

For each event $E\in \mathcal G\cap\{\tau\le T\}$, we define $J_h^p(\widehat\pi, E)$ (here $h=T-\tau+1$) as the conditional terminal cost of the queue when the first $1, ..., \tau-1$ rounds have happened as $E$. In other words, $J_h^p(\widehat\pi, E)=\E \left(Q^{\pi^{\mathrm{LCP}}}_{T+1}\mid E\right)$. The queue has been cleared by the beginning of round $\tau$, so the planning phase starts from an empty queue with at most $H$ rounds remaining. So by \Cref{eq:gamma bound of J}, $J_h^p(\widehat\pi, E)\leq V_T^{\IA}+\Gamma_H$ holds. Hence,

\begin{align*}
J_T^{\pi^{\mathrm{LCP}}}(\mathcal G\cap\{\tau\le T\})
&= \E \left[\E \left(Q^{\pi^{\mathrm{LCP}}}_{T+1}\mid E\right)\mathbf 1(E\in \mathcal G\cap\{\tau\le T\})\right] \\
&= \E \left[J_h^p(\widehat\pi, E)\mathbf 1(E\in \mathcal G\cap\{\tau\le T\})\right] \\
&\leq \E \left[\left(V_T^{\IA}+\Gamma_H\right)\mathbf 1(E\in \mathcal G\cap\{\tau\le T\})\right] \\
&=\left(V_T^{\IA}+\Gamma_H\right)\mathbb P(\mathcal G\cap\{\tau\le T\}).
\end{align*}

For $\mathcal G^c\cap\{\tau\le T\}$, as we've shown that $Q_{T+1}^{\pi^{\mathrm{LCP}}}\le H$ holds regardless of the planning model, we can get the following inequality directly.
\[
J_T^{\pi^{\mathrm{LCP}}}(\mathcal G^c\cap\{\tau\le T\})
\le
H\mathbb P(\mathcal G^c\cap\{\tau\le T\}).
\]

Finally, on $\{\tau=\infty\}$, the following inequality holds by \Cref{eq:exponential decay bound}.
\[
J_T^{\pi^{\mathrm{LCP}}}(\{\tau=\infty\})
\le
\frac{K_r}{r}\psi(r)^H.
\]
Combining the three bounds gives
\begin{align*}
J_T^{\pi^{\mathrm{LCP}}}
&\le
\left(V_T^{\IA}+\Gamma_H\right)\mathbb P(\mathcal G\cap\{\tau\le T\})
+
H\mathbb P(\mathcal G^c\cap\{\tau\le T\})
+
\frac{K_r}{r}\psi(r)^H. \\
&\le
V_T^{\IA}+\Gamma_H
+
H\mathbb P(\mathcal G^c)
+
\frac{K_r}{r}\psi(r)^H.
\end{align*}
Subtracting $V_T^{\IA}$ and using $V_T^{\IA}\ge0$ yields
\begin{align*}
R_T^{\IA}(\pi^{\mathrm{LCP}})
&=
J_T^{\pi^{\mathrm{LCP}}}-V_T^{\IA} \\
&\le
\Gamma_H
+
H\mathbb P(\mathcal G^c)
+
\frac{K_r}{r}\psi(r)^H .
\end{align*}
Substituting \Cref{eq:known-good-event} proves \Cref{eq:known-horizon-bound}.

For $H=\lceil\log^2(eT)\rceil$ and sufficiently large $T$, $L\ge T/2$ and $n\ge\lambda T/32$. Hence $\eps_n(\delta_T)=\wtO(\sqrt{d/(\lambda T)})$ and $\eta_n(\delta_T)=\wtO((\lambda T)^{-1/2})$. The factors $H^2$ and $(H-1)H(H+1)(H+2)$ are polylogarithmic. Since $\psi(r)<1$, the workload and sampling-tail terms are smaller than any inverse polynomial in $T$. This proves the fact that regret is $\wtO(\sqrt{d/(\lambda T)})$. An upper bound on the total number of computations required to evaluate the Bellman recursions during the planning stage is given in \Cref{app:bellman-recursion-count}.
\end{proof}

\subsection{Total Bellman Computation Cost during the Plan Phase}
\label{app:bellman-recursion-count}

We now bound the total number of Bellman terms evaluated during the planning stage. Suppose that the planning stage starts with $h$ rounds remaining and with empirical distribution $\widehat F_n$, where $h\le H=T-L$. During planning, the algorithm recomputes the empirical Bellman recursion from the current state. For $s=1,\ldots,h$, let $B_s$ be the number of elementary Bellman terms evaluated when there are $s$ rounds remaining and the server is free. We bound $\sum_{s=1}^h B_s$.

It is useful to view one evaluation of the Bellman recursion as a finite directed acyclic computation graph (DAG). A node at depth $k$ represents a reachable state with $s-k$ rounds remaining, and directed edges point from a state to the one-step successor states appearing on the right-hand side of the Bellman recursion (here, Bellman recursion means the recursion formulas of $V_{k}(c, a), (\mathcal AV_{k})(c, a)$). The terminal nodes, corresponding to states of $0$ rounds remaining, have values given directly by the terminal queue length and therefore do not require any Bellman update. Thus $B_s$ is bounded by the number of outgoing edges from nonterminal nodes in this computation graph.

Since the planning stage starts from an empty queue, at the beginning of a round with $s$ rounds remaining, at most $h-s$ jobs have arrived during the planning stage. Thus the system contains at most $h-s$ jobs. Also, at most $h-s$ fitted departure-probability labels outside the original support of $\widehat F_n$ can have appeared as present-only labels. These labels are added to the finite label set used in the Bellman recursion and are assigned zero future-arrival mass, so the distribution $\widehat F_n$ itself does not change. Therefore, if $m$ denotes the size of the finite label set used in this Bellman computation, then
\[
    m\le n+h-s.
\]

In this Bellman computation, future arrivals are assumed to have departure-probability labels in this fixed finite label set. Hence every reachable state can be represented as $(c,a)$, where $c\in\mathbb Z_{\ge0}^m$ is the count vector and $a\in\{0,1,\ldots,m\}$ denotes the label of the job in service, with $a=0$ representing a free server. After $k$ further rounds of the recursion, $k=0,\ldots,s-1$, the total number of jobs present is at most $h-s+k$. Thus the number of possible count vectors at that depth is at most
\[
    \#\{c\in\mathbb Z_{\ge0}^m:\|c\|_1\le h-s+k\}
    =
    \binom{m+h-s+k}{m}.
\]
For each count vector, the service label $a$ has at most $m+1$ possible values. Therefore the number of nonterminal reachable states in this Bellman computation is at most
\[
    (m+1)\sum_{k=0}^{s-1}\binom{m+h-s+k}{m}
    \le
    (m+1)\binom{m+h}{m+1},
\]
where the last inequality follows from the hockey-stick identity.

For each nonterminal reachable state, evaluating its value requires at most $m+1$ values of the form $\mathcal A V$ at the next step, and each such value depends on at most $m+1$ next-stage values of $V$. Thus, in the corresponding computation graph, each nonterminal node has outdegree at most $(m+1)^2$. Hence,
\[
    B_s
    \le
    (m+1)^3\binom{m+h}{m+1}.
\]
Using $m\le n+h-s$, we obtain
\[
    B_s
    \le
    (n+h-s+1)^3
    \binom{n+2h-s}{h-1}
    \le (n+h+1)^3\binom{n+2h-s}{h-1}
\]
Consequently,
\begin{align}
    \sum_{s=1}^h B_s
    &\le
    (n+h+1)^3
    \sum_{s=1}^h
    \binom{n+2h-s}{h-1} \notag\\
    &\le
    (n+h+1)^3
    \binom{n+2h}{h}.
    \label{eq:planner-computation-count}
\end{align}
Since $h\le H$, the total number of elementary Bellman terms evaluated during the whole planning stage is at most
\[
    (n+H+1)^3\binom{n+2H}{H}.
\]
When $n=\Theta(T)$ and $H=\Theta(\log^2 T)$, the logarithm of this bound is $O(H\log T)=O(\log^3 T)$.

\subsection{\texorpdfstring{Proof of \Cref{lem:planning-transfer}}{Proof of the planning comparison lemma}}
\label{app:model-transfer}

We first compare two Bellman recursions that may differ in two ways: the two systems may assign different departure probabilities to the same present jobs, and the departure-probability distributions of future arrivals may be different. This slightly more general comparison will be used later with suitable specializations.

Let $V_k^F(s)$ be the minimum expected terminal queue length from state $s$ with $k$ rounds remaining, when the departure probability of each future arrival is drawn from $F$. Equivalently, $V_k^F(s)$ is the value function obtained from the Bellman recursion for the model that uses the departure probabilities recorded in the current state for present jobs and draws the departure probability of each future arrival from $F$. Let $Q_k^F(s,a)$ be the corresponding one-step value obtained by taking an admissible action $a$ in state $s$ and then acting optimally. Here admissibility includes the nonpreemptive constraint: if a job is currently in service, the only admissible action is to continue serving that job. For a policy $\pi$ and an empty initial state, write $J_h^F(\pi)$ for its expected terminal queue length, and write $V_h^F$ for the corresponding minimum over the relevant policy class.

We compare two states $s$ and $\widetilde s$ whose currently present jobs are paired, and whose jobs in service are paired as well. A paired present job may carry different departure-probability labels in the two states: we denote its label by $p_j$ in $s$ and by $\widetilde p_j$ in $\widetilde s$. Define
\[
 \Delta(s,\widetilde s)
 :=
 \sum_{j\text{ present}} |p_j-\widetilde p_j|.
\]

\begin{lemma}\label{lem:paired-bellman}
Let $F$ and $\widetilde F$ be distributions of the departure probabilities of future jobs, with $W_1(F,\widetilde F)=\eta$. For states $s$ and $\widetilde s$ described above and every $k\ge0$,
\begin{equation}
 |V_k^F(s)-V_k^{\widetilde F}(\widetilde s)|
 \le \frac{k(k+1)}2\Delta(s,\widetilde s)+\frac{(k-1)k(k+1)}6\eta.
 \label{eq:paired-bellman}
\end{equation}
The bound holds both when policies may idle and when they must be work conserving.
\end{lemma}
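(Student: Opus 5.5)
The proof will go by induction on $k$, comparing the Bellman recursions for $(s,F)$ and $(\widetilde s,\widetilde F)$ one step at a time. Write $a_k:=k(k+1)/2$ and $b_k:=(k-1)k(k+1)/6$. We will use two facts: $a_k-a_{k-1}=k$ and $b_k-b_{k-1}=(k-1)k/2=a_{k-1}$.

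For the base case $k=0$, paired states have the same number of present jobs, so $V_0^F(s)=V_0^{\widetilde F}(\widetilde s)$ and the bound holds. For the induction step, assume \eqref{eq:paired-bellman} holds at $k-1$ for all paired states and all distributions. Because the pairing matches present jobs and the job in service, the two states have the same admissible actions: continue the paired job, start the paired job $j$, or idle. The WC restriction is also the same in both states. So it suffices to show
\[
|Q_k^F(s,a)-Q_k^{\widetilde F}(\widetilde s,a)|\le a_k\Delta+b_k\eta
\]
for every common action $a$, since a minimum of functions that are pointwise close is close.

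Fix an action that serves a paired job with labels $p_j$ and $\widetilde p_j$. We couple the service outcomes with a common uniform $U$: the job departs in $s$ iff $U\le p_j$, and in $\widetilde s$ iff $U\le\widetilde p_j$. With probability $\min(p_j,\widetilde p_j)$ both depart, and with probability $1-\max$ neither does. In these cases the successor states remain paired with discrepancy at most $\Delta$. With probability $|p_j-\widetilde p_j|\le\Delta$ the outcomes disagree. In that case one state has one more job than the other, or has a job in service where the other is free. Then we bound $|V_{k-1}-V_{k-1}|$ crudely: the optimal costs differ by at most the number of remaining jobs plus future arrivals, about $k$. Tracking exactly what this crude term must be is the main obstacle. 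The target increment $a_k-a_{k-1}=k$ suggests the following strategy. Charge disagreement events a cost of order $k$ using the Lipschitz property of the value in the number of jobs: removing one job changes the optimal terminal cost by at most $1$ under IA, because the policy can simulate with a phantom job. The remaining discrepancy is then charged to the inductive $a_{k-1}\Delta$.

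Next we handle the arrival operator by coupling arrivals. With probability $1-\lambda$ there is no arrival and the states stay paired. With probability $\lambda$, a new job arrives with label $P\sim F$ in one system and $\widetilde P\sim\widetilde F$ in the other. We take the optimal $W_1$ coupling, so $\E|P-\widetilde P|=\eta$, and pair the two new jobs, which raises $\Delta$ by $|P-\widetilde P|$. Applying the induction hypothesis inside the expectation gives an extra $a_{k-1}\lambda\eta\le a_{k-1}\eta=(b_k-b_{k-1})\eta$. This is exactly where the cubic coefficient grows.

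Combining the service and arrival steps gives $(a_{k-1}+k)\Delta+(b_{k-1}+a_{k-1})\eta=a_k\Delta+b_k\eta$, which closes the induction. The same argument works under both IA and WC, since the sets of admissible actions coincide.
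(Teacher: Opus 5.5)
Your architecture matches the paper's proof: induction on $k$, a common uniform for the served job, an optimal $W_1$ coupling of the arriving labels, and the recursions $a_k=a_{k-1}+k$, $b_k=b_{k-1}+a_{k-1}$. The gap is the disagreement event, the step you yourself call the main obstacle. Neither justification you give for the bound of order $k$ works.

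\begin{itemize}
\item Your first justification, ``at most the number of remaining jobs plus future arrivals'', is not $O(k)$, because the present queue can be arbitrarily long.
\item Your second, Lipschitz-by-one via a phantom job, is false here. On the disagreement event the surplus job is the one just served, so in one successor state it is \emph{in service}. Nonpreemption then forces that system to keep serving it, so it cannot simulate the smaller system. For example, take two rounds left and small $\lambda$. An in-service job with departure probability $0.01$ plus a waiting job with $0.99$ has optimal terminal cost about $1.97$. Removing the in-service job leaves a cost of about $10^{-4}$.
\item The phantom-job argument also relies on idling, so it could not cover the WC class, contrary to your closing claim.
\end{itemize}

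The missing argument is a simple count. It holds for both policy classes, for any labels, and for any $F,\widetilde F$. In each successor state, the optimal value equals the current number of jobs, plus $\lambda(k-1)$ expected arrivals, minus the optimal expected number of departures in the remaining $k-1$ rounds. That departure count lies in $[0,k-1]$ under every admissible policy. The two job counts differ by exactly one, so on the disagreement event
\[
|V_{k-1}^F(s')-V_{k-1}^{\widetilde F}(\widetilde s')|\le 1+(k-1)=k,
\]
whatever label discrepancy remains. This is the bound the paper uses.

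It charges the disagreement event $k\delta\le k\Delta$ in total, with no inductive term there. The agreement event has probability at most one and contributes $a_{k-1}(\Delta+\eta)+b_{k-1}\eta$ by the induction hypothesis. Together these give exactly your final accounting, $(a_{k-1}+k)\Delta+(b_{k-1}+a_{k-1})\eta=a_k\Delta+b_k\eta$.
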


\begin{proof}
For convenience of explanation, we define $a_k:=\frac{k(k+1)}2, g_k:=\frac{(k-1)k(k+1)}6$. We will prove our lemma by induction on $k$. For $k=0$, both terminal values equal the common number of jobs in the two states, so \Cref{eq:paired-bellman} holds because $a_0=g_0=0$. Assume it holds with $k-1$ rounds remaining.

Take any action that is admissible in both states. If the action serves a job which is in both of $s, \widetilde s$, let $p$ and $\widetilde p$ be its departure probabilities and put $\delta=|p-\widetilde p|$. Use a common uniform random variable $U\sim \Unif[0,1]$ to generate the two departure indicators. If the selected job has departure probabilities $p$ and $\widetilde p$ in the systems started with $s, \widetilde s$ respectively, set
\[
D=\mathbf 1\{U\le p\},
\qquad
\widetilde D=\mathbf 1\{U\le \widetilde p\}.
\]
Then $D\sim\Bern(p)$, $\widetilde D\sim\Bern(\widetilde p)$, and
\[
\mathbb P(D\ne \widetilde D)=|p-\widetilde p|=\delta.
\]
Use the same arrival indicator $A\sim\Bern(\lambda)$ in both systems. Conditional on $A=1$, choose an optimal coupling $(P,\widetilde P)$ of $F$ and $\widetilde F$ such that
\[
\mathbb E|P-\widetilde P|=W_1(F,\widetilde F)=\eta.
\]
Existence of this coupling comes from the definition of Wasserstein distance. The two arriving jobs are then paired and assigned departure-probability labels $P$ and $\widetilde P$, respectively. If $A=0$, no new job is added to either system.

First, we see the case when the two departure indicators agree, i.e. both $0$ or both $1$. In both cases, the successor states then contain the same set of jobs, so we can use the induction hypothesis in both cases because they have the same set of jobs. If both indicators are $1$, its contribution $\delta$ is removed from $\Delta(s,\widetilde s)$; if both indicators are $0$, that contribution is retained. A new arrival adds $|P-\widetilde P|$ in both cases. In either case, the expected sum of the probability differences in the successor states is at most $\Delta(s,\widetilde s)+\eta$. The induction hypothesis therefore bounds the expected difference of their continuation values by
\begin{align}
 a_{k-1}\bigl(\Delta(s,\widetilde s)+\eta\bigr)+g_{k-1}\eta
 =a_{k-1}\Delta(s,\widetilde s)+(a_{k-1}+g_{k-1})\eta.
 \label{eq:paired-bellman-agree}
\end{align}

We now see the case when the two departure indicators are different. WLOG let $p\leq \widetilde p$, so in the queue started with state $s$ the departure indicator was $1$, but in the queue started with state $s$ the departure indicator was $0$. On their disagreement event, one successor state has exactly one more job than the other. Over the remaining $k-1$ rounds, the terminal queue difference caused by adding or deleting one job in the starting queue is at most $k$: the initial job-count difference is one, and the two systems can differ by at most one departure in each remaining round. The disagreement event has probability $\delta$, so its contribution is at most $k\delta$. Combining this contribution with \Cref{eq:paired-bellman-agree} and using $\delta\le\Delta(s,\widetilde s)$ gives
\begin{align}
 |Q_k^F(s,a)-Q_k^{\widetilde F}(\widetilde s,a)|
 &\le\left(a_{k-1}\Delta(s,\widetilde s)
 +(a_{k-1}+g_{k-1})\eta\right)+k\Delta(s,\widetilde s)\\
 &=a_k\Delta(s,\widetilde s)+g_k\eta,
 \label{eq:paired-common-action}
\end{align}
where $a_k=a_{k-1}+k$ and $g_k=g_{k-1}+a_{k-1}$. If the common action is to idle, no departure indicator is generated, and at the beginning of the next round the two queues have the same set of jobs. So by using the induction hypothesis again, it's not difficult to show the same bound holds without the $k\delta$ term.

Let $a^*$ minimize $Q_k^F(s,a)$ and let $\widetilde a^*$ minimize $Q_k^{\widetilde F}(\widetilde s,a)$ (so that $Q_k^F(s,a^*)=V_k^F(s), Q_k^{\widetilde F}(\widetilde s,a)=V_k^{\widetilde F}(\widetilde s)$). Of course these $a^*, \widetilde a^*$ denote the job in service when the server is not free in the starting state $s$. Applying \Cref{eq:paired-common-action} to $\widetilde a^*$ gives
\[
 V_k^F(s)-V_k^{\widetilde F}(\widetilde s)
 \le Q_k^F(s,\widetilde a^*)-Q_k^{\widetilde F}(\widetilde s,\widetilde a^*)
 \le a_k\Delta(s,\widetilde s)+g_k\eta.
\]
Applying it to $a^*$ with the two models interchanged gives the reverse inequality. This proves \Cref{eq:paired-bellman}.
\end{proof}

Setting $s=\widetilde s$ (so all the probabilities are equal in the initial queue) in \Cref{lem:paired-bellman} gives
\[
 |V_k^F(s)-V_k^{\widetilde F}(s)|\le g_k\eta.
\]
The common-action bound \Cref{eq:paired-common-action}, again with $s=\widetilde s$, gives $|Q_k^F(s,a)-Q_k^{\widetilde F}(s,a)|\le g_k\eta$ for every admissible action $a$. Let $\widetilde\pi_k(s)$ be an optimal action with $k$ rounds remaining under $\widetilde F$ (so we can think that $\widetilde\pi_k(s)$ is the optimal decision according to the Bellman recursion defined by the starting queue $s$, the given success probability for each job, and the future success-probability distribution $\widetilde F$). Of course it denotes the job in service when the server is not free in the starting state $s$. Since $Q_k^{\widetilde F}(s,\widetilde\pi_k(s))=V_k^{\widetilde F}(s)$,
\begin{align*}
 Q_k^F(s,\widetilde\pi_k(s))-V_k^F(s)
 &=Q_k^F(s,\widetilde\pi_k(s))-Q_k^{\widetilde F}(s,\widetilde\pi_k(s))\\
 &\quad+V_k^{\widetilde F}(s)-V_k^F(s)\\
 &\le2g_k\eta.
\end{align*}

Let $S_k$ be the state with $k$ rounds remaining when $\widetilde\pi$ (recall that this policy selects its decision according to the Bellman recursion, which is defined by the given success probability for each job, and the future success-probability distribution $\widetilde F$) is run in the model with future-job distribution $F$. By the definition of $Q_k^F$,
\[
 Q_k^F(S_k,\widetilde\pi_k(S_k))
 =\E\left(V_{k-1}^F(S_{k-1})\mid S_k\right).
\]

So $\E \left(Q_k^F(S_k,\widetilde\pi_k(S_k))\right)=\E \left[\E\left(V_{k-1}^F(S_{k-1})\mid S_k\right)\right]=\E \left(V_{k-1}^F(S_{k-1})\right)$ holds. And also by the definition and assumptions of \Cref{lem:planning-transfer}, $S_h=\varnothing$, $\E \left(V_h^F(\varnothing)\right)=V_h^F$ and $J_h^F(\widetilde\pi)=\E(S_0)=\E\left(V_0^F(S_0)\right)$ hold. Taking expectations and summing over $k=h,h-1,\ldots,1$ telescopes:
\begin{align}
 J_h^F(\widetilde\pi)-V_h^F
 &=\sum_{k=1}^h\left(\E \left(V_{k-1}^F(S_{k-1})\right)-\E\left(V_k^F(S_k)\right)\right)\notag\\
 &=\sum_{k=1}^h\left(\E \left(Q_k^F(S_k,\widetilde\pi_k(S_k))\right)-\E\left(V_k^F(S_k)\right)\right)\notag\\
 &=\sum_{k=1}^h\E \left(Q_k^F(S_k,\widetilde\pi_k(S_k))-V_k^F(S_k)\right)\notag\\
 &\le2\eta\sum_{k=1}^hg_k
 =2\eta\binom{h+2}{4}
 =\frac{(h-1)h(h+1)(h+2)}{12}\eta.
 \label{eq:distribution-transfer}
\end{align}

We next control the change from fitted to true departure probabilities for the same contexts.

\begin{lemma}\label{lem:geometric-coupling}
For $p,q\in[p_-,p_+]$, there is a coupling of $G_p\sim\Geom(p)$ and $G_q\sim\Geom(q)$ such that
\[
 \P(G_p\ne G_q)=\frac{|p-q|}{\max(p,q)}\le\frac{|p-q|}{p_-}.
\]
\end{lemma}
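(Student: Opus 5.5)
We construct the coupling explicitly from a common uniform sequence and then compute the probability of disagreement through the first-success structure. Assume without loss of generality that $p\le q$, so $\max(p,q)=q$. Let $U_1,U_2,\ldots$ be i.i.d.\ $\Unif[0,1]$. Set $G_q:=\inf\{k:U_k\le q\}$ and $G_p:=\inf\{k:U_k\le p\}$. Each is the first success time of an i.i.d.\ Bernoulli sequence with the corresponding parameter, so $G_q\sim\Geom(q)$ and $G_p\sim\Geom(p)$. Since $\{U_k\le p\}\subseteq\{U_k\le q\}$, we always have $G_p\ge G_q$.

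Next we identify the disagreement event. The two variables agree exactly when, at the first index $k$ with $U_k\le q$, we also have $U_k\le p$. Conditional on $G_q=k$, the variable $U_k$ is uniform on $[0,q]$. This holds because the event $\{G_q=k\}$ constrains $U_k$ only through $U_k\le q$, with the earlier $U_j$ independent of $U_k$. Hence $\P(G_p=G_q\mid G_q=k)=p/q$ for every $k$. Averaging over $k$ gives
\[
\P(G_p\ne G_q)=1-\frac pq=\frac{q-p}{q}=\frac{|p-q|}{\max(p,q)}.
\]
The final inequality then follows because $\max(p,q)\ge p_-$, as both $p$ and $q$ lie in $[p_-,p_+]$.

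The only step that needs care is the conditional uniformity of $U_{G_q}$ on $[0,q]$. To verify it directly, we compute
\[
\P(G_q=k,\,U_k\le p)=(1-q)^{k-1}p,
\]
and divide by $\P(G_q=k)=(1-q)^{k-1}q$.

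This same construction also matches the round-by-round coupling used in \Cref{lem:paired-bellman}, where a common $U$ is drawn in each service round. So the lemma can be applied consistently when serving a job with context $x$ under $p(x)$ versus $q(x)$. The subsequent step, bounding $|J_h^p(\pi)-J_h^q(\pi)|$ by $h^2\mathcal E(q)/p_-$, will then sum these disagreement probabilities over at most $h$ jobs, each of which can change the terminal count by at most $h$.
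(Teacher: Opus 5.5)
Your proposal is correct and follows the paper's proof essentially verbatim. You use the same common-uniform construction $G_p=\inf\{g:U_g\le p\}$, $G_q=\inf\{g:U_g\le q\}$, and the two agree exactly when $U_{G_q}\le p$, an event of conditional probability $p/q$; your explicit computation $\P(G_q=k,\,U_k\le p)=(1-q)^{k-1}p$ usefully verifies the conditional-uniformity step that the paper states without detail.
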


\begin{proof}
Let $U_1,U_2,\ldots$ be i.i.d. uniform random variables and set $G_p=\inf\{g:U_g\le p\}$ and $G_q=\inf\{g:U_g\le q\}$. If $p\le q$, then at the first time $g$ such that $U_g\le q$ holds, the service times agree exactly when $U_g\le p$, so this event has conditional probability $p/q$. Thus $\P(G_p\ne G_q)=1-p/q=(q-p)/q$. The other case is symmetric.
\end{proof}

For this part, $J_h^q(\pi)$ and $V_h^q$ denote the cost of a fixed policy $\pi$ and the optimal value when the same contexts are used but every departure probability $p(X)$ is replaced by $q(X)$. Fix $q$ before an $h$-round process starts empty, and use the same arrivals and contexts in the true and fitted systems (i.e. systems using the same distribution of context vectors, but different departure probability functions). Couple the two service times of each arriving job using \Cref{lem:geometric-coupling}. Define $\mathcal I_h$ as the set of jobs arriving during these $h$ rounds, and define $\mathcal{H}$ as the set of events such that there is at least one job such that it started to be served in both queues during the $h$ rounds, but the service times of that job in the two queues were different. Then, conditional on their contexts and $\mathcal I_h$, each job $X_i\in \mathcal I_h$ has service time disagreement probability at most $\frac{|p(X_i)-q(X_i)|}{p_-}$. So a union bound gives
\[
 \P(\mathcal{H}\mid \mathcal I_h)
 \le\frac1{p_-}\sum_{i\in\mathcal I_h}|p(X_i)-q(X_i)|.
\]
Because $q$ is fixed before the process begins and each arriving context has distribution $\mathcal D$,
\begin{align*}
 \E\sum_{i\in\mathcal I_h}|p(X_i)-q(X_i)|
 &=\E\left[|\mathcal I_h|\cdot \E_{X\sim \mathcal{D}}\left|p(X)-q(X)\right|\right] \\
 &=\lambda h\,\E_{X\sim \mathcal{D}}|p(X)-q(X)|
 \le h\mathcal E(q).
\end{align*}
Thus the probability that there exists a job that is served in both queues during the $h$ rounds and the service time of that job is different in the two queues is at most $h\mathcal E(q)/p_-$. If every pair of service times agrees, the two systems have identical histories and take identical actions under the same nonanticipating policy. On the disagreement event, each terminal queue contains at most the $h$ jobs that could have arrived, so the final queue length difference is at most $h$. Therefore, for every fixed policy $\pi$,
\begin{equation}
 |J_h^p(\pi)-J_h^q(\pi)|\le\frac{h^2}{p_-}\mathcal E(q).
 \label{eq:same-context-transfer}
\end{equation}
Let $\pi_p$ and $\pi_q$ be optimal in the true and fitted systems, respectively (so $J_h^p(\pi_p)=V_h^p, J_h^q(\pi_q)=V_h^q$). Applying \Cref{eq:same-context-transfer} to these two policies gives
\begin{align*}
 V_h^p-V_h^q
 &\le J_h^p(\pi_q)-J_h^q(\pi_q)
 \le\frac{h^2}{p_-}\mathcal E(q),\\
 V_h^q-V_h^p
 &\le J_h^q(\pi_p)-J_h^p(\pi_p)
 \le\frac{h^2}{p_-}\mathcal E(q).
\end{align*}
Hence $|V_h^p-V_h^q|\le h^2\mathcal E(q)/p_-$.

Finally, compare the true model that follows $(p, F_p)$ with the intermediate model $(q,F_q)$ and then with the empirical model $(q,\widehat F)$:
\begin{align*}
 J_h^p(\widehat\pi)-V_h^p
={}&\left(J_h^p(\widehat\pi)-J_h^q(\widehat\pi)\right)
 +\left(J_h^q(\widehat\pi)-V_h^q\right)
 +\left(V_h^q-V_h^p\right).
\end{align*}
The first and third terms are bounded by \Cref{eq:same-context-transfer}; \Cref{eq:distribution-transfer} bounds the middle term with $\eta=W_1(F_q,\widehat F)$. Adding the three bounds gives \Cref{eq:planning-transfer}.

\subsection{\texorpdfstring{Proof of \Cref{lem:sample-availability}}{Proof of the sample-availability lemma}}

\begin{proof}
Let $M_L=\sum_{t=1}^LA_t$ be the number of arrivals through round $L$. Since $M_L\sim\mathrm{Binomial}(L,\lambda)$ and $\E(M_L)=\lambda L$, the multiplicative Chernoff bound with relative deviation $1/2$ gives $\P(M_L<\lambda L/2)\le e^{-\lambda L/8}$. Also, $2n\le\lambda L/4$ by definition. Therefore, on $\{M_L\ge\lambda L/2,N_L<2n\}$, the number of jobs that have arrived but not begun service by the end of round $L$ is $M_L-N_L>\lambda L/2-2n\geq \lambda L/4$. So more than $\lambda L/4$ jobs have arrived but not begun service by the end of round $L$. Each such job contributes at least one unit to $W_{L+1}$, so $W_{L+1}>\lambda L/4$. Markov's inequality applied to $e^{rW_{L+1}}$ and \Cref{lem:workload} give
\begin{equation}
\begin{aligned}
 \P(M_L\ge\lambda L/2,N_L<2n)
 &\le\P(W_{L+1}>\lambda L/4) \\
 &=\P(e^{rW_{L+1}}>e^{r\lambda L/4}) \\
 &\le \frac{\E (e^{rW_{L+1}})}{e^{r\lambda L/4}}\le K_re^{-r\lambda L/4}.
\end{aligned}
\end{equation}
So by a union bound, we can get the desired inequality as follows.
\begin{equation}
\begin{aligned}
 \P(N_L<2n)
 &=\P(M_L\ge\lambda L/2,N_L<2n)+\P(M_L<\lambda L/2,N_L<2n) \\
 &\le \P(M_L\ge\lambda L/2,N_L<2n)+\P(M_L<\lambda L/2) \\
 &\le K_re^{-r\lambda L/4}+e^{-\lambda L/8}
\end{aligned}
\end{equation}
Under FCFS, jobs begin service in arrival order. Hence, whenever $N_L\ge2n$, the first $2n$ recorded observations are $(X_i,Y_i)_{i=1}^{2n}$.
\end{proof}

\subsection{\texorpdfstring{Proof of \Cref{lem:empty-horizon-monotonicity}}{Proof of the horizon-monotonicity lemma}}

\begin{proof}
Let $s$ be an initial queue and let $s^+$ be the queue that contains all jobs of $s$ together with some additional jobs. Fix a policy $\pi^+$ for $s^+$. We construct a policy $\pi$ for $s$ by maintaining a virtual copy of the larger system, and show that by this algorithm its expected terminal queue length is not bigger than that of $\pi^+$ started from $s^+$.

Couple the two systems by the same future arrivals and by the same service times for every common job. Whenever $\pi^+$ selects a common job, the smaller system selects the same job; whenever $\pi^+$ selects one of the additional jobs, the smaller system idles. Idling is admissible in the policy class considered here. By induction over the number of rounds, we can directly prove that every job in the smaller system is also present in the virtual larger system, and the algorithm always works well in every round. Hence the terminal queue under the constructed policy is not bigger than the terminal queue under $\pi^+$. This shows that, if we take an infimum of the expected queue length (at the end) of the queue started with $s$ among all possible policies, it is not bigger than the expected queue length of the queue started with $s^+$ and using policy $\phi^+$. Now, taking an infimum on both sides of the inequality over all possible policies $\pi^+$, we can prove monotonicity in the initial state.

Now start an $(h+1)$-round system with an empty queue. No job is available for service in its first round, and the server must idle. After the arrival at the end of that round, the queue state $S$ is either empty or contains one waiting job. Since what we've proved above implies $V_h^{\IA}(S)\geq V_h^{\IA}(\varnothing)=V_h^{\IA}$, the Bellman recursion and the initial-state monotonicity give
\[
 V_{h+1}^{\IA}=\E V_h^{\IA}(S)
 \ge \E V_h^{\IA}(\varnothing)=V_h^{\IA}.
\]
\end{proof}

\section{Lower Bounds}
\label{app:lower-proof}

This section constructs the common family of instances and proves \Cref{thm:lower-wc,thm:lower-ia}. For work-conserving policies, the last round reduces to choosing the faster of two waiting jobs. For policies that may idle, we first show that any policy with small regret reaches the same decision state with probability bounded away from zero.

\subsection{The hard family}
\label{app:lower-family}

We first define the distribution $\mathcal{D}$, job context vectors, probability constants ($\lambda, p_-, p_+$), and parameter vector $\theta^*$ that will be used to make a lower bound of regret as follows.

\begin{enumerate}
\item Job types and distribution $\mathcal{D}$: \\
Let $m=d-2$, and let there be $d$ types of jobs in our distribution: a dummy job $C_D$, a baseline job $C_0$, and $m$ candidate jobs $C_1,\ldots,C_m$. Conditional on an arrival, let's define each job's arrival probability
\[
 \P(C_D)=\nu_D, \quad\P(C_0)=\nu_0,
 \quad
 \P(C_j)=\frac{\nu_C}{m}.
\]
where $\nu_D, \nu_0, \nu_C$ are three nonnegative real numbers that sum to $1$. So $\mathcal{D}$ is the distribution that consists of jobs $C_D, C_0, C_1, \ldots, C_m$ with arrival probability as above.

\item Job context vectors and probability constants:\\
For the $d$-dimensional standard basis set $(e_j)_{j=1}^d$, define
\[
 x_D=e_1,
 \qquad
 x_0=\frac{e_2}{\sqrt2},
 \qquad
 x_{C_j}=\frac{e_2+e_{j+2}}{\sqrt2}.
\]
Choose fixed $z_D,z_0,a_0,S$ ($a_0, S>0$) such that $z_D>z_0+a_0$ and $S^2>z_D^2+2z_0^2$, and set
\[
 c_S=\sqrt{\frac{S^2-z_D^2-2z_0^2}{2}},
 \qquad
 \ell_0=\min_{|z-z_0|\le a_0}\mu'(z)>0.
\]
For example, we can choose $z_D=\frac{S}{2}, z_0=-\frac{S}{2}, a_0=0.99S$.

\medskip
Now, we choose fixed bounds $p_-\le\mu(z_0-a_0)$ and $p_+\ge\mu(z_D)$ with $0<p_-<p_+<1$, and a fixed $\lambda>0$ small enough that the following inequality holds.
\begin{equation}
 \lambda<p_-,
 \qquad
 \frac{\lambda(1-\lambda)}{p_--\lambda}
 \le\frac{(1-\mu(z_D))^4}{16}
 \label{eq:lower-lambda}
\end{equation}
We can always choose such $\lambda>0$ since the functions $\lambda\mapsto \lambda, \lambda\mapsto \frac{\lambda(1-\lambda)}{p_--\lambda}$ (defined on $0<\lambda<p_-$) go to $0$ as $\lambda \rightarrow 0^+$.

\item Parameter vector $\theta^*$:\\
For fixed constants $\alpha>0$ that will be determined later, let $a_T=\min\left(a_0,\frac{c_S}{\sqrt{m}}, \alpha\sqrt{\frac{m}{T}}\right)$. Next, for each $\omega\in\{-1,+1\}^m$, set the parameter vector corresponding to $\omega$ as
\[
 \theta_\omega=(z_D,\sqrt2z_0,\sqrt2\omega_1a_T,\ldots,\sqrt2\omega_ma_T).
\]
Then the dummy job has departure probability $p_D=\mu(z_D)$, the baseline job has departure probability $p_0=\mu(z_0)$, and the candidate job $C_j$ has departure probability $p_{C_j}=\mu(z_0+\omega_ja_T)$.

\end{enumerate}

The following lemma shows that the above definitions satisfy \Cref{ass:model}.

\medskip
\begin{lemma}\label{lem:lower-feasibility}
The above construction satisfies \Cref{ass:model} for every $d, T$.
\end{lemma}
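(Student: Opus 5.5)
We verify the conditions of \Cref{ass:model} one at a time, fixing an arbitrary $\omega\in\{-1,+1\}^m$ and writing $\theta^*=\theta_\omega$. The i.i.d.\ structure of $(A_t,X_t)$ and the independence of $A_t$ from $X_t$ hold by construction. Arrivals are $\Bern(\lambda)$ and contexts are drawn from the fixed finite mixture $\mathcal D$. The known constants $\lambda\in(0,1)$, $S>0$ and $0<p_-<p_+<1$ are fixed choices independent of $\omega$, $d$ and $T$, so the server knowing them is consistent with the assumption.

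\textbf{Context norms.} We have $\norm{x_D}_2=1$ and $\norm{x_0}_2=1/\sqrt2$. For candidates, $\norm{x_{C_j}}_2=\sqrt{(1+1)/2}=1$. Hence $\norm{X}_2\le1$ almost surely.

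\textbf{Parameter norm.} Compute
\[
\norm{\theta_\omega}_2^2=z_D^2+2z_0^2+2m a_T^2.
\]
Since $a_T\le c_S/\sqrt m$, we get $2ma_T^2\le 2c_S^2=S^2-z_D^2-2z_0^2$. Therefore $\norm{\theta_\omega}_2\le S$.

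\textbf{Probability range.} The inner products are $x_D^\top\theta_\omega=z_D$, $x_0^\top\theta_\omega=z_0$, and $x_{C_j}^\top\theta_\omega=z_0+\omega_ja_T$. Since $0<a_T\le a_0$ and $z_D>z_0+a_0$, every index lies in $[z_0-a_0,z_D]$. Monotonicity of $\mu$ together with $p_-\le\mu(z_0-a_0)$ and $p_+\ge\mu(z_D)$ then gives $p(X)\in[p_-,p_+]$ almost surely.

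\textbf{Stability, the only step with any content.} Every departure probability satisfies $p(X)\ge p_-$, so
\[
\rho=\lambda\E[1/p(X)]\le\lambda/p_-<1,
\]
where the last inequality is the first condition in \eqref{eq:lower-lambda}.

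All of these bounds are uniform in $\omega$, $d$ and $T$, because $a_T\le\min(a_0,c_S/\sqrt m)$ holds for every $T$. The claim therefore holds for every $d\ge3$ and every $T$. We do not expect any real obstacle. The only points needing care are that $a_T>0$, which follows from $\alpha>0$ and $c_S>0$, and that the cap $c_S/\sqrt m$ is what makes the norm bound hold regardless of $\alpha$.
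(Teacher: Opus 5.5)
Your proof is correct and follows the paper's argument. The paper uses the same three steps: the norm computation via $a_T\le c_S/\sqrt m$, the containment of every logit in $[z_0-a_0,z_D]$ via $a_T\le a_0$, and the bound $\rho\le\lambda/p_-<1$. Your extra checks of $\norm{X}_2\le1$ and of the i.i.d.\ arrival structure, which the paper leaves implicit, are also correct.
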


\begin{proof}
Since $a_T\le \frac{c_S}{\sqrt m}$ by definition and $w_i^2=1$ for all $i\in [m]$,
\begin{align*}
 \|\theta_\omega\|_2^2
 &=z_D^2+2z_0^2+2(w_1^2+...+w_m^2)a_T^2 \\
 &=z_D^2+2z_0^2+2ma_T^2 \\
 &\le z_D^2+2z_0^2+2c_S^2=S^2.
\end{align*}

Since $a_T\le a_0$ by definition, $z_0+w_ja_T\in \{z_0-a_T, z_0+a_T\}\subset [z_0-a_0,z_D]$ holds. Also $z_0, z_D\in [z_0-a_0,z_D]$, so every job's departure probability lies in $[\mu(z_0-a_0), \mu(z_D)]\subset [p_-,p_+]$. Finally,
\[
 \rho_\omega=\lambda\E_\omega\frac1{p(X)}\le\frac{\lambda}{p_-}<1.
\]
\end{proof}

\subsection{Information in the observed history}

Fix an adaptive learning policy $\pi$. For each $\omega\in\{-1,+1\}^m$, let $P_\omega^T$ denote the probability law induced by running $\pi$ for $T$ rounds on the instance with parameter $\theta_\omega$. This law is defined on the complete interaction history through round $T$, including all arrival indicators and contexts, the learner's internal randomization and resulting actions, and all observed departure indicators. Thus, for any event $\mathcal E$ determined by the interaction through round $T$, $P_\omega^T(\mathcal E)$ is the probability that $\mathcal E$ occurs under parameter $\theta_\omega$ and policy $\pi$. We suppress the dependence of $P_\omega^T$ on the fixed policy $\pi$ in the notation.\\
Let $\omega^{(j)}$ be the vector obtained by changing the sign of coordinate $j$ ($\omega_j$) of $\omega$, and let $N_j(T)$ be the number of rounds through $T$ in which a candidate-$j$ job is served.

\begin{lemma}\label{lem:neighboring-kl}
For every adaptive nonpreemptive policy, including policies that may idle,
\begin{align}
 \KL(P_\omega^T\|P_{\omega^{(j)}}^T)
 &\le\frac{a_T^2}{2}\E_\omega N_j(T)
 \le\frac{\lambda \nu_CT a_T^2}{2mp_-},
 \label{eq:neighboring-kl}\\
 \TV(P_\omega^T,P_{\omega^{(j)}}^T)
 &\le a_T\sqrt{\frac{\lambda \nu_CT}{4mp_-}}
 \le\frac\alpha2\sqrt{\frac{\lambda \nu_C}{p_-}}.
 \label{eq:neighboring-tv}
\end{align}
\end{lemma}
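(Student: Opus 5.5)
We use the chain rule for KL divergence over the interaction history. Under $P_\omega^T$ and $P_{\omega^{(j)}}^T$, the learner's internal randomization is the same, and so are the law of the arrival indicators and contexts ($\mathcal D$ is common to all instances). Conditional on the past, the learner's actions are therefore also the same. The only factors of the likelihood that differ are the departure indicators $D_t$ in rounds where a candidate-$j$ job is served, since the two parameters differ only in coordinate $j+2$ and $x_{C_j}$ is the only context with a nonzero entry there. In such a round, $D_t\sim\Bern(\mu(z_0+\omega_ja_T))$ under one law and $\Bern(\mu(z_0-\omega_ja_T))$ under the other; in every other round the conditional laws coincide. Hence the chain rule (divergence decomposition) gives
\[
\KL(P_\omega^T\|P_{\omega^{(j)}}^T)=\E_\omega N_j(T)\cdot \KL\bigl(\Bern(\mu(z_0+a_T))\,\|\,\Bern(\mu(z_0-a_T))\bigr),
\]
using that this Bernoulli KL is the same for both signs up to symmetry. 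More carefully, we bound it by the maximum over the two orderings. Nonpreemption and idling only affect which rounds count toward $N_j(T)$; the decomposition is unaffected.

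Next we bound the per-round Bernoulli KL by $a_T^2/2$. The first route is to write the KL between $\Bern(\mu(u))$ and $\Bern(\mu(v))$ as a Bregman divergence of the log-partition function $\log(1+e^z)$. This gives $\KL=\int_v^u \mu'(s)(u-s)\,ds\le \tfrac14 (u-v)^2/2$, since $\mu'\le 1/4$. With $u-v=2a_T$ this yields $\tfrac14\cdot 4a_T^2/2=a_T^2/2$, which is exactly the first inequality in \Cref{eq:neighboring-kl}.

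For the second inequality, we bound $\E_\omega N_j(T)$. Each service round of candidate-$j$ jobs belongs to some candidate-$j$ job that arrived by round $T$, and a job receives at most $G$ service rounds, where $G$ is its geometric service requirement. Coupling each job with its latent $G\sim\Geom(p)$, $p\ge p_-$, gives $N_j(T)\le\sum_{i} \ind{\text{job } i \text{ is } C_j}\,G_i$, summed over arrivals through round $T$. The indicator and the type of a job are determined at arrival, independently of its $G_i$ given its type. Wald's identity then gives $\E_\omega N_j(T)\le \lambda T(\nu_C/m)(1/p_-)$, and substituting this proves \Cref{eq:neighboring-kl}. The main subtle point is that counting only service rounds actually used (at most $G_i$, possibly truncated by the horizon) is dominated by $G_i$ regardless of the adaptive policy. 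This holds because the service requirement is fixed per job under the latent-time coupling.

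Finally, Pinsker's inequality gives $\TV\le\sqrt{\KL/2}\le a_T\sqrt{\lambda\nu_CT/(4mp_-)}$. Using $a_T\le\alpha\sqrt{m/T}$ gives the final bound $\tfrac{\alpha}{2}\sqrt{\lambda\nu_C/p_-}$, which is \Cref{eq:neighboring-tv}.
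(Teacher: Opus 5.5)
Your proposal is correct and follows essentially the same route as the paper. The KL chain rule reduces everything to $\E_\omega N_j(T)$ times one Bernoulli divergence, which you bound by $a_T^2/2$ via the Bregman form of $\log(1+e^z)$ with $\mu'\le 1/4$ (integral remainder instead of the paper's Lagrange form); pre-sampled geometric service times then give $\E_\omega N_j(T)\le \lambda\nu_C T/(mp_-)$, and Pinsker with $a_T\le\alpha\sqrt{m/T}$ finishes. Your hedge about the two orderings is unnecessary: for fixed $\omega$ the per-round divergence is the single quantity $\KL(\Bern(\mu(z_0+\omega_ja_T))\,\|\,\Bern(\mu(z_0-\omega_ja_T)))$, and the $a_T^2/2$ bound holds in either direction anyway.
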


\begin{proof}
Only the departure indicators observed while serving candidate-$j$ jobs have different conditional distributions under $\omega$ and $\omega^{(j)}$. The two neighboring logits are $z_0+\omega_ja_T$ and $z_0-\omega_ja_T$. If we define $A(z)=\log(1+e^z)$, then $A'(z)=\mu(z)$ and $\KL\left(\Bern(\mu(u))\,\middle\|\,\Bern(\mu(v))\right)=A(v)-A(u)-A'(u)(v-u)$ hold, and the following inequality can be obtained from Taylor's theorem.
\begin{align*}
 &\KL\left(\Bern(\mu(z_0+\omega_ja_T))\,\middle\|\,\Bern(\mu(z_0-\omega_ja_T))\right) \\
 &=A(z_0+\omega_ja_T)-A(z_0-\omega_ja_T)-2A'(z_0-\omega_ja_T)\omega_ja_T \\ 
 &=\frac{1}{2}A''(\varepsilon)(2\omega a_T)^2\le2\left(\sup_z\frac{e^z}{(1+e^z)^2}\right)a_T^2
 \le\frac{a_T^2}{2}
\end{align*}
Here, $\varepsilon$ is some real number in $(z_0-\omega_ja_T, z_0+\omega_ja_T)$ obtained by second order Taylor's theorem, and we used $A''(\varepsilon)=\frac{e^\varepsilon}{(1+e^\varepsilon)^2}\leq \sup_z\frac{e^z}{(1+e^z)^2}=\frac{1}{4}$.

Let $I_t^{(j)}$ indicate that a candidate-$j$ job is served in round $t$. Let $\mathcal F_t^-$ denote the information available immediately before the departure indicator $D_t$ is observed; this includes the complete history through round $t-1$, as well as the learner's randomization and resulting action in round $t$. In particular, $I_t^{(j)}$ is $\mathcal F_t^-$-measurable.

Conditional on $\mathcal F_t^-$, the two neighboring instances have identical departure-indicator laws when $I_t^{(j)}=0$. When $I_t^{(j)}=1$, the conditional laws of $D_t$ under $\omega$ and $\omega^{(j)}$ are, respectively,

$$
 \Bern(\mu(z_0+\omega_j a_T))
 \quad\text{and}\quad
 \Bern(\mu(z_0-\omega_j a_T)).
$$

All other conditional kernels, including those governing arrivals, contexts, learner randomization, and actions, are identical under the two instances. Therefore, the KL chain rule gives
\begin{align*}
\KL(P_\omega^T\|P_{\omega^{(j)}}^T)
&=
\sum_{t=1}^T
\E_\omega\left[
I_t^{(j)}
\KL\left(
\Bern(\mu(z_0+\omega_j a_T))
\,\middle\|\,
\Bern(\mu(z_0-\omega_j a_T))
\right)
\right]\ \\
&=
\left(\sum_{t=1}^T\E_\omega I_t^{(j)}\right)
\KL\left(
\Bern(\mu(z_0+\omega_j a_T))
\,\middle\|\,
\Bern(\mu(z_0-\omega_j a_T))
\right)\ \\
&=
\left(\E_\omega\sum_{t=1}^TI_t^{(j)}\right)
\KL\left(
\Bern(\mu(z_0+\omega_j a_T))
\,\middle\|\,
\Bern(\mu(z_0-\omega_j a_T))
\right)\ \\
&=
E_\omega N_j(T)\cdot 
\KL\left(
\Bern(\mu(z_0+\omega_j a_T))
\,\middle\|\,
\Bern(\mu(z_0-\omega_j a_T))
\right)\ \\
&\le
\frac{a_T^2}{2}\E_\omega N_j(T),
\end{align*}
where $N_j(T)=\sum_{t=1}^T I_t^{(j)}$. This proves the first inequality in \Cref{eq:neighboring-kl}.

Pre-sample the geometric service time of every candidate-$j$ job. Regardless of the scheduling decisions, the number of rounds that one specific candidate-$j$ job is served by time $T$ cannot exceed its service time. Therefore $N_j(T)\leq \sum_{t=1}^T\ind{A_t=1,\,X_t=C_j}G_t$, and the following holds.
\begin{align*}
 \E_\omega N_j(T)
 &\le\E_\omega\left(\sum_{t=1}^T\ind{A_t=1,\,X_t=C_j}G_t\right)\\
 &=\sum_{t=1}^T\E_\omega\left(\ind{A_t=1,\,X_t=C_j}G_t\right)\\
 &=T\cdot \E_\omega\left(\ind{A_t=1,\,X_t=C_j}G_t\right)\\
 &=T\cdot \lambda\frac{\nu_C}{m}\E_\omega(G_t\mid X_t=C_j) \\
 &=\frac{\lambda \nu_CT}{mp(x_{C_j})}
 \le\frac{\lambda \nu_CT}{mp_-}
\end{align*}
This proves the second inequality in \Cref{eq:neighboring-kl}.

For \Cref{eq:neighboring-tv}, using Pinsker's inequality $\TV(P, Q)\leq \sqrt{\frac{1}{2}\KL(P\|Q)}$ gives the first bound in \Cref{eq:neighboring-tv}, and $a_T\le\alpha\sqrt{m/T}$ gives the second.
\end{proof}

For later use, let $P_\omega^t, P_\omega^{t,\mathrm{pre}}$ denote the law of the history up to the end of round $t$, and the law of the history after service in round $t$ and before the arrival at the end of that round. This history is a measurable function of the complete observations through round $t$, so the definition of total variance between two distributions $P_\omega^t, P_{\omega^{(j)}}^t$ gives
\begin{equation}
 \TV(P_\omega^{t,\mathrm{pre}},P_{\omega^{(j)}}^{t,\mathrm{pre}})
 \le \TV(P_\omega^t,P_{\omega^{(j)}}^t).
 \label{eq:pre-arrival-tv}
\end{equation}

Define
\begin{equation}
 \Delta_0=1-\frac\lambda{p_-},
 \qquad
 \zeta=\lambda^3\nu_D\nu_0\nu_Cp_D(1-p_D),
 \qquad
 g_D=(1-\lambda)^2p_D(1-p_D)^2.
 \label{eq:lower-constants}
\end{equation}
It's easy to check that these three constants are all positive. Choose the fixed $\alpha>0$ which is introduced in \Cref{app:lower-family} so that following inequality holds.
\[
 \alpha
 \le\sqrt{\frac{p_-}{\lambda \nu_C}}\min\left(\frac{\zeta\Delta_0}{2},\frac{3\zeta}{16}\right)
\]
This condition was chosen to make the inequality $\frac\alpha2\sqrt{\frac{\lambda \nu_C}{p_-}}\le\min\left(\frac{\zeta\Delta_0}{4},\frac{3\zeta}{32}\right)$ hold. So by this condition, $\TV(P_\omega^T,P_{\omega^{(j)}}^T)\leq \frac{\zeta\Delta_0}{4}, \frac{3\zeta}{32}$ holds.

\subsection{\texorpdfstring{Proof of \Cref{thm:lower-wc} for WC case}{Proof of the work-conserving lower bound}}

\begin{proof}
Under every work-conserving policy, the workload started with $W_1=0$ and satisfies \Cref{eq: workload recursion} is stochastically dominated by stationary workload(\Cref{lem:workload}). By \Cref{lem:stationary-workload}, for every $\omega$,
\begin{equation}
 \P_\omega(W_{T-3}=0)\ge \P_\omega(W=0)=1-\rho_\omega\ge\Delta_0
 \label{eq:lower-empty}
\end{equation}

where $W$ denotes the stationary distribution's workload. The above inequality holds because the zero-started workload $W_t$ is stochastically dominated by the stationary workload $W$ for every $t\ge1$.

\begin{figure}[t]
\centering
\resizebox{\linewidth}{!}{%
\begin{tikzpicture}[
    x=1.75cm,
    y=1cm,
    roundlabel/.style={
        font=\small,
        align=center
    },
    phase/.style={
        font=\scriptsize,
        text=gray!75!black,
        align=center
    },
    eventbox/.style={
        font=\scriptsize,
        align=center,
        text width=1.58cm,
        inner sep=0pt,
        anchor=north
    },
    event/.style={
        font=\small,
        align=center
    },
    state/.style={
        font=\scriptsize,
        align=center,
        text width=1.80cm,
        inner sep=0pt
    },
    job/.style={
        rectangle,
        draw=black,
        minimum width=9mm,
        minimum height=7mm,
        inner sep=1pt,
        font=\small
    },
    dummy/.style={fill=blue!12},
    baseline/.style={fill=orange!14},
    candidate/.style={fill=green!12}
]

\fill[yellow!12] (-0.12,0.52) rectangle (5,-4.40);

\draw[thick] (0,0)--(8,0);

\foreach \x in {0,2,4,6,8}
    \draw[thick] (\x,0.12)--(\x,-1.60);

\foreach \x in {1,3,5,7}
    \draw[dashed,gray] (\x,0.12)--(\x,-1.60);

\node[roundlabel] at (1,0.32) {round $T-3$};
\node[roundlabel] at (3,0.32) {round $T-2$};
\node[roundlabel] at (5,0.32) {round $T-1$};
\node[roundlabel] at (7,0.32) {round $T$};

\foreach \x in {0.5,2.5,4.5,6.5}
    \node[phase] at (\x,-0.25) {service};

\foreach \x in {1.5,3.5,5.5,7.5}
    \node[phase] at (\x,-0.25) {arrival};

\node[eventbox] at (0.5,-0.48)
    {queue empty\\(no service)};

\node[eventbox] at (1.5,-0.48)
    {$C_D$ arrives\\w.p. $\lambda\nu_D$};

\node[eventbox] at (2.5,-0.48)
    {serve $C_D$\\fails w.p. $1-p_D$};

\node[eventbox] at (3.5,-0.48)
    {$C_0$ arrives\\w.p. $\lambda\nu_0$};

\node[eventbox] at (4.5,-0.48)
    {continue $C_D$\\succeeds w.p. $p_D$};

\node[eventbox] at (5.5,-0.48)
    {$C_j$ arrives\\w.p. $\lambda\nu_C/m$};

\node[eventbox] at (6.5,-0.48)
    {choose $C_0$\\or $C_j$};

\node[eventbox] at (7.5,-0.48)
    {arrival irrelevant\\to terminal choice};

\draw[gray!50] (0,-1.60)--(8,-1.60);

\begin{scope}[yshift=-3.25cm]

    \draw[thick] (-0.35,0)--(0.35,0);
    \node[state,text width=2.35cm] at (0,-0.50)
        {empty\\$W_{T-3}=Q_{T-3}=0$};

    \draw[thick] (1.65,0)--(2.35,0);
    \node[job,dummy] at (2,0.35) {$C_D$};
    \node[state] at (2,-0.50)
        {start of\\round $T-2$\\server free};

    \draw[thick] (3.65,0)--(4.35,0);
    \node[
        job,
        dummy,
        double,
        double distance=0.7pt
    ] at (4,0.35) {$C_D$};
    \node[job,baseline] at (4,1.08) {$C_0$};
    \node[state] at (4,-0.50)
        {start of\\round $T-1$\\$C_D$ in service};

    \draw[thick] (4.65,0)--(5.35,0);
    \node[job,baseline] at (5,0.35) {$C_0$};
    \coordinate (prearrivaltop) at (5,0.70);
    \node[state] at (5,-0.50)
        {after service,\\before arrival};

    \draw[thick] (5.65,0)--(6.35,0);
    \node[job,baseline] at (6,0.35) {$C_0$};
    \node[job,candidate] at (6,1.05) {$C_j$};
    \node[state] at (6,-0.50)
        {start of\\round $T$\\server free};

    \draw[->,thick] (6.40,0.55)--(7.15,0.55);
    \node[event] at (7.45,0.98)
        {select $C_0$ or $C_j$};
    \node[state,text width=2.25cm] at (7.45,0.05)
        {choosing the slower job costs at least $\ell_0a_T$};

\end{scope}

\draw[dashed,gray] (5,-1.60)--(prearrivaltop);

\draw[magenta!55!red,very thick]
    (-0.12,-4.40)--(5,-4.40);

\draw[magenta!55!red,very thick]
    (-0.12,-4.30)--(-0.12,-4.50);

\draw[magenta!55!red,very thick]
    (5,-4.30)--(5,-4.50);

\node[event] at (2.44,-4.70)
    {event $\mathcal H$};

\end{tikzpicture}%
}
\caption{ The final four-round construction used in the lower-bound proof. Solid vertical lines mark round boundaries, and dashed lines separate service from the end-of-round arrival. The shaded region corresponds to $\mathcal H$; the candidate-$j$ arrival is appended afterward. }
\label{fig:lower-bound-event}
\end{figure}

Starting from this event $\{W_{T-3}=0\}$, consider the following arrivals and departures: First, the server must idle since $W_{T-3}=0$, and then a dummy arrives at the end of round $T-3$. The server starts to process that dummy job in round $T-2$, but it fails to succeed. Next, a baseline job arrives at the end of round $T-2$. In round $T-1$, the server must do the failed dummy job, and it succeeds with it in round $T-1$. Let $\mathcal H$ denote this event before the arrival at the end of round $T-1$. The above \Cref{fig:lower-bound-event} illustrates this event. By independence of events,

\begin{equation}
\begin{aligned}
\P_\omega(\mathcal H)
&=\P_\omega(W_{T-3}=0)\cdot (\lambda \nu_D)\cdot (1-p_D)\cdot (\lambda \nu_0)\cdot p_D \\
&\ge\Delta_0\lambda^2\nu_D\nu_0p_D(1-p_D)
 =\frac{\zeta\Delta_0}{\lambda \nu_C}.
 \label{eq:wc-terminal-event}
\end{aligned}
\end{equation}

Conditional on $\mathcal H$, candidate $C_j$ arrives at the end of round $T-1$ with probability $\lambda \nu_C/m$. The server is free in round $T$ and must choose between $C_0$ and $C_j$. If $\omega_j=+1$, $C_j$ has the higher departure probability; if $\omega_j=-1$, $C_0$ does. In either case, the mean-value theorem gives
\[
 |p_{C_j}-p_0|
 =|\mu(z_0+\omega_ja_T)-\mu(z_0)|
 \ge\ell_0a_T.
\]
Thus, serving the slower job rather than the faster one increases the expected terminal queue length by $|p_{C_j}-p_0|\ge \ell_0a_T$.

Fix $j$ and pair an instance $\omega$ with $\omega^{(j)}$. Given the history just before the arrival at the end of round $T-1$, let the Markov kernel $K_j$ append a candidate-$j$ arrival and then apply the learner's randomized decision in round $T$. Write $\overline P_{\omega,j}=P_\omega^{T-1,\mathrm{pre}}K_j$, and let $\mathcal S_j(\omega)$ be the event that, conditional on $\mathcal{H}$ and on $C_j$ arriving at the end of round $T-1$, the decision selects the optimal job between $C_0$ and $C_j$ in round $T$ for the parameter vector $\theta_\omega$. So if $\omega_j=1$ then $\mathcal S_j(\omega)$ is the event that the decision selects $C_j$, and if $\omega_j=-1$ then $\mathcal S_j(\omega)$ is the event that the decision selects $C_0$. This directly shows that, within this conditional experiment, $\mathcal S_j(\omega)^c=\mathcal S_j(\omega^{(j)})$ since $[\omega^{(j)}]_j=-\omega_j$. By these definitions, data processing for $K_j$, followed by \Cref{eq:pre-arrival-tv}, gives
\begin{align*}
 &\overline P_{\omega,j}(\mathcal H,\mathcal S_j(\omega)^c)+\overline P_{\omega^{(j)},j}(\mathcal H,\mathcal S_j(\omega^{(j)})^c)\\
 &\quad=\overline P_{\omega,j}(\mathcal H,\mathcal S_j(\omega)^c)+\overline P_{\omega^{(j)},j}(\mathcal H,\mathcal S_j(\omega))\\
 &\quad=\overline P_{\omega,j}(\mathcal H)-\overline P_{\omega,j}(\mathcal H,\mathcal S_j(\omega))
 +\overline P_{\omega^{(j)},j}(\mathcal H,\mathcal S_j(\omega))\\
 &\quad=\overline P_{\omega,j}(\mathcal H)-\left(\overline P_{\omega,j}(\mathcal H,\mathcal S_j(\omega))
 -\overline P_{\omega^{(j)},j}(\mathcal H,\mathcal S_j(\omega))\right)\\
 &\quad\ge\P_\omega(\mathcal H)-\TV(\overline P_{\omega,j},\overline P_{\omega^{(j)},j})\\
 &\quad\ge\P_\omega(\mathcal H)-\TV(P_\omega^{T-1,\mathrm{pre}},P_{\omega^{(j)}}^{T-1,\mathrm{pre}})\\
 &\quad\ge\P_\omega(\mathcal H)-\TV(P_\omega^{T-1},P_{\omega^{(j)}}^{T-1}).
\end{align*}

In more detail, the first inequality follows from the definition of total variation, while the second follows from the data-processing inequality $\TV(PK,QK)\le \TV(P,Q)$, which holds for any probability measures $P$ and $Q$ on the same measurable space and any common Markov kernel $K$ from that space to another measurable space. We also used the identity $\overline P_{\omega,j}(\mathcal H)=P_\omega(\mathcal H)$, which holds because $K_j$ leaves the pre-arrival history unchanged.

In the original experiment, a candidate-$j$ arrival occurs with probability $\lambda \nu_C/m$, independently of the history immediately before that arrival; conditional on this arrival, the action law is the one produced by $K_j$. Multiplying by the arrival probability and applying \Cref{eq:neighboring-tv,eq:wc-terminal-event}, the paired probability of a wrong final action is at least
\begin{equation}
\begin{aligned}
\frac{\lambda \nu_C}{m}\left(\overline P_{\omega,j}(\mathcal H,\mathcal S_j(\omega)^c)+\overline P_{\omega^{(j)},j}(\mathcal H,\mathcal S_j(\omega^{(j)})^c)\right)
&\geq\frac{\lambda \nu_C}{m}\left(\P_\omega(\mathcal H)-\TV(P_\omega^{T-1},P_{\omega^{(j)}}^{T-1})\right) \\
&\geq \frac{\lambda \nu_C}{m}\left(\frac{\zeta\Delta_0}{\lambda \nu_C}-\frac{\zeta\Delta_0}{4}\right) \\
&=\frac{\zeta\Delta_0}{m}\left(1-\frac{\lambda\nu_C}{4}\right)\ge\frac{3\zeta\Delta_0}{4m}.
\end{aligned}
\end{equation}

Now let $p_{\mathrm{err}}(\omega)=\sum_{j=1}^m \frac{\lambda \nu_C}{m}\overline P_{\omega,j}(\mathcal H,\mathcal S_j(\omega)^c)$. So this is the probability that, for the system with parameter vector $\theta_\omega$, the event $\mathcal{H}$ occurs, a candidate job arrives at the end of round $T-1$, and the system chooses the slower job in the final round. Now, by a double-counting argument, the following equalities and inequality hold. More specifically, the third equality follows by applying the double-counting argument to the vertices and edges of the $m$-dimensional hypercube, whose vertex set is $\{-1, 1\}^m$.

\begin{equation}
\begin{aligned}
\E_{\omega\sim\Unif(\{-1,1\}^m)}[p_{\mathrm{err}}(\omega)]
&=\frac{1}{2^m}\sum_{\omega\in \{-1, 1\}^m}p_{\mathrm{err}}(\omega) \\
&=\frac{1}{2^m}\sum_{\omega\in \{-1, 1\}^m}\sum_{j=1}^m \frac{\lambda \nu_C}{m}\overline P_{\omega,j}(\mathcal H,\mathcal S_j(\omega)^c) \\
&=\frac{1}{2^m}\sum_{\substack{\{\omega, \omega^{(j)}\}\; \mathrm{unordered \; pair} \\ \omega, \omega^{(j)}\in \{-1, 1\}^m}}\frac{\lambda \nu_C}{m}\left(\overline P_{\omega,j}(\mathcal H,\mathcal S_j(\omega)^c)+\overline P_{\omega^{(j)},j}(\mathcal H,\mathcal S_j(\omega^{(j)})^c)\right) \\
&\geq \frac{1}{2^m}\sum_{\substack{\{\omega, \omega^{(j)}\}\; \mathrm{unordered \; pair} \\ \omega, \omega^{(j)}\in \{-1, 1\}^m}} \frac{3\zeta\Delta_0}{4m} \\
&=\frac{3\zeta\Delta_0}{4m}\cdot \frac{m2^{m-1}}{2^m}=\frac{3\zeta\Delta_0}{8}\\
\end{aligned}
\end{equation}

Therefore $\max_{\omega\in \{-1, 1\}^m} p_{\mathrm{err}}(\omega)\geq \E_{\omega\sim\Unif(\{-1,1\}^m)}[p_{\mathrm{err}}(\omega)]\geq \frac{3\zeta\Delta_0}{8}$ holds, so if we choose $\omega_0\in\arg\max_{\omega \in \{-1, 1\}^m} p_{\mathrm{err}}(\omega)$, then $p_{\mathrm{err}}(\omega_0)\geq \frac{3\zeta\Delta_0}{8}$. Changing only the final action to the better member of $\{C_0,C_j\}$ is a feasible work-conserving comparison policy. Thus the queue-length regret for the instance with parameter vector $\theta_{\omega_0}$ is at least $|p_0-p_{C_j}|\cdot p_{\mathrm{err}}(\omega_0)\geq \ell_0a_T\cdot \frac{3\zeta\Delta_0}{8}$, and this instance has regret at least this value.

So this means that for any fixed policy $\pi$, there must be some $\omega\in \{-1, 1\}^m$ such that, if the unknown parameter vector were actually $\theta_\omega$, then the regret would be at least $\ell_0a_T\cdot \frac{3\zeta\Delta_0}{8}$. Since this works for every work-conserving policy $\pi$, this shows that whenever we use a work-conserving policy for $T$ rounds, there exist a distribution $\mathcal{D}$ and a parameter vector $\theta^*$ such that the regret is bounded below by $\ell_0a_T\cdot \frac{3\zeta\Delta_0}{8}$.

Finally, define $b_{d,T}=d^{-1/2}\wedge\sqrt{d/T}$. Since $b_{d,T}\le d^{-1/2}\le1/\sqrt3$ and $\frac{d}{3}\leq m=d-2<d$ hold for $d\geq 3$,
\[
 a_0\ge\sqrt3a_0b_{d,T},
 \qquad \frac{c_S}{\sqrt m}>\frac{c_S}{\sqrt d}\ge c_Sb_{d,T}.
\]
If $T\ge d^2$, then $b_{d,T}=\sqrt{d/T}$ and $\sqrt{m/T}\ge b_{d,T}/\sqrt3$. If $T<d^2$, then $b_{d,T}=d^{-1/2}$ and $\sqrt{m/T}>\sqrt m/d\ge 1/\sqrt{3d}=b_{d,T}/\sqrt3$. Consequently,
\[
 a_T\ge\min\left(\sqrt3a_0,c_S,\frac\alpha{\sqrt3}\right)b_{d,T}.
\]
Therefore, the theorem follows with
\[
 c_{\WC}=\frac{3\ell_0\zeta\Delta_0}{8}
 \min\left(\sqrt3a_0,c_S,\frac\alpha{\sqrt3}\right),
\]
and this lower bound is $\Omega\left(\min\left\{\frac{1}{\sqrt{d}}, \sqrt{\frac{d}{T}}\right\}\right)$.
\end{proof}

\subsection{Two facts for policies that may idle}

\begin{lemma}\label{lem:uniform-ia-value}
Under $\lambda<p_-$,
\[
 \sup_TV_T^{\IA}\le\sup_TV_T^{\WC}
 \le\frac{\lambda(1-\lambda)}{p_--\lambda}.
\]
\end{lemma}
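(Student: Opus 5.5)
The first inequality holds because every WC policy is an IA policy, so $V_T^{\IA}\le V_T^{\WC}$ for every $T$. For the second, it suffices to exhibit a single WC policy whose expected terminal queue length is at most $\lambda(1-\lambda)/(p_--\lambda)$ uniformly in $T$. The plan is to use any WC policy, say FCFS, and dominate its queue-length process by a simpler chain.

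\textbf{Domination step.} I would compare the WC queue with a discrete-time birth--death chain $\bar Q_t$ on $\mathbb Z_{\ge0}$ started at $0$. When nonempty, it loses one job with probability $p_-$ in the service stage, and it gains one with probability $\lambda$ in the arrival stage.

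Under a WC policy, whenever $Q_t>0$ some job is in service. Given the history, that job departs with probability $p(\cdot)\ge p_-$. Using common uniforms, I can couple $D_t\ge \bar D_t$ whenever both queues are nonempty, together with common arrivals. The map $q\mapsto (q-d)^++a$ is monotone in $q$ and decreasing in $d$. Induction then gives $Q_t\le \bar Q_t$ for all $t$, with one case check when $Q_t<\bar Q_t$ and $\bar Q_t$ loses a job: the difference is integer, so $Q_t\le\bar Q_t-1$ and the inequality survives. Hence $V_T^{\WC}\le \E \bar Q_{T+1}$.

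\textbf{Bounding the chain.} The chain $\bar Q$ started at $0$ is stochastically dominated by its stationary law, by monotonicity of the update, exactly as in the workload domination of \Cref{lem:stationary-workload}. So I would compute the stationary mean of $\bar Q$ at the beginning of a round, when the state has just received an arrival.

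Let $\bar Q'=(\bar Q-D)^++A$ in stationarity, with $D\sim\Bern(p_-)$ applied only when $\bar Q>0$. Taking first moments gives $\P(\bar Q>0)=\lambda/p_-$. Taking second moments, with $\bar Q-D$ independent of $A$, gives
\[
\E(\bar Q-D)^2+2\lambda\E(\bar Q-D)+\lambda=\E\bar Q^2 .
\]
Here $\E(\bar Q-D)^2=\E\bar Q^2-2p_-\E\bar Q+p_-\P(\bar Q>0)$ and $\E(\bar Q-D)=\E\bar Q-\lambda$. Solving,
\[
2(p_--\lambda)\E\bar Q=\lambda+\lambda-2\lambda^2 ,
\]
so $\E\bar Q=\lambda(1-\lambda)/(p_--\lambda)$.

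Positive recurrence, together with finiteness of the second moment needed to justify the cancellation, follows from $\lambda<p_-$ through geometric tails of the birth--death chain. Combining the two steps gives $\E\bar Q_{T+1}\le\lambda(1-\lambda)/(p_--\lambda)$ for every $T$.

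The main obstacle I expect is the coupling step. The identity of the job in service changes over time, and the departure probabilities are context-dependent. I would handle this by drawing a fresh uniform $U_t$ each round and setting $D_t=\ind{U_t\le p(\text{job in service})}$ and $\bar D_t=\ind{U_t\le p_-}$. This is legitimate because, conditional on the past, the service outcomes are Bernoulli with the stated parameters.
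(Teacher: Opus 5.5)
Your proposal is correct and follows the paper's proof essentially step for step: run FCFS, couple each service outcome with a Bernoulli-$p_-$ indicator so the queue is dominated by the late-arrival Geo/Geo/1 chain, invoke stationary domination, and solve the first- and second-moment balance equations to get $\lambda(1-\lambda)/(p_--\lambda)$. Your explicit induction for the pathwise domination and your remark that the stationary second moment is finite (needed to cancel $\E\bar Q^2$) are small rigor additions that the paper leaves implicit.
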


\begin{proof}
Run FCFS. In every round with a job in service, couple its departure indicator with a Bernoulli-$p_-$ indicator so that a departure in the comparison system also occurs in the true system. The true queue is then stochastically dominated by the late-arrival Geo/Geo/1 queue
\[
 \widetilde Q_{t+1}=\left(\widetilde Q_t-S_t\right)^++A_t,
 \qquad
 \P(S_t=1\mid\widetilde Q_t>0)=p_-.
\]
Its drift outside zero is $\lambda-p_-<0$, so it has a stationary version and its zero-started process is stochastically dominated by the stationary version's queue. Let $(\widetilde Q,S,A)$ have the stationary one-round distribution. Taking expectations in the recursion gives
\begin{equation}
\begin{aligned}
0&=\E\left[\left((\widetilde Q-S)^++A\right)\right]-\E\widetilde Q \\
&=\E\left[\left(A-S\right)\cdot \mathbf{1}\left(\widetilde{Q}>0\right)\right]+\E\left[A\cdot \mathbf{1}(\widetilde Q=0)\right] \\
&=\E A-\E\left[S\cdot \mathbf{1}\left(\widetilde{Q}>0\right)\right] \\
&=\lambda-p_-\P(\widetilde Q>0)
\end{aligned}
\end{equation}

and therefore $\E\left[S\cdot \mathbf{1}\left(\widetilde{Q}>0\right)\right]=\E A=\lambda$ and $\P(\widetilde Q>0)=\lambda/p_-$. Since $\E(S\mid\widetilde Q)=\P(S_t=1\mid\widetilde Q_t>0)=p_-$ for $\widetilde Q>0$,
\begin{equation}
\begin{aligned}
\E(\widetilde QS)
&=\E\left[\widetilde QS\cdot \mathbf{1}(\widetilde Q>0)\right]+\E\left[\widetilde QS\cdot \mathbf{1}(\widetilde Q=0)\right] \\
&=\E\left[\widetilde QS\cdot \mathbf{1}(\widetilde Q>0)\right] \\
&=\E\left[\E(S\mid\widetilde Q)\widetilde Q\cdot \mathbf{1}(\widetilde Q>0)\right] \\
&=p_-\E\left[\widetilde Q\cdot \mathbf{1}(\widetilde Q>0)\right]=p_-\E \widetilde Q
\end{aligned}
\end{equation}
Moreover, $A$ is independent of $(\widetilde Q,S)$, $A^2=A$, and $S^2=S$. Expanding the square of the stationary recursion gives
\begin{align*}
 0
 &=\E\left[\left((\widetilde Q-S)^++A\right)^2\right]-\E\widetilde Q^2\\
 &=\E \left[S\cdot \mathbf{1}(\widetilde Q>0)\right]+\E A-2\E\left[\widetilde QS\cdot \mathbf{1}(\widetilde Q>0)\right]
   +2\E\left[\widetilde QA\cdot \mathbf{1}(\widetilde Q>0)\right]-2\E\left[SA\cdot \mathbf{1}(\widetilde Q>0)\right]\\
 &=2\lambda-2p_-\E\widetilde Q+2\lambda \E \widetilde Q-2\lambda^2 \\
 &=-2(p_--\lambda)\E\widetilde Q+2\lambda(1-\lambda).
\end{align*}
Thus $\E\widetilde Q=\frac{\lambda(1-\lambda)}{p_--\lambda}$, and this shows that the expected final queue length of the FCFS policy is at most $\frac{\lambda(1-\lambda)}{p_--\lambda}$ by stochastic dominance. FCFS is feasible for both benchmark classes, so the result is proved.
\end{proof}

\begin{lemma}\label{lem:dummy-start}
Suppose the server is free, only a dummy job with departure probability $p_D$ is waiting, three rounds remain, and every present or future job has departure probability at most $p_D$. For every policy that idles in the first round, there is a policy that serves the dummy immediately and has expected terminal queue smaller by at least $g_D$.
\end{lemma}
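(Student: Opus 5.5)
The plan is a direct comparison over the three remaining rounds. The two systems are coupled through the common arrival indicators $A_1,A_2,A_3$ and contexts. Since the terminal queue equals $1+A_1+A_2+A_3$ minus the number of departures, it suffices to show that some policy $\pi'$ which serves the dummy in round $1$ has expected departures at least those of $\pi$ plus $g_D$. The arrival at the end of round $3$ cannot be served, so only $(A_1,A_2)$ matter. I will argue conditionally on each arrival pattern $(A_1,A_2)\in\{0,1\}^2$ and on the arriving contexts. I will show that the departure gain of $\pi'$ over $\pi$ is always nonnegative, and is at least $p_D(1-p_D)^2$ on $\{A_1=A_2=0\}$, which has probability $(1-\lambda)^2$. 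This gives exactly $g_D=(1-\lambda)^2p_D(1-p_D)^2$.

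\textbf{Definition of $\pi'$.} Serve the dummy in round $1$. Whenever the server is free afterwards, start a waiting job of largest departure probability, idling only if the queue is empty. Fix an idle-first policy $\pi$ and a pattern $(A_1,A_2)$. Under $\pi$, at most one departure occurs in each of rounds $2$ and $3$. Each of these happens with conditional probability at most $p_D$, because every job has success probability at most $p_D$, and with probability $0$ in a round where $\pi$ idles. Hence $\E[\text{departures of }\pi]\le\E[\#\{\text{busy rounds of }\pi\text{ among }2,3\}]\cdot p_D$. In fact the exact maximum over idle-first policies is easy to enumerate, because only rounds $2$ and $3$ remain and the available jobs are the dummy plus at most one arrival at the end of round $1$. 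Two cases follow.

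\emph{Case $A_1=0$.} Then $\pi$ can only serve the dummy in round $2$ (or idle). In round $3$ it either continues the dummy, or, if the dummy left in round $2$ and $A_2=1$, serves the new job. Its best expected departure count is at most $p_D+(1-p_D)p_D+p_D\cdot A_2\,p_{\max}$. Here $p_{\max}\le p_D$ is the success probability of the job arriving at the end of round $2$.

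\emph{$\pi'$ in the same case.} It gets $p_D$ from round $1$ and $(1-p_D)p_D$ from continuing the dummy in round $2$. In round $3$ it gets $(1-p_D)^2p_D$ from the still-running dummy, plus contributions from the arrival in the branches where the dummy has already left. A branchwise comparison then shows a gain of at least $(1-p_D)^2p_D\ge0$, and exactly this value when $A_2=0$.

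\emph{Case $A_1=1$.} Both policies have two jobs available from round $2$ on. The argument here is a one-round shift: $\pi'$ has already used round $1$ on the dummy, so on every branch it is at least one busy round ahead. A busy round of $\pi$ in rounds $2$–$3$ yields at most $p_D$ expected departures, while $\pi'$ never idles with a nonempty queue. This shows the gain is nonnegative. I would verify it by a small explicit enumeration of the at most $2\times2$ branches (dummy success or failure in rounds $1$ and $2$).

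\textbf{Main obstacle.} The hard step is the case $A_1=1$ (and $A_1=0,A_2=1$), where nonpreemption matters. After the dummy fails in round $1$, $\pi'$ is forced to continue the dummy in round $2$, while $\pi$ might start the newly arrived job instead. I would handle this with the hypothesis that every job has probability at most $p_D$. Continuing the dummy yields a one-round success probability $p_D$, at least that of any job $\pi$ could start. Moreover, because only one further round remains after round $2$, the only effect of which job occupies the server is through the round-$3$ success probability, which again is at most $p_D$ for $\pi$. So I expect the comparison to reduce to a finite set of inequalities of the form $p_D\ge p_{\text{other}}$. If this enumeration becomes messy, the fallback is to bound $\pi$'s expected departures crudely by $p_D\cdot\E[\text{busy rounds in }\{2,3\}]$. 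I would then check that $\pi'$ attains at least this in every pattern, with the explicit surplus $p_D(1-p_D)^2$ when no job arrives.
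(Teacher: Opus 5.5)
Your case $A_1=0$ is fine, but the claim that the gain of your $\pi'$ is nonnegative on every arrival pattern is false. The failure comes from the definition of $\pi'$, not from bookkeeping, and it sits in a branch you did not flag.

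If the dummy fails in round~1, $\pi'$ simply reproduces the best idle-first continuation (serve the dummy in round~2), so that branch is harmless. The trouble is the branch where the dummy \emph{departs} in round~1 and $A_1=1$. Being work-conserving, $\pi'$ must then start job~1 in round~2, and nonpreemption locks it onto job~1 in round~3. Write $q=p_D$, let $p_1,p_2$ be the success probabilities of the jobs arriving at the ends of rounds~1 and~2, and let $M$ be the largest of those that actually arrived ($M=0$ if none). By the two-round exchange, every idle-first policy earns at most $2q-q^2+qM$ expected departures on every realization. This is attained by idling, serving the dummy, and then serving a best waiting job. On $\{A_1=1\}$ your $\pi'$ instead earns
\[
q\bigl(1+2p_1-p_1^2+A_2p_1p_2\bigr)+(1-q)\bigl(2q-q^2+qM\bigr),
\]
a gain of $q\bigl[(1-q)^2+p_1(2-p_1)+A_2p_1p_2-qM\bigr]$. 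With $p_1$ close to $0$, $A_2=1$ and $p_2=q$, this tends to $q(1-2q)<0$ for every $q>1/2$.

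Your ``one busy round ahead'' heuristic is exactly what breaks. A busy round spent on a slow job is worth $p_1$, not $p_D$, and it blocks a fast later arrival; this is the same phenomenon as the paper's idle-or-work example. Averaging does not save $\pi'$ either. Take $q=0.99$, $\lambda=1/2$, and arrivals with success probability near $0$ (w.p.\ $0.9$) or equal to $q$ (w.p.\ $0.1$). Then $\pi'$ has about $1.077$ expected departures, against about $1.095$ for ``idle, serve the dummy, then serve the best waiting job.'' So this $\pi'$ is worse than an idle-first policy and does not witness the lemma. Your fallback is weaker still: on $\{A_1=1,A_2=0\}$ with $p_1\approx0$ it credits $\pi$ with $2q$. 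Every serve-first policy earns only about $q+(1-q)q(2-q)$ there, which is below $2q$ once $q>(3-\sqrt{5})/2$.

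The missing idea is that after an early departure of the dummy, the serve-first policy must be allowed to idle rather than commit to a slow job. The paper builds this in by comparing optimal continuations conditional on the arrivals:
\begin{enumerate}
\item If the dummy departs in round~1, it uses monotonicity of the idling-allowed value under deletion of a job. The smaller system simulates the larger one and idles whenever the latter would serve the dummy.
\item If the dummy fails, memorylessness and the two-round exchange show that continuing the dummy is what the idle-first system should do in round~2 anyway.
\item The strict gain $p_D(1-p_D)^2$ is collected only on the no-arrival event, which produces the factor $(1-\lambda)^2$.
\end{enumerate}

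Your enumeration can be repaired in one line. Let $\pi'$ idle in round~2 whenever the dummy departed in round~1, and serve a best waiting job in round~3. It then earns $q(1+M)+(1-q)(2q-q^2+qM)$ on every realization. Its gain over the best idle-first policy is $q(1-q)\bigl[(1-q)+M\bigr]\ge p_D(1-p_D)^2\ge g_D$. This realization-wise bound is slightly stronger than what the lemma asks.
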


\begin{proof}
We first establish a two-round comparison. Suppose a job with departure probability $q$ is waiting and every other present or future job has departure probability at most $q$. If this job is served immediately, let $M$ be the expected highest departure probability available in the final round, conditional on its departure in the first round. The expected number of departures is
\[
 q+(1-q)q+qM=2q-q^2+qM.
\]
Now suppose another waiting job has departure probability $p\le q$ and is served first. If it departs, the $q$ job is available in the last round; if it remains, nonpreemption requires its service to continue. The expected number of departures is therefore
\[
 p+(1-p)p+pq=2p-p^2+pq.
\]
Because the $p$ job remains available after a first-round departure of the $q$ job, $M\ge p$. Subtracting the two expressions gives
\[
 (2q-q^2+qM)-(2p-p^2+pq)
 =(q-p)(2-q-p)+q(M-p)\ge0.
\]
Serving the $q$ job also dominates idling: after idling, at most one departure can occur, with probability at most $q$, whereas $2q-q^2+qM\ge q$.

Return to the three-round state in the lemma and condition on the arrivals and their contexts during the next two rounds. Compare the best continuation after serving the dummy in the first round with the best continuation after idling. If the dummy departs in the first round, the former system begins the second round with the same arrivals and without the dummy. Because policies in this comparison may idle, deleting one job cannot increase the minimum expected terminal queue: the system without the dummy can simulate a policy for the larger system and omit every action assigned to the deleted job. If the dummy does not depart, geometric memorylessness gives it a fresh $\Geom(p_D)$ residual service time. The two systems then begin the second round with the same jobs, except that immediate service is committed to the dummy whereas the idling policy may choose its next action. The dummy has the highest departure probability, so the two-round comparison above shows that continuing its service is optimal. Immediate service in the first round is therefore no worse for every fixed realization of the arrivals and contexts.

It remains to lower-bound the strict improvement. Condition on no arrival at the ends of the first two rounds; this event has probability $(1-\lambda)^2$. Apart from an arrival at the end of the third round, which contributes equally under both first actions, the dummy is the only job. If it is served immediately, it remains at the terminal time with probability $(1-p_D)^3$. After idling in the first round, even the best continuation can serve it in only two rounds, so it remains with probability $(1-p_D)^2$. The conditional reduction in expected terminal queue is therefore
\[
 (1-p_D)^2-(1-p_D)^3=p_D(1-p_D)^2.
\]
The comparison is weakly favorable on every other realization of the arrivals. Averaging thus gives the unconditional improvement
\[
 (1-\lambda)^2(1-p_D)^2p_D=g_D,
\]
so immediate service reduces the expected terminal queue by at least $g_D$.
\end{proof}

\subsection{\texorpdfstring{Proof of \Cref{thm:lower-ia} for IA case}{Proof of the lower bound for policies that may idle}}

\begin{proof}
Define
\[
 c_0=\min\left(\frac{(1-p_D)^4}{16a_0},
 \frac{3\lambda \nu_Dg_D}{32a_0},
 \frac{11\ell_0\zeta}{32}\right).
\]
Suppose for a contradiction that there was a policy $\pi$ (which can be idle) that satisfies $\max_\omega R_T^{\IA}(\pi, \omega)<c_0a_T$. Here, $R_T^{\IA}(\pi, \omega)$ is the regret between policy $\pi$ and the optimum policy among all idling-allowed policies when the unknown parameter vector was $\theta_{\omega}$.

All departure probabilities are at most $p_D$ by definition. Conditional on $Q^\pi_{T-3}>0$, the probability that no job departs in rounds $T-3,T-2,T-1,T$ is at least $(1-p_D)^4$, regardless of the policy's actions. Here we used the fact that every job has departure failure probability at least $1-p_D$, and idling can be viewed as having a departure failure probability of $1$. On this event the terminal queue is nonempty, and hence
\[
 J_T^\pi=\E\left[Q^\pi_{T+1}\cdot \mathbf{1}\left(Q^\pi_{T-3}>0,\text{no departure on last $4$ rounds}\right)\right]\ge(1-p_D)^4\P_\omega(Q^\pi_{T-3}>0)
\]
when our parameter vector was $\theta_{\omega}$. The assumed regret bound gives $J_T^\pi=V_T^{\IA}+R_T^{\IA}(\pi)<V_T^{\IA}+c_0a_T$. By \Cref{lem:uniform-ia-value,eq:lower-lambda}, $V_T^{\IA}\le(1-p_D)^4/16$, and the definition of $c_0$ gives $c_0a_T\le(1-p_D)^4a_T/16a_0\le(1-p_D)^4/16$. Combining these inequalities with the preceding lower bound yields
\begin{equation}
 \P_\omega(Q^\pi_{T-3}>0)<\frac18.
 \label{eq:ia-nearly-empty}
\end{equation}

On $\{Q^\pi_{T-3}=0\}$, consider the event that a dummy arrives at the end of round $T-3$. Let $\mathcal I$ be the event that the learner idles rather than serving this dummy in round $T-2$. By \Cref{lem:dummy-start}, changing only this idle action gives a feasible policy (call it $\pi^*$) whose expected terminal cost is smaller by at least $g_D\P_\omega(Q^\pi_{T-3}=0,A_{T-3}=1,X_{T-3}=C_D,\mathcal I)$. Since $V_T^{\IA}$ is the minimum over all policies that may idle,
\[
 g_D\P_\omega(Q^\pi_{T-3}=0,A_{T-3}=1,X_{T-3}=C_D,\mathcal I)\leq J^\pi_T-J^{\pi^*}_T\leq J^\pi_T-V^{\IA}_T=R_T^{\IA}(\pi)
 <c_0a_T
\]
and this gives $\P_\omega(Q^\pi_{T-3}=0,A_{T-3}=1,X_{T-3}=C_D,\mathcal I)\leq \frac{c_0a_T}{g_D}\le\frac{3\lambda \nu_D}{32}$.

Now we find a lower bound of $\P_\omega(Q^\pi_{T-3}=0,A_{T-3}=1,X_{T-3}=C_D)$ to get a lower bound of $\P_\omega(Q^\pi_{T-3}=0,A_{T-3}=1,X_{T-3}=C_D, \mathcal I^c)$. The dummy arrival at the end of round $T-3$ is independent of $Q^\pi_{T-3}$. So by \Cref{eq:ia-nearly-empty}, $\P_\omega(Q^\pi_{T-3}=0,A_{T-3}=1,X_{T-3}=C_D)= \P_\omega(Q^\pi_{T-3}=0)\cdot \lambda\nu_D>\frac{7\lambda \nu_D}{8}$, and $\P_\omega(Q^\pi_{T-3}=0,A_{T-3}=1,X_{T-3}=C_D, \mathcal I^c)>\frac{7\lambda \nu_D}{8}-\frac{3\lambda \nu_D}{32}=\frac{25\lambda \nu_D}{32}$. This shows that the probability that the following statements all hold is larger than $25\lambda \nu_D/32$: $Q^\pi_{T-3}=0$, a dummy job arrives at the end of round $T-3$, and the learner starts the dummy in round $T-2$.

Next intersect this event with the event that the dummy fails to depart in round $T-2$, and a baseline job arrives in round $T-2$, followed by departure of the dummy in round $T-1$. Let $\mathcal H$ denote the resulting event before the arrival at the end of round $T-1$. Independence gives
\begin{equation}
\begin{aligned}
 \P_\omega(\mathcal H)
 &=\P_\omega(Q^\pi_{T-3}=0,A_{T-3}=1,X_{T-3}=C_D, \mathcal I^c)\cdot (1-p_D)\cdot \lambda\nu_0\cdot p_D \\
 &>\frac{25}{32}\lambda^2\nu_D\nu_0p_D(1-p_D)
 =\frac{25\zeta}{32\lambda \nu_C}.
 \label{eq:ia-terminal-event}
\end{aligned}
\end{equation}

Pair neighboring instances $\omega, \omega^{(j)}$ and use the same kernel $K_j$ and event $\mathcal S_j(\omega)$ as in the work-conserving proof. For each $\omega$, $\mathcal S_j(\omega)^c$ is a wrong decision and includes selecting the smaller probability job or idling at round $T$. Data processing through $K_j$, \Cref{eq:pre-arrival-tv}, and \Cref{eq:neighboring-tv,eq:ia-terminal-event} show that their paired wrong-action probability, including the independent candidate arrival, is at least
\[
 \frac{\lambda \nu_C}{m}\left(\P_\omega(\mathcal H)-\TV(P_\omega^{T-1},P_{\omega^{(j)}}^{T-1})\right)\geq
 \frac{\lambda \nu_C}{m}\left(\frac{25\zeta}{32\lambda \nu_C}-\frac{3\zeta}{32}\right)
 =\frac{25\zeta}{32m}-\frac{3\zeta\lambda\nu_C}{32m}
 \ge\frac{11\zeta}{16m}.
\]
Using the double counting method on the $m$-dimensional hypercube to get an average of $p_{\mathrm{err}}(\omega)$ as shown in the work-conserving proof gives an average wrong-action probability at least $\frac{11\zeta}{32}$. A wrong choice costs $|p_0(\omega)-p_{C_j}(\omega)|\geq \ell_0a_T$ for the case of selecting a small probability job in round $T$, and $\max\{p_0(\omega), p_{C_j}(\omega)\}\geq |p_0(\omega)-p_{C_j}(\omega)|\geq \ell_0a_T$ for the case of idling in round $T$ (here, $p_0(\omega), p_{C_j}(\omega)$ mean $p_0, p_{C_j}$ for the parameter vector $\theta_\omega$). Both cases have wrong choice costs of at least $\ell_0a_T$. Thus there must be some $\omega\in \{-1, 1\}^m$ such that $R_T^{\IA}(\pi, \omega)\geq \frac{11\ell_0\zeta a_T}{32}\ge c_0a_T$, a contradiction. Using the final bound on $a_T$ in the proof of \Cref{thm:lower-wc} for WC case, the theorem holds with
\[
 c_{\IA}=c_0\min\left(\sqrt3a_0,c_S,\frac\alpha{\sqrt3}\right).
\]
\end{proof}

\section{Benchmark Structure}
\label{app:benchmark-proofs}

We first prove the impossibility results for policies that may idle and for work-conserving policies. We then compare the two finite-horizon benchmark values.

\subsection{\texorpdfstring{Proof of \Cref{thm:ia-wc-impossible} for IA case}{Proof of the impossibility theorem for policies that may idle}}
\label{app:ia-impossibility}

\begin{proof}
Let there be two kinds of jobs: a slow one and a fast one, which will be denoted as $s, f$ respectively. Define the distribution $\mathcal{D}$ so that a slow job comes with probability $\nu=10^{-4}$, and a fast job comes with probability $1-\nu$. Choose
\[
 p_s=0.01,\qquad p_f=0.5,\qquad \lambda=0.0205.
\]
as the success probabilities for each job, and the arrival probability. Use contexts $e_1=(1, 0),e_2=(0, 1)$ and parameter vector $\theta^*=(\operatorname{logit}(p_s),\operatorname{logit}(p_f))$. Finally, properly choose fixed constants $S,p_-$, and $p_+$ so that $|\theta^*|<S, p_-\leq 0.01, p_+\geq 0.5$. By this definition, its load is
\[
 \rho=\lambda\left(\frac{\nu}{p_s}+\frac{1-\nu}{p_f}\right)\approx0.0412<1.
\]
For the stationary workload increment $Z=AG$ in \Cref{lem:stationary-workload},
\begin{align*}
 \E Z^2
 &=\lambda \E_X\left[\E_{g\sim \Geom(p(X))}\left(g^2\right)\right]=\lambda\left(
 \nu\frac{2-p_s}{p_s^2}+(1-\nu)\frac{2-p_f}{p_f^2}
 \right)
 \approx 0.1638.
\end{align*}
Here, we used the fact that $\E_{g\sim \Geom(p)}\left(g^2\right)=\frac{2-p}{p^2}$ holds. Substituting this value and value of $\rho$ into \Cref{eq:stationary-workload} gives $\E W\approx 0.1051<0.2$. The zero-started workload is stochastically dominated by the stationary distribution workload. Moreover, every work-conserving policy satisfies $Q_{t+1}\le W_{t+1}$, and every such policy is feasible when idling is allowed. Therefore $V_t^{\IA}\le V_t^{\WC}\le \E W_{t+1}\leq \E W<0.2$ for every $t$.

We will show $\sup_{i\geq t}\{R^{\IA}_{i}(\pi)\}\geq \min\{\frac{\lambda\nu}{4}p_s, \frac{\lambda\nu}{4}g, 0.3\}$ holds for all $t\ge 1$. Here $g=p_s^2-p_s(1+\lambda)+\lambda(1-p_s)\E p(X)\approx 4.1506\times10^{-5}>0$, and this fact shows that $\lim\sup R^{\IA}_{t}(\pi)$ is at least $\min\{\frac{\lambda\nu}{4}p_s, \frac{\lambda\nu}{4}g, 0.3\}$, which is a fixed positive constant.

For simplicity, let $C_0=\min\{\frac{\lambda\nu}{4}p_s, \frac{\lambda\nu}{4}g, 0.3\}$ and suppose, for a contradiction, there exists a $t_0\ge 1$ such that $\sup_{i\geq t_0}\{R^{\IA}_{i}(\pi)\}<C_0$ holds. So $R^{\IA}_{t}(\pi)<C_0$ for all $t\geq t_0$. Since $J_t^\pi=V_t^{\IA}+R_t^{\IA}(\pi)$ and $V_t^{\IA}<0.2$ for all $t$, we have $\E_\pi Q_{t+1}<0.2+C_0\leq 0.5$ for all $t\geq t_0$. Markov's inequality for the nonnegative integer $Q_{t+1}$ gives $\P_\pi(Q_{t+1}=0)=1-\P_\pi(Q_{t+1}\ge 1)\ge 1-\E_\pi Q_{t+1}>1/2$. Now fix $t\ge t_0$. Then the arrival at the end of round $t+1$ is independent of the event $\{Q_{t+1}=0\}$, which means the queue became empty at the beginning of round $t+1$. Hence, with probability greater than $\lambda\nu/2$, the system is empty at the beginning of round $t+1$ and a slow job arrives at the end of that round. At the beginning of round $t+2$, the server is free and exactly one slow job is waiting.

If the policy idles in this state, changing only this action to service reduces the expected terminal queue with horizon $t+2$ by $p_s$, since the expected number of departures differs by $p_s$ in the final round. Now evaluate the same decision with horizon $t+3$, so that two rounds remain. If the policy serves immediately, then the server has two choices, idle or work, in the final round $t+3$ if the server is free and the queue is not zero at the beginning of round $t+3$. In that case, choosing work in round $t+3$ makes the expected number of departures maximal, so in these two rounds the maximum of the expected number of departures is
\[
 p_s(1+\lambda \E p(X))+(1-p_s)p_s 
 =2p_s-p_s^2+\lambda p_s\E p(X).
\]
If it follows the policy that idles in round $t+2$ and serves the fastest job available in the last round $t+3$, the expected number of departures is
\[
 (1-\lambda)p_s+\lambda\E\max\{p_s,p(X)\}
 =(1-\lambda)p_s+\lambda\E p(X),
\]
where the equality uses $p(X)\ge p_s$ in this instance. Thus idling first has an expected-departure advantage at least
\begin{align*}
 g=p_s^2-p_s(1+\lambda)+\lambda(1-p_s)\E p(X)\approx4.1506\times10^{-5}>0.
\end{align*}
So the expectation of queue length decreases by at least $g$ if we change our decision to idle in round $t+2$, rather than serve a slow job.

Let $\mathcal H$ denote the event defined above: $Q_{T+1}=0$ and a slow job arrived at the end of round $t+1$, so the server is free and the slow job is the only waiting job at the beginning of round $t+2$. Let $\mathbf H_{t+1}$ denote the random history through the end of round $t+1$, and, for each realization $\mathfrak h\in\mathcal H$, define

$$
 a(\mathfrak h)
 :=
 \P_\pi\!\left(
 \text{$\pi$ serves the slow job in round $t+2$}
 \,\middle|\,
 \mathbf H_{t+1}=\mathfrak h
 \right).
$$

Because $\pi$ is a horizon-free policy, it uses the same conditional randomized decision rule after the same history $\mathfrak h$, regardless of whether the evaluation horizon is $t+2$ or $t+3$ (this is because the server doesn't know the horizon). A comparison policy that changes only this action is feasible because idling is allowed. Therefore, applying the preceding comparison argument conditionally on each history $\mathfrak h\in\mathcal H$ and then averaging over the common distribution of $\mathbf H_{t+1}$ gives

$$
 R_{t+2}^{\IA}(\pi)+R_{t+3}^{\IA}(\pi)
 \ge
 \E\left[
 \mathbf 1_{\mathcal H}
 \left(
 (1-a(\mathbf H_{t+1}))p_s
 +a(\mathbf H_{t+1})g
 \right)
 \right].
$$

Since $(1-a)p_s+ag\ge\min\{p_s,g\}$ for every $a\in[0,1]$, and since $\P(\mathcal H)\ge\lambda\nu/2$, it follows that

$$
 R_{t+2}^{\IA}(\pi)+R_{t+3}^{\IA}(\pi)
 \ge
 \P(\mathcal H)\min\{p_s,g\}
 \ge
 \frac{\lambda\nu}{2}\min\{p_s,g\}>0.
$$

But by assumption, since $t\ge t_0$ the following inequality holds.
\[
R_{t+2}^{\IA}(\pi)+R_{t+3}^{\IA}(\pi)<2C_0=2\min\{\frac{\lambda\nu}{4}p_s, \frac{\lambda\nu}{4}g, 0.3\}\le \frac{\lambda\nu}{2}\min\{p_s, g\}
\]
This gives a contradiction; therefore we have proved that $\sup_{i\geq t}\{R^{\IA}_{i}(\pi)\}\geq \min\{\frac{\lambda\nu}{4}p_s, \frac{\lambda\nu}{4}g, 0.3\}$ holds. Since $g<p_s$ and $\frac{\lambda \nu}{4}p_s<\frac{1}{4}<0.3$ hold in the above setting, $\min\{\frac{\lambda\nu}{4}p_s, \frac{\lambda\nu}{4}g, 0.3\}=\frac{\lambda\nu}{4}g$ holds and we can simplify the result as follows: for every $t\ge 1$,
\[
\sup_{i\geq t}\{R^{\IA}_{i}(\pi)\}\geq \frac{\lambda\nu}{4}g\approx 2.1272\times 10^{-11}.
\]
\end{proof}

\subsection{\texorpdfstring{Exact certificate for \Cref{prop:wc-reversal}}{Exact work-conserving certificate}}
\label{app:wc-certificate}

Define three jobs as $A, B, F$, and set their arrival probabilities $\nu_A=1/4$, $\nu_B=1/100$, and $\nu_F=37/50$. Together, we set their departure probabilities as $p_A=7/20, p_B=13/20, p_F=19/20$ and $\lambda=1/2$. These arrival probabilities sum to one, and give
\[
 \rho=\frac12\left(\frac{\nu_A}{p_A}+\frac{\nu_B}{p_B}+\frac{\nu_F}{p_F}\right)
 =\frac{6521}{8645}<1.
\]
Let $c=(c_A,c_B,c_F)$ record the waiting job counts, and let $a\in\{0,A,B,F\}$ denote the type in service; $a=0$ means that the server is free. The job in service is not included in $c$, and $e_i$ increments the waiting count of type $i$. For a function $f$, recall the definition of the arrival operator which is defined in \Cref{subsec:bellman-policy}.
\[
 (\mathcal Af)(c,a)=(1-\lambda)f(c,a)+\lambda\sum_{i\in\{A,B,F\}}\nu_i f(c+e_i,a).
\]
Now we will use the Bellman recursion formula which was first shown in \Cref{subsec:bellman-policy}. Let $V_h(c,a)$ be the minimum expected terminal queue with $h$ rounds remaining. Its boundary value is $V_0(c,a)=c_A+c_B+c_F+\ind{a\ne0}$, which denotes the number of remaining jobs (waiting or in service) in the queue. A type-$a$ job in service forces the recursion
\begin{equation}
 V_h(c,a)=p_a(\mathcal AV_{h-1})(c,0)+(1-p_a)(\mathcal AV_{h-1})(c,a).
 \label{eq:certificate-active}
\end{equation}
If the server is free and $c\ne0$, define the value of starting type $i$ for $i$ with $c_i>0$ by
\[
 Q_h(i;c)=p_i(\mathcal AV_{h-1})(c-e_i,0)
 +(1-p_i)(\mathcal AV_{h-1})(c-e_i,i)
\]
Work conservation gives
\begin{equation}
 V_h(c,0)=\min_{i:c_i>0}Q_h(i;c).
 \label{eq:certificate-free}
\end{equation}
When $c=0$, $V_h(0,0)=(\mathcal AV_{h-1})(0,0)$. For each fixed finite horizon $h$, these recursions visit only finitely many states because the distribution has finite support. Moreover, since all probability values are rational, every Bellman term can be computed exactly using rational arithmetic.

For the remainder of this subsection, abbreviate $Q_h(i)=Q_h(i;e_A+e_B)$. Thus, $Q_h(i)$ denotes the expected terminal queue length when the system starts with $h$ rounds remaining, a free server, and the initial queue ${A,B}$, and job $i$ is selected for service in the first round. By calculating the Bellman recursion, we can show that the one-round action values are $Q_1(A)=43/20$ and $Q_1(B)=37/20$, so $B$ is uniquely optimal with gap $3/10$. Iterating \Cref{eq:certificate-active,eq:certificate-free} to $h=14$ gives following values of $Q_{14}(A), Q_{14}(B)$. Here, both of $Q_{14}(A), Q_{14}(B)$ are rational.
\begin{equation}
\begin{gathered}
Q_{14}(A)\approx 1.6375,
\qquad
Q_{14}(B)\approx 1.6396,\\
Q_{14}(B)-Q_{14}(A)
\approx 0.0021>0
\end{gathered}
\label{eq:certificate-gap}
\end{equation}
Thus $A$ is uniquely optimal. Taking contexts $e_A,e_B,e_F$ and $\theta^*=(\operatorname{logit}p_A,\operatorname{logit}p_B,\operatorname{logit}p_F)$ gives a three-dimensional logistic representation with finite $S$. The accompanying script evaluates the same recursion using exact fractions and asserts both action inequalities.

The preceding example further shows that, in a finite-horizon, work-conserving, nonpreemptive system, a policy that selects a job with the highest departure probability whenever the server becomes free need not be optimal. Indeed, when only one round remains, serving $B$, which has the higher departure probability, maximizes the expected number of departures and hence minimizes the expected terminal queue length. In contrast, when 14 rounds remain, serving $A$, which has the lower departure probability, results in a larger expected number of departures and hence a smaller expected terminal queue length. Thus, the optimal decision may depend on the number of remaining rounds and need not coincide with the myopic maximum-departure-probability rule. This is one of the fundamental distinctions between the preemptive and nonpreemptive settings: in the finite-horizon, work-conserving preemptive setting, it is well known that an optimal policy selects a currently available job--server pair with the highest departure probability in every round. See \citet{bae2026algorithm} for more detail.

\subsection{\texorpdfstring{Proof of \Cref{thm:ia-wc-impossible} for WC case}{Proof of the work-conserving impossibility theorem}}
\label{app:wc-impossibility}

\begin{proof}
Use the instance from \Cref{prop:wc-reversal}. By \Cref{eq:lower-empty}, the zero-started common workload satisfies $\P(W_t=0)\ge1-\rho$ for every $t$. Fix $s\ge5$, and starting from the event $W_{s-4}=0$, consider the following event: an $F$ job arrives at the end of round $s-4$; it remains in service in round $s-3$ and an $A$ job arrives; it remains in service in round $s-2$ and a $B$ job arrives; and it departs in round $s-1$, with no arrival at the end of that round. The $F$ job is the only job present when its service begins, and nonpreemption fixes all three service decisions. At the beginning of round $s$, the server is free with exactly one $A$ and one $B$ job waiting. Define the above event as $\mathcal{H}_{\mathrm{WC}}$. Independence of the arrivals and departure indicators gives the lower bound of the probability that this kind of event happens.
\begin{equation}
\begin{aligned}
\P(\mathcal{H}_{\mathrm{WC}})
&=\P(W_{s-4}=0)\cdot \lambda\nu_F\cdot (1-p_F)\cdot \lambda\nu_A\cdot (1-p_F)\cdot \lambda\nu_B\cdot p_F\cdot (1-\lambda) \\
&\ge (1-\rho)\lambda^3(1-\lambda)\nu_F\nu_A\nu_B p_F(1-p_F)^2
\end{aligned}
 \label{eq:wc-decision-event}
\end{equation}
Here, we define $\zeta_{\WC}=(1-\rho)\lambda^3(1-\lambda)\nu_F\nu_A\nu_B p_F(1-p_F)^2\approx 6.7469\times 10^{-8}$.

Let $\mathbf H_{s-1}$ denote the random history through the end of round $s-1$. For each realization $\mathfrak h\in\mathcal H_{\WC}$, define
$$
\alpha_s(\mathfrak h)
:=
\P_\pi\left(
\text{$\pi$ starts $B$ in round $s$}
\,\middle|\,
\mathbf H_{s-1}=\mathfrak h
\right).
$$
Because $\pi$ is horizon-free, it uses the same conditional randomized decision rule after the same history $\mathfrak h$, regardless of whether the terminal horizon is $s$ or $s+13$.

With terminal horizon $s$, one round remains after each history $\mathfrak h\in\mathcal H_{\WC}$. A comparison policy that changes only this last action from $A$ to $B$ is feasible and work-conserving. Therefore
$$
R_s^{\WC}(\pi)
\ge
\E\left[
\mathbf 1_{\mathcal H_{\WC}}
\left(1-\alpha_s(\mathbf H_{s-1})\right)(p_B-p_A)
\right].
$$
Now evaluate the same infinite policy at horizon $s+13$. Fourteen rounds remain after the same history. If the policy starts $B$, even its best continuation has value $Q_{14}(B)$, whereas a comparison policy that starts $A$ and then continues optimally has value $Q_{14}(A)$. Hence
$$
R_{s+13}^{\WC}(\pi)
\ge
\E\left[
\mathbf 1_{\mathcal H_{\WC}}
\alpha_s(\mathbf H_{s-1})
\bigl(Q_{14}(B)-Q_{14}(A)\bigr)
\right].
$$
Adding the two inequalities and using $(1-\alpha)x+\alpha y\ge\min\{x,y\}$ for every $\alpha\in[0,1]$ and $x,y>0$ gives
\begin{equation}
\begin{aligned}
R_s^{\WC}(\pi)+R_{s+13}^{\WC}(\pi)
&\ge 
\E\left[
\mathbf 1_{\mathcal H_{\WC}}
\left(\left(1-\alpha_s(\mathbf H_{s-1})\right)(p_B-p_A)+\alpha_s(\mathbf H_{s-1})
\left(Q_{14}(B)-Q_{14}(A)\right)
\right)\right] \\
&\ge
\P(\mathcal H_{\WC})
\min\left\{p_B-p_A,Q_{14}(B)-Q_{14}(A)\right\}\\
&\ge
\zeta_{\WC}
\min\left\{p_B-p_A,Q_{14}(B)-Q_{14}(A)\right\}\approx 1.425\times 10^{-10}.
\end{aligned}
\end{equation}

So this inequality shows that for $s\ge 5$,

\begin{align*}
\sup_{i\ge s}R_s^{\WC}(\pi)&\ge \frac12\left(R_s^{\WC}(\pi)+R_{s+13}^{\WC}(\pi)\right) \\
&\ge \frac12\zeta_{\WC}
\min\left\{p_B-p_A,Q_{14}(B)-Q_{14}(A)\right\}\\
&\approx 7.125\times 10^{-11}.
\end{align*}

\end{proof}

\subsection{A comparison between the benchmark classes}

\begin{theorem}\label{thm:oracle-bridge}
For every $T\ge1$, $0\le V_T^{\WC}-V_T^{\IA}\le C_W(r)$. For any policy $\pi$, let $D_T^\pi:=\E_\pi\sum_{t=1}^T D_t=\lambda T-J_T^\pi$ be its expected number of departures. In particular, $D_T^{\IA}=\lambda T-V_T^{\IA}$ and $D_T^{\WC}=\lambda T-V_T^{\WC}$. Then
\[
 D_T^{\WC}\ge\left(1-\frac{C_W(r)}{\lambda T}\right)^+D_T^{\IA}.
\]
Moreover, a work-conserving learner with regret $R_T^{\WC}(\pi)$ satisfies
\[
 D_T^\pi\ge\left(1-\frac{C_W(r)+R_T^{\WC}(\pi)}{\lambda T}\right)^+D_T^{\IA}.
\]
\end{theorem}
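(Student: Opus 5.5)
The lower inequality $V_T^{\WC}\ge V_T^{\IA}$ is immediate, since every WC policy is an IA policy and $V_T^{\IA}$ minimizes over the larger class. For the upper inequality, pick any WC policy, say FCFS. Under the common coupling of arrivals, contexts and service times, its queue satisfies $Q_{T+1}\le W_{T+1}$, where $W$ is the policy-independent workload started from $W_1=0$. Item 1 of \Cref{lem:workload} then gives
\[
V_T^{\WC}\le \E W_{T+1}\le C_W(r),
\]
and since $V_T^{\IA}\ge 0$ we get $V_T^{\WC}-V_T^{\IA}\le C_W(r)$. Thus the gap bound does not compare the two optimal policies at all. It only uses that $V_T^{\WC}$ is itself uniformly bounded.

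Next I will verify the identity $D_T^\pi=\lambda T-J_T^\pi$. Because $D_t=1$ only when a job is in service, we have $(Q_t-D_t)^+=Q_t-D_t$, so $Q_{T+1}=Q_1+\sum_t A_t-\sum_t D_t$. With $Q_1=0$ and $\E A_t=\lambda$, this gives $J_T^\pi=\lambda T-D_T^\pi$. Taking optima over each class yields $D_T^{\IA}=\lambda T-V_T^{\IA}$ and $D_T^{\WC}=\lambda T-V_T^{\WC}$.

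For the multiplicative bound, write
\[
D_T^{\WC}=D_T^{\IA}-(V_T^{\WC}-V_T^{\IA})\ge D_T^{\IA}-C_W(r).
\]
Since $V_T^{\IA}\ge0$, we have $D_T^{\IA}\le\lambda T$, hence $C_W(r)\le \frac{C_W(r)}{\lambda T}D_T^{\IA}$ only in the wrong direction. I therefore need the reverse, $C_W(r)\le \frac{C_W(r)}{\lambda T}\,\lambda T$, and I want to replace $\lambda T$ by $D_T^{\IA}$. The correct manipulation is
\[
D_T^{\IA}-C_W(r)\ge D_T^{\IA}-\frac{C_W(r)}{\lambda T}D_T^{\IA}
\quad\Longleftrightarrow\quad
C_W(r)\le \frac{C_W(r)}{\lambda T}D_T^{\IA},
\]
which would require $D_T^{\IA}\ge\lambda T$. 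That fails whenever $V_T^{\IA}>0$. This is the main obstacle, and I will handle it differently: use $V_T^{\IA}\le V_T^{\WC}\le C_W(r)$ to get $D_T^{\IA}\ge \lambda T-C_W(r)$, and then argue directly. If $C_W(r)\ge\lambda T$, the claim is trivial because the right-hand side is $0$ and $D_T^{\WC}\ge0$. Otherwise the target is
\[
D_T^{\WC}\ge D_T^{\IA}-\frac{C_W(r)}{\lambda T}D_T^{\IA}.
\]
A cleaner route is to bound $V_T^{\WC}$ itself by $C_W(r)$, which gives
\[
D_T^{\WC}\ge \lambda T-C_W(r)=\Bigl(1-\frac{C_W(r)}{\lambda T}\Bigr)\lambda T\ge \Bigl(1-\frac{C_W(r)}{\lambda T}\Bigr)D_T^{\IA},
\]
where the last step uses $D_T^{\IA}\le\lambda T$ and a nonnegative factor. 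This avoids the problematic step entirely, and I will use it.

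The learner statement follows the same way. From $J_T^\pi=V_T^{\WC}+R_T^{\WC}(\pi)\le C_W(r)+R_T^{\WC}(\pi)$ we get
\[
D_T^\pi\ge \lambda T-C_W(r)-R_T^{\WC}(\pi)\ge\Bigl(1-\frac{C_W(r)+R_T^{\WC}(\pi)}{\lambda T}\Bigr)\lambda T.
\]
Bounding $\lambda T\ge D_T^{\IA}$ when the factor is nonnegative, and using $D_T^\pi\ge0$ when it is not, handles the positive part.
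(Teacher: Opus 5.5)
Your proof is correct and follows the paper's argument essentially step for step: $V_T^{\IA}\le V_T^{\WC}\le \E W_{T+1}\le C_W(r)$ via $Q_{T+1}\le W_{T+1}$ under a work-conserving policy, then the telescoping identity $D_T^\pi=\lambda T-J_T^\pi$, then $D_T^{\WC}\ge\lambda T-C_W(r)\ge(1-C_W(r)/(\lambda T))D_T^{\IA}$ split into the cases $\lambda T\le C_W(r)$ and $\lambda T>C_W(r)$, and the same argument for the learner. You can simply delete the abandoned detour through $D_T^{\WC}\ge D_T^{\IA}-C_W(r)$; your remark that $(Q_t-D_t)^+=Q_t-D_t$, because $D_t=1$ forces $Q_t\ge1$, is a small justification that the paper leaves implicit.
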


\begin{proof}
Every work-conserving policy is also feasible when idling is allowed, so $V_T^{\IA}\le V_T^{\WC}$. Under a work-conserving policy, $Q_{T+1}\le W_{T+1}$ pathwise, and \Cref{lem:workload} gives $V_T^{\WC}\le C_W(r)$. This proves the additive comparison.

Summing $Q_{t+1}=Q_t-D_t+A_t$ from $t=1$ to $T$ and using $Q_1=0$ gives
\[
 Q_{T+1}=\sum_{t=1}^TA_t-\sum_{t=1}^TD_t.
\]
Taking expectations and using $\E A_t=\lambda$ yields $D_T^\pi=\lambda T-J_T^\pi$. Hence $D_T^{\WC}\ge\lambda T-C_W(r)$ and $D_T^{\IA}\le\lambda T$. If $\lambda T\le C_W(r)$, the first multiplicative bound has a zero right-hand side. Otherwise,
\[
 D_T^{\WC}\ge\lambda T-C_W(r)
 \ge\left(1-\frac{C_W(r)}{\lambda T}\right)D_T^{\IA}.
\]
Finally, $J_T^\pi\le V_T^{\WC}+R_T^{\WC}(\pi)\le C_W(r)+R_T^{\WC}(\pi)$, and the same two cases prove the learner bound.
\end{proof}

\section{Anytime SEPT Tracking}
\label{app:anytime-proof}

We first state the prediction guarantee and the busy-period bound used in the proof of \Cref{thm:anytime}. We then prove the theorem and establish each supporting result. The two sigma-fields below record the order of service, estimation, and arrivals within a round.

Let $\mathcal F_t$ be the $\sigma$-algebra generated by all information about the queue, the job in service, the observations, and the learner's randomization available at the beginning of round $t$. The scheduling decision is $\mathcal F_t$-measurable. Let $\mathcal G_t$ be the $\sigma$-algebra obtained by additionally including $D_t$ and any estimator update made after service in round $t$, but not $(A_t,X_t)$. Under \Cref{ass:model}, $(A_t,X_t)$ is independent of $\mathcal G_t$, with $A_t\sim\Bern(\lambda)$ and $X_t\sim\mathcal D$ independent of $A_t$.

The following three lemmas are the main ingredients in the proof of \Cref{thm:anytime}. First, \Cref{lem:busy-period-confidence} shows that, with high probability, every sufficiently recent busy period uses a predictor with small prediction error. Recall that $u=\left\lfloor\frac{t}{2}\right\rfloor, m_t=\left\lfloor\frac{\lambda u}{2}\right\rfloor, b_t=e^{-\lambda u/8}+\frac{1}{t^4}$.

\begin{lemma}\label{lem:busy-period-confidence}
Fix $t\ge6$ such that $m_t\ge5$. Then there exists an event $B_t$ with $\P(B_t)\le b_t$ such that, on $B_t^c$, every predictor used in a busy period beginning in a round $s$ with $u+2\le s\le t$ satisfies $\mathcal E(\widehat p)\le\bar\eps_t$.
\end{lemma}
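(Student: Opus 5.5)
The plan is to build $B_t$ as the union of two bad events. The first is a sample-shortage event: too few jobs have started service by round $u$. The second is a simultaneous-prefix failure: some estimator $\widehat p_k$ with $k\ge m_t$ has prediction error above $\bar\eps_t$. For the prefix event I would apply \Cref{cor:simultaneous-prediction} with $\delta=\delta_t=t^{-4}$, so $\delta_k=6\delta_t/(\pi^2k^2)$. With probability at least $1-\delta_t$, simultaneously for all $k\ge1$, $\mathcal E(\widehat p_k)\le\eps_k(\delta_k)$. For $m_t\le k\le t$ we have $\log(1/\delta_k)=\log(\pi^2k^2/(6\delta_t))\le\log(\pi^2t^2/(6\delta_t))$ and $\log(32\bar Sk)\le\log(32\bar St)$, while $2k\ge 2m_t$. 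Hence $\eps_k(\delta_k)\le\bar\eps_t$, and $m_t\ge5$ guarantees the non-trivial branch of the definition. Since at most $t$ jobs can have started service by round $t$, no index $k>t$ is ever used before round $t$.

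Next I need the structural fact that the predictor used in a busy period starting at round $s$ is $\widehat p_k$, the ERM on the first $k$ recorded pairs, with $k$ equal to the number of jobs that began service before the preceding update. The update happens when the queue empties after service, so every job that has started service has also completed. Estimated-SEPT records $(X_i,Y_i)$ only at the first service attempt, and the recorded set at an empty moment consists of exactly all jobs that arrived before the empty moment. The jobs are served in arrival order relative to busy periods: all jobs of earlier busy periods are recorded before later ones, although within a busy period the order may be permuted by priority. I would handle this by noting that \Cref{cor:simultaneous-prediction} applies to ERM on any fixed index set. The ERM objective is permutation invariant, and at an update time the recorded set is exactly $\{1,\dots,k\}$ in arrival order. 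So $\widehat p$ equals $\widehat p_k$ as defined on the job-indexed i.i.d.\ sequence, with no stopping-time issues, as the remark in the corollary says.

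It remains to show that $k\ge m_t$ for busy periods starting at $s\ge u+2$, outside a small event. A busy period beginning at $s\ge u+2$ is preceded by an empty moment in round $s-1\ge u+1$. At that moment all jobs that arrived by the end of round $u$ have completed and been recorded, so $k\ge M_u:=\sum_{\tau\le u}A_\tau$. By the multiplicative Chernoff bound, as in \Cref{lem:sample-availability}, $\P(M_u<\lambda u/2)\le e^{-\lambda u/8}$, and $m_t=\lfloor\lambda u/2\rfloor$. Setting $B_t=\{M_u<\lambda u/2\}\cup\{\exists k\le t:\mathcal E(\widehat p_k)>\eps_k(\delta_k)\}$ gives $\P(B_t)\le e^{-\lambda u/8}+t^{-4}=b_t$.

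The main obstacle I expect is the careful bookkeeping in the second step. One part is confirming that the recorded index set at each update is a prefix in arrival order. Care is needed with the arrival at the end of the emptying round: it is not yet included, since the update occurs before the arrival. The other part is confirming that the corresponding $k$ lies in $[m_t,t]$ so that the uniform bound applies.
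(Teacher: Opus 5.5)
Your proposal is correct and follows essentially the same route as the paper. You take $B_t$ as the union of the Chernoff shortfall event for $M_u$ and the failure event of \Cref{cor:simultaneous-prediction} at level $t^{-4}$. On its complement, every update made at an empty moment in a round $s-1\ge u+1$ uses a complete arrival-order prefix of $N$ jobs with $m_t\le N\le t$, so $\eps_N(\delta_N)\le\bar\eps_t$. The only differences are cosmetic: you use $\{M_u<\lambda u/2\}$ in place of the paper's smaller event $\{M_u<m_t\}$, and you spell out the permutation invariance of the ERM objective, which the paper simply asserts by calling the training set a complete prefix.
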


For $s\ge2$, let $\mathcal S_s$ denote the event that the system is empty immediately after service in round $s-1$ and that an arrival occurs at the end of that round. Thus, on $\mathcal S_s$, a new busy period begins in round $s$. The next \Cref{lem:busy-period-tail-comparison} shows that, under two work-conserving policies, the contribution to the expected terminal queue-length difference from a busy period beginning in round $s$ and remaining active up to round $t$ decays exponentially in its age $h=t-s+1$.

\begin{lemma}\label{lem:busy-period-tail-comparison}
Let $h=t-s+1$, and let $C_s$ be the event that a busy period begins in round $s$ and remains active after service in round $t$; equivalently, if we denote $\tau_{\mathrm{bp}}$ as the length of the busy period starting at round $s$ conditional on the event $\mathcal S_s$, $C_s$ is defined as $\mathcal S_s\cap\{\tau_{\mathrm{bp}}>h\}$. Assign each job the same service time under two work-conserving ordering rules. If their queue lengths after round $t$ are $Q_{t+1}^{(1)}$ and $Q_{t+1}^{(2)}$, define $Z_{s,t}=Q_{t+1}^{(1)}-Q_{t+1}^{(2)}$. Then
\begin{equation}
    \E\bigl(|Z_{s,t}|\ind{C_s}\bigr)
    \le
    \E\bigl(W_{t+1}\ind{C_s}\bigr)
    \le
    \frac{M_G(r)}{r}\psi(r)^h.
    \label{eq:busy-period-tail-comparison}
\end{equation}
\end{lemma}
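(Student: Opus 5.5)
The plan is to prove the two inequalities in \Cref{eq:busy-period-tail-comparison} separately: a pathwise comparison for the first and the stopped-workload tail bound from \Cref{app: busy-period-upper-bound} for the second.

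\textbf{First inequality.} Under the stated coupling, every job carries the same service time under both work-conserving rules. Hence the two systems share the same workload process $(W_t)$, given by \Cref{eq: workload recursion}, and they become empty at exactly the same times. For any work-conserving queue we have $0\le Q^{(i)}_{t+1}\le W_{t+1}$ pathwise, so $|Z_{s,t}|\le W_{t+1}$ pathwise. Multiplying by $\ind{C_s}$ and taking expectations gives $\E(|Z_{s,t}|\ind{C_s})\le\E(W_{t+1}\ind{C_s})$. No independence is needed here.

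\textbf{Second inequality.} On $\mathcal S_s$, the busy period starting in round $s$ has not ended after service in round $t$, that is, $\tau_{\rm bp}>h$ with $h=t-s+1$. On this event no emptying occurs in between, so the true workload $W_{t+1}$ coincides with the stopped workload $\widetilde W_h$ of \Cref{app: busy-period-upper-bound} after the arrival $Z_t$ is added. Concretely, $W_{t+1}=\widetilde W_{h-1}-1+Z_t$ with $\widetilde W_{h-1}\ge2$, which is exactly $\widetilde W_h$. The indexing convention is: $\widetilde W_0=G$ at the beginning of round $s$, and $\widetilde W_h$ at the beginning of round $s+h=t+1$. I would check this indexing carefully against the definition of $\{\tau_{\rm bp}>h\}=\{\widetilde W_h>0\}$, since the boundary case $\widetilde W_{h-1}=1$ must be excluded by $C_s$.

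Given the identification, I use $w\le e^{rw}/r$ and apply \Cref{eq:busy-period-workload-tail} conditionally on $G$:
\[
\E\bigl(W_{t+1}\ind{C_s}\mid\mathcal S_s,G\bigr)\le\frac1r\E\bigl(e^{r\widetilde W_h}\ind{\tau_{\rm bp}>h}\mid G\bigr)\le\frac{e^{rG}}r\psi(r)^h.
\]
The event $\mathcal S_s$ is determined by the history up to and including the arrival indicator at the end of round $s-1$. The first job's service time $G$ and all subsequent arrivals $Z_s,\dots,Z_t$ are independent of that history, and $G$ has the law of $G(X)$ with $X\sim\mathcal D$. Averaging over $G$ therefore gives $\E e^{rG}=M_G(r)$. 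Multiplying by $\P(\mathcal S_s)\le1$ yields $\E(W_{t+1}\ind{C_s})\le M_G(r)\psi(r)^h/r$.

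\textbf{Main obstacle.} The delicate step is the independence and conditioning justification in the second part. The concern is that conditioning on $\mathcal S_s$ must not bias the law of $G$ or of the later increments. I would handle this using the $\sigma$-fields $\mathcal G_{s-1}$: $\mathcal S_s$ is measurable with respect to $\mathcal G_{s-1}$ together with $A_{s-1}$, while the context $X_{s-1}$, the service time $G$, and the arrivals in rounds $s,\dots,t$ are independent of this information. The drift computation behind \Cref{eq:busy-period-workload-tail} then applies verbatim under this conditioning.
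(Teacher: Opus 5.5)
Your proposal is correct and follows the paper's proof essentially step for step: the shared workload gives $|Z_{s,t}|\le W_{t+1}$ pathwise, and on $C_s$ you identify $W_{t+1}=\widetilde W_h$, then apply $w\le e^{rw}/r$, \Cref{eq:busy-period-workload-tail} conditionally on $G$, $\E(e^{rG}\mid\mathcal S_s)=M_G(r)$ (using that $\mathcal S_s$ is fixed before the initial context is drawn), and $\P(\mathcal S_s)\le 1$. The only slip is cosmetic: the sentence ``On $\mathcal S_s$, the busy period \ldots\ has not ended after service in round $t$'' should say ``On $C_s$''.
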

We refer to the above inequality as the busy-period tail bound.

Finally, \Cref{lem:busy-period-perturbation} shows that if a predictor $q$, fixed before a busy period begins, approximates the true departure-probability function $p$ with small prediction error, then $q$-SEPT and true SEPT have close expected terminal queue lengths when both policies are evaluated under the true model $p$. Here, $q$-SEPT is the priority policy that, whenever the server becomes free, selects a waiting job maximizing $q(X)$, thereby treating $q$ as the departure-probability function. For a priority rule $\pi$, $J_{h,\mathrm{stop}}^p(\pi)$ denotes the expected queue length after $h$ rounds when the actual departure probability of a job with context $X$ is $p(X)$ and the jobs are selected according to $\pi$, with the comparison stopped as soon as service leaves no unfinished job. Thus, $J_{h,\mathrm{stop}}^p(q\text{-SEPT})$ is the expected terminal queue length when jobs are ranked using $q$, while their actual departures are governed by $p$.

\begin{lemma}\label{lem:busy-period-perturbation}
Suppose $\mathcal E(q)=\E|q(X)-p(X)|=\eps$. Run a busy period for $h$ rounds, beginning with the first round in which its initial job is served, and stop both comparison systems as soon as service leaves no unfinished job. Then
\begin{equation}
 \left|J_{h,\mathrm{stop}}^p(q\text{-SEPT})-J_{h,\mathrm{stop}}^p(p\text{-SEPT})\right|
 \le \frac{2(h+1)^2}{p_-}\eps.
 \label{eq:busy-period-perturbation}
\end{equation}
\end{lemma}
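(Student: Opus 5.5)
We compare the two priority rules, $q$-SEPT and $p$-SEPT, by placing both on one probability space and splitting the discrepancy into two sources. The busy period starts with the same initial job in both systems. Both systems see the same arrival indicators and the same contexts in every round. Each job's service time is generated in the true model as $G(X_i)\sim\Geom(p(X_i))$, with one common service time per job. Under this coupling both systems are work conserving with identical service times, so \eqref{eq: workload recursion} gives them a common workload. Hence they stop at the same round, and pathwise $|Q^{(q)}-Q^{(p)}|\le W\le$ (number of jobs in the busy period plus residual work). The difference between the two systems comes only from which waiting job is picked whenever the server becomes free.

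The main step is an intermediate hybrid comparison, mirroring the proof of \eqref{eq:same-context-transfer} in \Cref{lem:planning-transfer}. Consider the auxiliary system in which every departure probability is replaced by $q(X)$, so that $q$-SEPT is the true SEPT of that auxiliary model. We write
\[
J^p_{h,\mathrm{stop}}(q\text{-SEPT})-J^p_{h,\mathrm{stop}}(p\text{-SEPT})
=\bigl[J^p(q\text{-SEPT})-J^q(q\text{-SEPT})\bigr]+\bigl[J^q(q\text{-SEPT})-J^p(p\text{-SEPT})\bigr].
\]
Neither bracket obviously matches a known bound, however. A cleaner route is a direct coupling. Couple the service time of each job under $p$ and under $q$ using \Cref{lem:geometric-coupling}, so that the two disagree with probability at most $|p(X_i)-q(X_i)|/p_-$. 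Then the same argument as in \eqref{eq:same-context-transfer} bounds $|J^p(\pi)-J^q(\pi)|$ by $(h+1)^2\eps/p_-$ for any fixed priority rule $\pi$. The count is as follows: at most $h+1$ jobs are involved (the initial job plus at most $h$ arrivals), the expected sum of the errors $|p-q|$ over these jobs is at most $(h+1)\eps$, and the final difference on the disagreement event is at most $h+1$. Since $q$ is fixed before the busy period, the contexts arriving during it are i.i.d.\ from $\mathcal D$ and independent of $q$. This independence justifies the factor $\E|p(X)-q(X)|=\eps$ per arrival, with the initial job's context also fresh.

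Applying this transfer twice gives
\[
|J^p(q\text{-SEPT})-J^q(q\text{-SEPT})|\le (h+1)^2\eps/p_-\quad\text{and}\quad |J^q(p\text{-SEPT})-J^p(p\text{-SEPT})|\le (h+1)^2\eps/p_-.
\]
It then remains to compare $q$-SEPT and $p$-SEPT within the single model $q$, and this is the main obstacle. SEPT is not optimal in general, as \Cref{prop:wc-reversal} shows, so we cannot argue that $q$-SEPT beats $p$-SEPT in model $q$. Instead we would avoid the middle comparison entirely. Take the label-switching coupling: in the $q$-model, the orderings induced by $q$ and by $p$ differ only when some pair of waiting jobs is ranked differently. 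Rather than bound that directly, we use the fact that both policies' decisions depend only on the observed contexts, and do the transfer in one step. Run the $q$-ranked system with true probabilities $p$. Compare it with the system in which each job's ranking label and its true probability are both $q(X_i)$, and likewise in reverse with labels $p(X_i)$. On the event that all coupled service times agree, one must argue that ranking by $q$ versus by $p$ induces the same workload path, so both systems are empty at the same times and otherwise differ by at most the jobs present. If that symmetry fails, the fallback is to bound the ordering effect by noting that the two rankings disagree on a job only through jobs with $|p-q|$ large. Markov's inequality then charges each such job at most $h+1$, which again yields an $O((h+1)^2\eps/p_-)$ term. The constant $2$ in \eqref{eq:busy-period-perturbation} comes from summing the two transfer terms.
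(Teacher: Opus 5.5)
\textbf{There is a genuine gap in your proposal.} It sits at the step you yourself flag as the main obstacle: you never bound the effect of ranking by $q$ instead of by $p$ by anything of order $\eps$.

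\textbf{The decomposition does not close.} Your fixed-priority transfer bounds, with departure probabilities $p$ versus $q$ and coordinatewise geometric coupling, are correct. However, $J^p(q\text{-SEPT})-J^q(q\text{-SEPT})$ and $J^q(p\text{-SEPT})-J^p(p\text{-SEPT})$ do not telescope to the target. The middle term $J^q(q\text{-SEPT})-J^q(p\text{-SEPT})$ is left over. So your claim that the constant $2$ ``comes from summing the two transfer terms'' contradicts your own decomposition.

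\textbf{Neither proposed fix works.}
\begin{itemize}
\item The common-workload observation holds for any two work-conserving orders with common service times. But it only gives $|Q^{(q)}-Q^{(p)}|\le W$, an $O(1)$ quantity with no factor of $\eps$.
\item The Markov fallback rests on a false premise. The $p$- and $q$-rankings can disagree on a pair $i,j$ even when every $|p_i-q_i|$ is arbitrarily small, as soon as $p_i$ and $p_j$ are close or equal. For example, if $p(X)$ has an atom, an arbitrarily small perturbation in $q$ reverses the arrival-order tie-break. The probability of a ranking disagreement is therefore not controlled by $\eps$ at all.
\end{itemize}
What is controlled is the \emph{effect} of a disagreement. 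If the two rankings disagree on $i,j$, then $|p_i-p_j|\le|p_i-q_i|+|p_j-q_j|$.

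\textbf{The missing idea is a Lipschitz property of the SEPT value} in which each job's priority equals its own departure probability (\Cref{lem:matching-sept-lipschitz}). Fix the jobs and their arrival rounds. Let $\mathcal V(v)$ be the expected terminal queue when job $i$ has service time $\Geom(v_i)$ and jobs are ranked by $v_i$. Then $|\mathcal V(v)-\mathcal V(w)|\le \frac{N}{p_-}\norm{v-w}_1$. The proof has three pieces:
\begin{itemize}
\item On any region where the coordinate order is fixed, this is exactly your coupling argument.
\item At a crossing $v_i=v_j$, the two jobs have identical service-time laws. Exchanging their labels together with their service times shows that either relative order gives the same value, so $\mathcal V$ is continuous across the crossing.
\item Integrate along the segment from $p$ to $q$, split at the finitely many crossings. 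The subsegments have total $\ell_1$-length $\norm{p-q}_1$, which gives the bound.
\end{itemize}

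\textbf{Your first decomposition was the right one.} The two-bracket split you wrote first and then discarded closes the argument with this lemma. The bracket $J^p(q\text{-SEPT})-J^q(q\text{-SEPT})$ is a fixed-priority transfer. The bracket $J^q(q\text{-SEPT})-J^p(p\text{-SEPT})=\mathcal V(q)-\mathcal V(p)$ is handled by the Lipschitz bound. Conditionally on the contexts, each bracket is at most $\frac{N}{p_-}\sum_i|p_i-q_i|$. Averaging with $N\le h+1$ and $\E\sum_i|p_i-q_i|\le(h+1)\eps$ gives exactly $\frac{2(h+1)^2}{p_-}\eps$.
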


We conclude this introductory part preceding the main proofs by introducing the function $\Phi_r(\eps)$, which appears in the statement of \Cref{thm:anytime}. For $\eps\in(0,1]$, define
\begin{equation}
 \Phi_r(\eps)=\sum_{a=0}^{\infty}\min\left(
 \frac{2(a+2)^2}{p_-}\eps,
 \frac{M_G(r)}r\psi(r)^{a+1}\right).
 \label{eq:busy-period-sum}
\end{equation}
Here, $M_G(r)$ and $\psi(r)$ are constants defined at the beginning of \Cref{app:workload}. Splitting the series at the smallest $a$ for which $\psi(r)^a\le \varepsilon$ and bounding the terms appropriately, we obtain $\Phi_r(\varepsilon)=O\left(\varepsilon\left(1+\log^3(1/\varepsilon)\right)\right)$, where the constant depends only on $r$, $p_-$, $M_G(r)$, and $\psi(r)$. A detailed proof of this asymptotic bound is provided in \Cref{app:busy-period-sum-proof}.

\subsection{\texorpdfstring{Proof of \Cref{thm:anytime}}{Proof of the anytime theorem}}

\begin{proof}
Run \Cref{alg:anytime} and true SEPT with the same arrivals and contexts, and assign each job the same geometric service time in the two systems. Both policies are work conserving, so \Cref{eq: workload recursion} gives the same workload path. In particular, the two systems become empty after service in exactly the same rounds and therefore have the same busy-period boundaries (i.e. the starting/ending times of busy periods are all the same in both queues).

For $s\in\{2,\ldots,t\}$, let $q_s$ denote the predictor held by the learner immediately after service and any estimator update in round $s-1$. On $\mathcal S_s$, this predictor is fixed before the context of the job initiating the new busy period is observed and remains fixed throughout that busy period. Starting with this initial job which arrived at the end of round $s-1$, compare $q_s$-SEPT and true SEPT using the same subsequent arrivals, contexts, and service times. Stop both comparison systems as soon as service leaves no unfinished job. With $h=t-s+1$, let $Z_{s,t}$ be the difference between their queue lengths after these $h$ rounds, where both stopped queue lengths are defined to be zero if the busy period has already ended.

By definition, on $\mathcal S_s\setminus\mathcal C_s$, the busy period that begins in round $s$ ends by the completion of service in round $t$. The two stopped queue lengths are therefore both zero, and hence $Z_{s,t}=0$. Consequently, $\E\left(Z_{s,t}\ind{\mathcal S_s}\right) =\E\left(Z_{s,t}\ind{\mathcal C_s}\right)$ for every $s$.

Moreover, only the last busy period occurring during the first $t$ rounds can contribute to the terminal queue-length difference. If no busy period remains active after service in round $t$, the two coupled systems have the same terminal queue length. It is therefore sufficient to restrict attention to the event $\bigcup_{s=2}^{t}\mathcal C_s$. Since the events $\mathcal C_2,\ldots,\mathcal C_t$ are pairwise disjoint and $Z_{s,t}$ agrees with the actual terminal queue-length difference on $\mathcal C_s$, we obtain
\begin{equation}
\begin{aligned}
    J_t^{\rm alg}-J_t^{\SEPT}
    &=
    \E\left(
        \left(Q_{t+1}^{\rm alg}-Q_{t+1}^{\SEPT}\right)
        \ind{\bigcup_{s=2}^{t}\mathcal C_s}
    \right) \\
    &=
    \sum_{s=2}^{t}
    \E\left(Z_{s,t}\ind{\mathcal C_s}\right)
    =
    \sum_{s=2}^{t}
    \E\left(Z_{s,t}\ind{\mathcal S_s}\right).
    \label{eq:sept-busy-decomposition}
\end{aligned}
\end{equation}
A busy period beginning in round $t+1$ need not be included, since its initial job has not yet received service and is present in both terminal queues.

For the statistical comparison, define $E_s=\left\{\mathcal E(q_s)\le\bar\eps_t\right\}$ and let $\mathcal H_s=\mathcal G_{s-1}\vee\sigma(A_{s-1})$. The predictor $q_s$ and the events $\mathcal S_s$ and $E_s$ are $\mathcal H_s$-measurable. Conditional on $\mathcal H_s$, the predictor $q_s$ is fixed. Moreover, on the event $\mathcal S_s$, the context of the initial job of the busy period beginning in round $s$ has distribution $\mathcal D$, and the subsequent arrival process is independent of $\mathcal H_s$.

Recall that $Z_{s,t}$ is defined as the difference between the terminal queue lengths in the stopped $h$-round comparison of $q_s$-SEPT and true SEPT, where $h=t-s+1$. Therefore, conditional on $\mathcal H_s$, on the event $\mathcal S_s$, its conditional expectation is precisely the difference between the two expected terminal queue lengths appearing in \Cref{lem:busy-period-perturbation}:
\[
    \E\left(Z_{s,t}\mid\mathcal H_s\right)
    =
    J_{h,\mathrm{stop}}^p\left(q_s\text{-SEPT}\right)
    -
    J_{h,\mathrm{stop}}^p\left(p\text{-SEPT}\right).
\]
Consequently, \Cref{lem:busy-period-perturbation} applies conditionally and gives, on $\mathcal S_s$,
\[
    \left|
        \E\left(Z_{s,t}\mid\mathcal H_s\right)
    \right|
    \le
    \frac{2(h+1)^2}{p_-}\mathcal E(q_s).
\]
In particular, on $\mathcal S_s\cap E_s$,
\[
    \left|
        \E\left(Z_{s,t}\mid\mathcal H_s\right)
    \right|
    \le
    \frac{2(h+1)^2}{p_-}\bar\eps_t.
\]
Since $\mathcal S_s\cap E_s$ is $\mathcal H_s$-measurable, the tower property then gives
\begin{align*}
    \left|
        \E\bigl(
            Z_{s,t}\ind{\mathcal S_s\cap E_s}
        \bigr)
    \right|
    &\le
    \E\left[
        \ind{\mathcal S_s\cap E_s}
        \left|
            \E\bigl(Z_{s,t}\mid\mathcal H_s\bigr)
        \right|
    \right]                                                     \\
    &\le
    \frac{2(h+1)^2}{p_-}\bar\eps_t.
\end{align*}
On $\mathcal S_s\setminus C_s$, the busy period has ended by round $t$, so the stopped discrepancy $Z_{s,t}$ is zero. It follows from \Cref{lem:busy-period-tail-comparison} that
\begin{align*}
    \left|
        \E\bigl(
            Z_{s,t}\ind{\mathcal S_s\cap E_s}
        \bigr)
    \right|
    &=\left|
        \E\bigl(
            Z_{s,t}\ind{\mathcal C_s\cap E_s}
        \bigr)
    \right| \\
    &\le
    \E\bigl(|Z_{s,t}|\ind{C_s\cap E_s}\bigr)                            \\
    &\le \E\bigl(|Z_{s,t}|\ind{C_s}\bigr) \le
    \frac{M_G(r)}{r}\psi(r)^h.
\end{align*}
Combining the preceding two bounds yields
\begin{equation}
    \left|
        \E\bigl(
            Z_{s,t}\ind{\mathcal S_s\cap E_s}
        \bigr)
    \right|
    \le
    \min\left\{
        \frac{2(h+1)^2}{p_-}\bar\eps_t,
        \frac{M_G(r)}{r}\psi(r)^h
    \right\}.
    \label{eq:sept-single-busy-min}
\end{equation}

For $s\in\{u+2,\ldots,t\}$, set $a=h-1=t-s$. Summing \eqref{eq:sept-single-busy-min} over these starting rounds gives
\begin{equation}
\begin{aligned}
    \sum_{s=u+2}^{t}
    \left|
        \E\bigl(
            Z_{s,t}\ind{\mathcal S_s\cap E_s}
        \bigr)
    \right|                                                     
    &\le
    \sum_{a=0}^{t-u-2}
    \min\left\{
        \frac{2(a+2)^2}{p_-}\bar\eps_t,
        \frac{M_G(r)}{r}\psi(r)^{a+1}
    \right\}                                                    
    \le
    \Phi_r(\bar\eps_t).
    \label{eq:term 1}
\end{aligned}
\end{equation}

By \Cref{lem:busy-period-confidence}, on $B_t^c$, every predictor used in a busy period beginning in a round $s\in\{u+2,\ldots,t\}$ satisfies $\mathcal E(q_s)\le\bar\eps_t$. Thus
\[
    \mathcal C_s\cap E_s^c
    \subseteq
    \mathcal S_s\cap E_s^c
    \subseteq B_t,
    \qquad
    u+2\le s\le t.
\]
Since all events $C_s$ are disjoint, $C_s\cap E_s^c$ ($u+2\le s\le t$) are all disjoint in the event $B_t$, so $\bigcup_{s=u+2}^t C_s\cap E_s^c\subset B_t$ holds. Using this fact and the fact that $Z_{s, t}=0$ in the event $\mathcal{S}_s\setminus \mathcal{C}_s$, we can derive the following inequality.
\begin{equation}
\begin{aligned}
    \sum_{s=u+2}^{t}
    \left|
        \E\bigl(
            Z_{s,t}\ind{\mathcal S_s\cap E_s^c}
        \bigr)
    \right|
    &=\sum_{s=u+2}^{t}
    \left|
        \E\bigl(
            Z_{s,t}\ind{\mathcal C_s\cap E_s^c}
        \bigr)
    \right|
    \le\sum_{s=u+2}^{t}
    \E\bigl(
        \left|Z_{s,t}\right|\ind{\mathcal C_s\cap E_s^c}
    \bigr) \\
    &\le\sum_{s=u+2}^{t}
    \E\bigl(
        W_{t+1}\ind{\mathcal C_s\cap E_s^c}
    \bigr)
    =\E\bigl(
        W_{t+1}\ind{\bigcup_{s=u+2}^t \mathcal C_s\cap E_s^c}
    \bigr) \\
    &\le
    \E\bigl(W_{t+1}\ind{B_t}\bigr)
    \label{eq:term 2}
\end{aligned}
\end{equation}
Moreover, $(rw)^2\le e^{rw}$ for every $w\ge0$, so \Cref{lem:workload} gives
\[
    \E W_{t+1}^2
    \le
    \frac{1}{r^2}\E e^{rW_{t+1}}
    \le
    \frac{K_r}{r^2}.
\]
By using the Cauchy--Schwarz inequality for $W_{t+1}$ and $\ind{B_t}$, and the inequality in \Cref{lem:busy-period-confidence}, we get the following upper bound for \Cref{eq:term 2}.
\[
    \E\bigl(W_{t+1}\ind{B_t}\bigr)
    \le
    \sqrt{\E W_{t+1}^2\,\P(B_t)}
    \le
    \frac{\sqrt{K_rb_t}}{r}
\]

Finally, for $C_s$ with $2\leq s\leq u+1$, by applying \Cref{lem:busy-period-tail-comparison} we can get $\E\bigl(W_{t+1}\ind{C_s}\bigr)\leq \frac{M_G(r)}{r}\psi(r)^h$ where $h=t-s+1$. Summing this inequality for $s=2, ..., u+1$, we get the following upper bound.
\begin{equation}
\begin{aligned}
\sum_{s=2}^{u+1} \left|\E\bigl(Z_{s,t}\ind{\mathcal S_s}\bigr)\right|
&=\sum_{s=2}^{u+1} \left|\E\bigl(Z_{s,t}\ind{\mathcal C_s}\bigr)\right|\le\sum_{s=2}^{u+1} \E\bigl(\left|Z_{s,t}\right|\ind{\mathcal C_s}\bigr)\le\sum_{s=2}^{u+1} \E\bigl(W_{t+1}\ind{\mathcal C_s}\bigr) \\
&\le
\frac{M_G(r)}{r}
\sum_{h=t-u}^{t-1}\psi(r)^h
\le
\frac{M_G(r)}{r(1-\psi(r))}
\psi(r)^{t-u}
\le
\frac{M_G(r)}{r(1-\psi(r))}
\psi(r)^u
\label{eq:term 3}
\end{aligned}
\end{equation}
Here, we used the fact $t-u\ge u$.

Combining the three preceding estimates \Cref{eq:term 1}, \Cref{eq:term 2}, \Cref{eq:term 3} with \eqref{eq:sept-busy-decomposition} gives
\begin{equation}
\begin{aligned}
    |J_t^{\rm alg}-J_t^{\SEPT}|
    &=\left|\sum_{s=2}^{t}
    \E\left(Z_{s,t}\ind{\mathcal S_s}\right)\right| \\
    &\le \sum_{s=2}^{u+1}
    \left|\E\left(Z_{s,t}\ind{\mathcal S_s}\right)\right|
    +\sum_{s=u+2}^{t}
    \left|\E\left(Z_{s,t}\ind{\mathcal S_s\cap E_s}\right)\right|
    +\sum_{s=u+2}^{t}
    \left|\E\left(Z_{s,t}\ind{\mathcal S_s\cap E_s^c}\right)\right| \\
    &\le
    \frac{M_G(r)}{r(1-\psi(r))}\psi(r)^u
    +
    \Phi_r(\bar\eps_t)
    +
    \frac{\sqrt{K_rb_t}}{r}
    \label{eq:anytime-decomposition}
\end{aligned}
\end{equation}
Therefore we can obtain \Cref{eq:anytime-bound}.

For fixed model and traffic parameters,
\[
    m_t=\Theta(\lambda t)
    \qquad\text{and}\qquad
    \bar\eps_t
    =
    \wtO\left(
        \sqrt{\frac{d}{\lambda t}}
    \right).
\]
The series calculation in \Cref{app:busy-period-sum-proof} therefore gives the stated rate. If $m_t<5$, both policies are work conserving, so \Cref{lem:workload} implies
\[
    0\le J_t^{\rm alg},J_t^{\SEPT}\le C_W(r),
\]
and hence
\[
    \mathcal T_t^{\SEPT}
    =
    |J_t^{\rm alg}-J_t^{\SEPT}|
    \le
    C_W(r).
\]
\end{proof}

\subsection{\texorpdfstring{Proof of \Cref{lem:busy-period-confidence}}{Proof of the busy-period prediction lemma}}

\begin{proof}
Suppose that a busy period begins in round $s$. Its first job arrived at the end of round $s-1$. The predictor used during this busy period was computed after service in round $s-1$ and before that arrival. Because the queue was empty immediately after service in round $s-1$, every job that arrived by the end of round $s-2$ had already departed, and $(X_i,Y_i)$ had been recorded for each such job. The predictor is therefore fitted to a complete prefix of the i.i.d. job-indexed sequence and is fixed before the context of the initial job is drawn at the end of round $s-1$.

Suppose that $u+2\le s\le t$. Since $u\le s-2$, the data used by the predictor contain all $M_u=\sum_{j=1}^u A_j$ jobs arriving through round $u$. Since $m_t=\lfloor\lambda u/2\rfloor$ and $\E M_u=\lambda u$, a multiplicative Chernoff bound with relative deviation $1/2$ gives
\[
    \P(M_u<m_t)
    \le
    \P\left(M_u<\frac{\lambda u}{2}\right)
    \le
    e^{-\lambda u/8}.
\]
Let $\Omega_t$ be the simultaneous prediction event supplied by \Cref{cor:simultaneous-prediction} with failure probability $t^{-4}$. Define $B_t=\{M_u<m_t\}\cup\Omega_t^c$. Then
\[
    \P(B_t)
    \le
    \P(\{M_u<m_t\})+\P(\Omega_t^c)
    \le
    e^{-\lambda u/8}+\frac{1}{t^4}
    =
    b_t.
\]

On $B_t^c$, consider a predictor used in a busy period beginning in a round $s\in\{u+2,\ldots,t\}$. If $N$ denotes the number of observations used to construct this predictor, then $m_t\le N\le s-2<t$ holds, and \Cref{cor:simultaneous-prediction} gives $\mathcal E(\widehat p_N)\le \eps_N\left(\frac{6}{\pi^2N^2t^4}\right)$. Using $m_t\le N\le t$, we obtain
\begin{align*}
    \mathcal E(\widehat p_N)
    &\le
    \min\left(
        1,
        \sqrt{
            \frac{
                8e^{\bar S}
                \left(
                    d\log(32\bar S N)
                    +
                    \log(\pi^2N^2t^4/6)
                \right)
                +1
            }{2N}
        }
    \right)                                                     \\
    &\le
    \min\left(
        1,
        \sqrt{
            \frac{
                8e^{\bar S}
                \left(
                    d\log(32\bar S t)
                    +
                    \log(\pi^2t^6/6)
                \right)
                +1
            }{2m_t}
        }
    \right)                                                     \\
    &=
    \bar\eps_t.
\end{align*}
This proves the claim.
\end{proof}

\subsection{\texorpdfstring{Proof of \Cref{lem:busy-period-tail-comparison}}{Proof of the busy-period tail lemma}}

\begin{proof}
Couple the two systems so that their arrival indicators, arriving contexts, and job-specific service times agree. Since both policies are work conserving, their workload processes coincide. Consequently, their busy periods begin and end in the same rounds.

Let $G$ be the service time of the job initiating the busy period in round $s$, and let $\widetilde W_h$ be the stopped workload associated with this busy period after $h$ service rounds. On $C_s$, $W_{t+1}=\widetilde W_h$ and $\tau_{\rm bp}>h$ hold. Therefore,
\begin{align*}
    \E\bigl(W_{t+1}\ind{C_s}\bigr)
    &=
    \P(\mathcal S_s)
    \E\left(
        \widetilde W_h
        \ind{\tau_{\rm bp}>h}
        \,\middle|\,
        \mathcal S_s
    \right).
\end{align*}

Conditional on $\mathcal S_s$ and $G$, the future workload increments have the same distribution as those in the stopped busy-period process. Using $w\le e^{rw}/r$ and \Cref{eq:busy-period-workload-tail}, we obtain
\begin{align*}
    \E\left(
        \widetilde W_h
        \ind{\tau_{\rm bp}>h}
        \,\middle|\,
        \mathcal S_s,G
    \right)
    \le
    \frac{1}{r}
    \E\left(
        e^{r\widetilde W_h}
        \ind{\tau_{\rm bp}>h}
        \,\middle|\,
        \mathcal S_s,G
    \right)
    \le
    \frac{e^{rG}}{r}\psi(r)^h.
\end{align*}

The event $\mathcal S_s$ is determined before the context of the initial job is observed. Hence the conditional distribution of $G$ given $\mathcal S_s$ is the usual mixture $G\mid X\sim\Geom(p(X))$, and
\[
    \E\bigl(e^{rG}\mid\mathcal S_s\bigr)
    =
    \E_X\left[
        \frac{p(X)e^r}
        {1-(1-p(X))e^r}
    \right]
    =
    M_G(r).
\]
It follows that
\begin{align*}
    \E\bigl(W_{t+1}\ind{C_s}\bigr)
    &\le
    \P(\mathcal S_s)
    \frac{M_G(r)}{r}\psi(r)^h                                  \\
    &\le
    \frac{M_G(r)}{r}\psi(r)^h.
\end{align*}

Finally, on $C_s$, both terminal queue lengths lie in $[0,W_{t+1}]$. Therefore,
\[
    |Z_{s,t}|
    =
    |Q_{t+1}^{(1)}-Q_{t+1}^{(2)}|
    \le
    W_{t+1}.
\]
This proves \Cref{eq:busy-period-tail-comparison}.
\end{proof}

\subsection{Sensitivity of SEPT to departure probabilities}
\label{app:sept-lipschitz}

\begin{lemma}\label{lem:matching-sept-lipschitz}
Fix $N$ jobs, their arrival rounds, a horizon, and a deterministic tie-breaking rule. For $v\in[p_-,p_+]^N$, let $\mathcal V(v)$ be the expected terminal queue length when job $i$ has service time $\Geom(v_i)$ and the jobs are served in decreasing order of $v_i$. Then
\begin{equation}
    |\mathcal V(v)-\mathcal V(w)|
    \le
    \frac{N}{p_-}\|v-w\|_1.
    \label{eq:matching-sept-lipschitz}
\end{equation}
The same conclusion holds if the process is stopped when the queue first becomes empty.
\end{lemma}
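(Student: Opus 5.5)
The plan is to argue by coordinatewise interpolation combined with the geometric coupling of \Cref{lem:geometric-coupling}, mirroring the argument behind \Cref{eq:same-context-transfer}.

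\emph{Step 1: reduce to one coordinate.} Construct the sequence $v=v^{(0)},v^{(1)},\ldots,v^{(N)}=w$, where $v^{(k)}$ agrees with $w$ in its first $k$ coordinates and with $v$ in the rest. The triangle inequality then gives
\[
|\mathcal V(v)-\mathcal V(w)|\le\sum_{k=1}^N|\mathcal V(v^{(k-1)})-\mathcal V(v^{(k)})|,
\]
so the claim follows once we show $|\mathcal V(v')-\mathcal V(w')|\le \frac{N}{p_-}|v_i-w_i|$ whenever $v',w'$ differ only in coordinate $i$.

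\emph{Step 2: the one-coordinate bound.} Changing $v_i$ alone has two effects: it changes job $i$'s service-time law, and it may move job $i$ in the priority order. To handle these separately, introduce an intermediate vector that keeps job $i$'s service law at $\Geom(v_i)$ but orders the jobs as under $w'$.

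\emph{Service-law effect.} With the ordering held fixed, couple $\Geom(v_i)$ and $\Geom(w_i)$ as in \Cref{lem:geometric-coupling}, and give every other job the same service time in both systems. The two systems then have identical paths unless job $i$'s service times differ, which happens with probability at most $|v_i-w_i|/p_-$. On that event the terminal queue lengths both lie in $[0,N]$, so they differ by at most $N$. This contributes at most $N|v_i-w_i|/p_-$.

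\emph{Ordering effect.} This is the main obstacle: moving job $i$ past another job whose rate lies between $v_i$ and $w_i$ alters the service order even when all service times are coupled. I would first try to absorb it into a single global coupling instead of interpolating coordinate by coordinate. Couple every job's service time across the $v$- and $w$-systems via \Cref{lem:geometric-coupling}, using a uniform sequence indexed by job. If no job has differing service times, ask whether the order change still matters. It can, but any swap of two jobs $i,j$ forces
\[
|v_i-v_j|\le|v_i-w_i|+|v_j-w_j|,
\]
and a weight-$|v_i-w_i|$ charge would bound the resulting discrepancy. This is not clean, so the fallback is to exploit the fact that the statement's $N/p_-$ factor leaves room to spare. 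Write a swap of adjacent-priority jobs $i,j$ as an exchange; under the coupling, the exchange changes the terminal queue only by an amount expressible through the difference of their completion laws. By the geometric coupling again, this is at most $N(|v_i-w_i|+|v_j-w_j|)/p_-$ in total after summing the telescoping swaps.

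\emph{Step 3: stopped version.} The stopped version follows identically. Stopping when the queue first empties is a functional of the coupled path, and on the agreement event the two stopped paths also agree, while on the disagreement event both stopped queue lengths still lie in $[0,N]$.
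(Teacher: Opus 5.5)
Your Step 1 (the coordinatewise path works as well as any other path of $\ell_1$ length $\|v-w\|_1$), your fixed-order coupling for the service-law effect, and the agreement-event argument in Step 3 are sound. The gap is the ordering effect in Step 2, which is the nontrivial part of the lemma, and your treatment of it does not prove anything.

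You concede that under the pathwise coupling an order change does alter the queue. Your replacement claim is that an adjacent exchange changes the terminal queue ``only by an amount expressible through the difference of their completion laws,'' totalling $N(|v_i-w_i|+|v_j-w_j|)/p_-$. This is asserted rather than proved. The natural ways to make it precise also lose the constant:
\begin{itemize}
\item Charging each flipped pair $\{i,j\}$ by $|v_i-w_i|+|v_j-w_j|$ charges a coordinate once for every job it passes, which can cost an extra factor of $N-1$.
\item In your one-coordinate step, moving job $i$ past the jobs $j_1,\dots,j_k$ whose rates lie between $v_i$ and $w_i$, at a cost proportional to $|v_i-v_{j_l}|$ per swap, can total nearly $k|v_i-w_i|$.
\end{itemize}
Finally, your decomposition is additive. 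The service-law term alone already uses the full one-coordinate budget $N|v_i-w_i|/p_-$, so any positive bound on the ordering term gives a constant strictly larger than the stated $N/p_-$. The remark that the constant ``leaves room to spare'' is not correct.

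The missing idea is that an order change costs nothing at the moment it occurs. If $v_i=v_j$, the two jobs have i.i.d.\ $\Geom(v_i)$ service times. Whenever both are waiting, exchanging their labels together with their service times leaves the law of the queue length unchanged; when at most one is present, their relative priority is irrelevant. Hence either relative order gives the same expected terminal queue length, and the same holds for any permutation within a tie class.

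Write $\mathcal V_\sigma(x)$ for the value under a fixed priority order $\sigma$. Then $\mathcal V(x)=\mathcal V_\sigma(x)$ on an entire closed piece of a path from $v$ to $w$, including its tie endpoints, whenever $\sigma$ is the strict order in the interior of that piece. So do not split service-law and ordering effects at all. Instead:
\begin{itemize}
\item cut the path at the finitely many tie points;
\item apply your fixed-order coupling on each piece;
\item sum over the pieces.
\end{itemize}
The crossings contribute nothing, and the $\ell_1$ lengths of the pieces add up to $\|v-w\|_1$, giving exactly $\frac{N}{p_-}\|v-w\|_1$. This is the paper's argument, carried out along the straight segment. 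The same exchangeability must also be checked for the stopped process, which your Step 3 does not address.
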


\begin{proof}
First suppose that the coordinates of $v$ and $w$ induce the same strict order. The two systems then use the same priority rule. Couple their service times coordinatewise as in \Cref{lem:geometric-coupling}. A union bound gives
\[
    \P(\text{some pair of service times differs})
    \le
    \frac{1}{p_-}
    \sum_{i=1}^N|v_i-w_i|.
\]
If every pair of service times agrees, the two schedules and their terminal queues agree. Since each terminal queue contains at most $N$ jobs, multiplying the disagreement probability by $N$ proves \Cref{eq:matching-sept-lipschitz} whenever $v$ and $w$ induce the same strict order.

Next, suppose that $v_i=v_j$. Jobs $i$ and $j$ then have identical service-time distributions. If at most one of them is waiting, their relative priority is irrelevant. If both are waiting, exchanging their labels together with their i.i.d. geometric service times leaves the law of the queue unchanged. Repeating this argument after the first departure, or equivalently proceeding by induction on the number of remaining rounds, shows that either relative order gives the same expected terminal queue length. Thus $\mathcal V$ is continuous when two coordinates cross.

For vectors $v$ and $w$ with distinct coordinates, divide the line segment joining them at the finitely many points at which two coordinates become equal. On each resulting subsegment, the coordinate order is fixed, so the preceding coupling argument applies. Summing the resulting bounds proves \Cref{eq:matching-sept-lipschitz}, since the sum of the $\ell_1$-lengths of the subsegments equals $\|v-w\|_1$. General $v,w$ follow by continuity. Stopping the process when the queue first becomes empty does not affect the label-exchange argument, so the same proof applies to the stopped process.
\end{proof}

\subsection{\texorpdfstring{Proof of \Cref{lem:busy-period-perturbation}}{Proof of the busy-period comparison lemma}}
\label{app:busy-period-perturbation}

\begin{proof}
Condition on the context of the initial job and on all subsequent arrival indicators and contexts in the $h$-round comparison. Stop the process as soon as service leaves no unfinished job. Let $N$ be the number of jobs described by these variables. Then $N\le h+1$, and these variables are independent of $q$, which was fixed before the context of the initial job was drawn. Write
\[
    p_i=p(X_i),
    \qquad
    q_i=q(X_i).
\]

Let $J_p(q\text{-SEPT})$ be the conditional expected terminal queue length when job $i$ has departure probability $p_i$ and priority $q_i$. Compare this system with the intermediate system in which job $i$ has both departure probability and priority $q_i$. We have
\begin{align}
    J_p(q\text{-SEPT})-J_p(p\text{-SEPT})
    ={}&
    \left(
        J_p(q\text{-SEPT})-J_q(q\text{-SEPT})
    \right)
    +
    \left(
        \mathcal V(q)-\mathcal V(p)
    \right).
    \label{eq:sept-insertion}
\end{align}

The priority order is fixed in the first difference. Coordinatewise geometric coupling therefore bounds its absolute value by
\[
    \frac{N}{p_-}
    \sum_{i=1}^N|p_i-q_i|.
\]
The second difference is bounded by the same quantity using \Cref{lem:matching-sept-lipschitz}. Hence
\[
    |J_p(q\text{-SEPT})-J_p(p\text{-SEPT})|
    \le
    \frac{2N}{p_-}
    \sum_{i=1}^N|p_i-q_i|.
\]

Averaging over the arrival variables and contexts gives
\[
    \E\sum_{i=1}^N|p_i-q_i|
    \le
    (h+1)\mathcal E(q).
\]
Since $N\le h+1$, we can show the desired result.
\end{proof}

\subsection{Asymptotic dependence of $\Phi_r(\eps)$ on $\eps$}
\label{app:busy-period-sum-proof}

For $\eps\in(0,1]$, let
\[
    L_\eps
    =
    \left\lceil
        \frac{\log(1/\eps)}
        {\log(1/\psi(r))}
    \right\rceil.
\]
By definition, $\psi(r)^{L_\eps}\le\eps$. Bounding the terms indexed by $a\in\{0,\ldots,L_\eps\}$ by their statistical bound and the remaining terms by their busy-period tail bound gives
\begin{align*}
    \Phi_r(\eps)
    &\le
    \frac{2\eps}{p_-}
    \sum_{a=0}^{L_\eps}(a+2)^2
    +
    \frac{M_G(r)}{r}
    \sum_{a=L_\eps+1}^{\infty}\psi(r)^{a+1}                    \\
    &<
    \frac{
        2\eps
        (L_\eps+2)(L_\eps+3)(2L_\eps+5)
    }{6p_-}
    +
    \frac{M_G(r)}
    {r(1-\psi(r))}
    \psi(r)^{L_\eps+2}                                         \\
    &\le
    \frac{2\eps(L_\eps+3)^3}{3p_-}
    +
    \frac{M_G(r)}
    {r(1-\psi(r))}
    \eps.
\end{align*}
Since
\[
    L_\eps
    \le
    1+
    \frac{\log(1/\eps)}
    {\log(1/\psi(r))},
\]
we conclude that
\[
    \Phi_r(\eps)
    =
    O\left(
        \eps\bigl(1+\log^3(1/\eps)\bigr)
    \right),
\]
with the dependence stated after \Cref{eq:busy-period-sum}.

\subsection{Average tracking and a uniformly random horizon}

From \Cref{thm:anytime} and $\sum_{t=1}^Tt^{-1/2}\le2\sqrt T$,
\[
    \frac{1}{T}
    \sum_{t=1}^T
    \mathcal T_t^{\SEPT}
    =
    \wtO\left(
        \sqrt{\frac{d}{\lambda T}}
    \right).
\]
If $\tau$ is independent of the queueing process and uniformly distributed on $[T]$, then conditioning on $\tau$ gives
\[
    \E Q_{\tau+1}
    =
    \frac{1}{T}
    \sum_{t=1}^T
    \E Q_{t+1}.
\]
Thus, the same rate compares the learner and SEPT when the evaluation horizon is chosen independently and uniformly from $[T]$.

\subsection*{Use of Generative-AI Tools}

The authors formulated the research problem and used GPT-5.6 Sol to explore possible solutions alongside their own investigation. The model proposed an alternative approach and generated initial theoretical results and proof drafts. The authors checked and revised these drafts, correcting errors and filling gaps in the arguments. In particular, the authors corrected and refined the proofs of the lower bound in Theorem 2 and the impossibility of an anytime policy with vanishing regret under an unknown horizon in Theorem 3. GPT-5.6 Sol was also used for language editing and manuscript polishing. The authors take responsibility for the correctness and presentation of the final work.

\end{document}